\algnewcommand{\algorithmicinput}{\textbf{Input:}}
\algnewcommand{\Input}{\item[\algorithmicinput]}
\algnewcommand{\algorithmicoutput}{\textbf{Input:}}
\algnewcommand{\Output}{\item[\algorithmicoutput]}
\algnewcommand{\Break}{\textbf{break}}
\crefname{equation}{eq.}{eqs.}
\Crefname{equation}{Eq.}{Eqs.}
\crefname{algorithm}{Algorithm}{Algorithms}
\Crefname{algorithm}{Algorithm}{Algorithms}
\crefname{step}{Step}{Steps}
\Crefname{step}{Step}{Steps}
\crefname{theorem}{Theorem}{Theorems}
\Crefname{theorem}{Theorem}{Theorems}
\crefname{lemma}{Lemma}{Lemmas}
\Crefname{lemma}{Lemma}{Lemmas}
\crefname{proposition}{Proposition}{Propositions}
\Crefname{proposition}{Proposition}{Propositions}
\crefname{assumption}{Assumption}{Assumptions}
\Crefname{assumption}{Assumption}{Assumptions}
\crefname{definition}{Definition}{Definitions}
\Crefname{definition}{Definition}{Definitions}
\crefname{figure}{Figure}{Figures}
\Crefname{figure}{Figure}{Figures}
\crefname{table}{Table}{Tables}
\Crefname{table}{Table}{Tables}
\crefname{section}{Section}{Sections}
\Crefname{section}{Section}{Sections}
\crefname{appendix}{Appendix}{Appendices}
\Crefname{appendix}{Appendix}{Appendices}
\tikzstyle{tikzfig}=[baseline=-0.25em,scale=0.5]
\tikzstyle{none}=[inner sep=0mm]
\tikzstyle{rectangle}=[fill=white, draw=black, shape=rectangle]
\tikzstyle{circle}=[fill=white, draw=black, shape=circle]
\tikzstyle{vertex}=[fill=black, draw=none, shape=circle]
\tikzstyle{textbox}=[fill=white, draw=none, shape=rectangle]
\tikzstyle{line}=[-, fill=none]
\tikzstyle{rightarrow}=[->]
\tikzstyle{leftarrow}=[fill=none, <-]
\newcommand{\Bcal}{\mathcal{B}}
\newcommand{\Dcal}{\mathcal{D}}
\newcommand{\Ncal}{\mathcal{N}}
\newcommand{\Pcal}{\mathcal{P}}
\newcommand{\Etl}{\tilde{E}}
\newcommand{\Vtl}{\tilde{V}}
\newcommand{\phat}{\hat{p}}
\newcommand{\qhat}{\hat{q}}
\newcommand{\gtl}{\tilde{g}}
\newcommand{\wtl}{\tilde{w}}
\newcommand{\R}{\mathbb{R}}
\newcommand{\N}{\mathbb{N}}
\newcommand{\Z}{\mathbb{Z}}
\newcommand{\ones}{\bm{1}}
\newcommand{\zeros}{\bm{0}}
\newcommand{\Ord}{\mathrm{O}}
\newtheorem{theorem}{Theorem}
\newtheorem{lemma}{Lemma}
\newtheorem{proposition}{Proposition}
\newtheorem{assumption}{Assumption}
\theoremstyle{definition}
\newtheorem{remark}{Remark}
\newtheorem{example}{Example}
\DeclareMathOperator*{\minimize}{minimize}
\DeclareMathOperator*{\maximize}{maximize}
\DeclareMathOperator{\subto}{subject\ to}
\DeclareMathOperator*{\argmax}{argmax}
\DeclareMathOperator*{\argmin}{argmin}
\DeclareMathOperator{\sign}{sign}
\DeclareMathOperator{\conv}{conv}
\DeclareMathOperator{\dom}{dom}
\DeclareMathOperator{\E}{\mathbb{E}}
\DeclarePairedDelimiter\abs{\lvert}{\rvert}
\DeclarePairedDelimiter\norm{\lVert}{\rVert}
\DeclarePairedDelimiter\floor{\lfloor}{\rfloor}
\DeclarePairedDelimiter\ceil{\lceil}{\rceil}
\DeclarePairedDelimiter\round{\lfloor}{\rceil}
\DeclarePairedDelimiter\iprod{\langle}{\rangle}
\let\set\relax
\DeclarePairedDelimiter\set{\{}{\}}
\let\Set\relax
\DeclarePairedDelimiterX\Set[2]{\{}{\}}{\mspace{2mu}{#1}\;\delimsize|\;{#2}\mspace{2mu}}
\DeclarePairedDelimiter\brc{[}{]}
\DeclarePairedDelimiterX\Brc[2]{[}{]}{\mspace{2mu}{#1}\;\delimsize|\;{#2}\mspace{2mu}}
\DeclarePairedDelimiter\prn{(}{)}
\DeclarePairedDelimiterX\Prn[2]{(}{)}{\mspace{2mu}{#1}\;\delimsize|\;{#2}\mspace{2mu}}
\newif\iffigure
\title{Rethinking Warm-Starts with Predictions: \\Learning Predictions Close to Sets of Optimal Solutions for \\Faster $\text{L}$-/$\text{L}^\natural$-Convex Function Minimization}
\author{%
Shinsaku Sakaue\\
The University of Tokyo\\
Tokyo, Japan\\
\href{mailto:sakaue@mist.i.u-tokyo.ac.jp}{sakaue@mist.i.u-tokyo.ac.jp}
\and
Taihei Oki\\
The University of Tokyo\\
Tokyo, Japan\\
\href{mailto:oki@mist.i.u-tokyo.ac.jp}{oki@mist.i.u-tokyo.ac.jp}
}
\date{}
\begin{document}

\maketitle

\begin{abstract}
	An emerging line of work has shown that machine-learned predictions are useful to warm-start algorithms for discrete optimization problems, such as bipartite matching. Previous studies have shown time complexity bounds proportional to some distance between a prediction and an optimal solution, which we can approximately minimize by learning predictions from past optimal solutions. However, such guarantees may not be meaningful when multiple optimal solutions exist. Indeed, the dual problem of bipartite matching and, more generally, \emph{$\text{L}$-/$\text{L}^\natural$-convex function minimization} have \emph{arbitrarily many} optimal solutions, making such prediction-dependent bounds arbitrarily large. To resolve this theoretically critical issue, we present a new warm-start-with-prediction framework for $\text{L}$-/$\text{L}^\natural$-convex function minimization. Our framework offers time complexity bounds proportional to the distance between a prediction and the \emph{set of all optimal solutions}. The main technical difficulty lies in learning predictions that are provably close to sets of all optimal solutions, for which we present an online-gradient-descent-based method. We thus give the first polynomial-time learnability of predictions that can provably warm-start algorithms regardless of multiple optimal solutions.
\end{abstract}

\newcommand{\Ln}{\texorpdfstring{L${}^\natural$\xspace}{L-natural}}
\newcommand{\Tprj}{{\mbox{${T}_\mathrm{prj}$}}}
\newcommand{\Tloc}{{\mbox{${T}_\mathrm{loc}$}}}
\newcommand{\Tineq}{{\mbox{${T}_\mathrm{ineq}$}}}

\newcommand{\ind}[1]{\mathds{1}_{#1}}

\newcommand{\pcirc}{p^\circ}
\newcommand{\Mbf}{\mathbf{M}}
\newcommand{\restr}[2]{#1 \mathbin{|} #2}
\newcommand{\contract}[2]{#1 \mathbin{/} #2}
\newcommand{\ycirc}{y^\circ}

\newcommand{\mubar}{\bar{\mu}}
\newcommand{\stpath}{P}

\section{Introduction}\label{sec:introduction}
Algorithms with predictions \citep{Mitzenmacher2021-bq}---improving algorithms performance with predictions learned from data---is a rapidly growing research field.
Seminal work by \citet{Dinitz2021-sd} has initiated the study of using predictions to warm-start discrete optimization algorithms.
A brief description of their result for weighted perfect bipartite matching problems is as follows.
Consider finding a maximum-weight perfect matching in a bipartite graph $(V, E)$ with an equal-sized bipartition $V = L\cup R$ and edge weights $w\in\Z^E$.\footnote{While the minimum-weight setting was originally studied, we describe the maximum-weight setting as in \citep{Sakaue2022-jr}.}
The dual of this problem is written as a linear program (LP) with variables $p = (s, t) \in \R^V$:
\begin{align}\label{prob:matching-dual}
  \begin{aligned}
	& \minimize_{s \in \R^L, t \in \R^R} \quad
	\sum_{i \in L} s_i - \sum_{j \in R} t_j \\
	& \subto \quad
	s_i - t_j \ge w_{ij} \quad
	\forall
	ij \in E.
  \end{aligned}
\end{align}
The authors showed that given a prediction $\phat \in \R^V$ of \emph{some} optimal dual solution $p^*$, we can efficiently convert $\phat$ into an initial feasible solution $\pcirc$, and the Hungarian method warm-started by $\pcirc$ runs in $\Ord(|E|\sqrt{|V|}\norm{p^* - \phat}_1)$ time, whereas the worst-case running time is $\Ord(|E||V|)$.
Moreover, given about $\Omega(|V|^3)$ optimal solutions $p^*$ drawn i.i.d.\ from a fixed distribution $\Dcal$, we can learn $\phat$ that approximately minimizes $\E_{p^*\sim \Dcal}\brc{\norm{p^* - \phat}_1}$ via empirical risk minimization.
In a nutshell, learning predictions from past optimal solutions can provably accelerate the Hungarian method.

The above argument, however, has a subtle but critical pitfall: there \emph{always} exist \emph{arbitrarily many} optimal dual solutions.
To see this, let $p^\prime$ be an optimal solution to~\eqref{prob:matching-dual}.
Then, adding any vector in the all-one direction to $p^\prime$ does not change the objective function value and the left-hand sides of the constraints; therefore, $p^\prime + \zeta \ones$ is also optimal for all $\zeta \in \R$.
Hence, the $\Ord(|E|\sqrt{|V|}\norm{p^* - \phat}_1)$-time bound requires us to select one optimal solution $p^*$, and the bound can be arbitrarily large if selected $p^*$ is far away from $\phat$.
One may think such a concern is unnecessary since the Hungarian method warm-started by $\phat$ would return $p^*$ close to $\phat$.
This idea, however, makes $p^*$ selected depending on $\phat$, and no existing results on the learnability of predictions $\phat$ can deal with such dependence.
We further detail this issue in \cref{asec:remark}.

The cause of this troublesome situation is that an optimal solution $p^*$ is not unique for a given bipartite matching instance.
By contrast, \emph{the set of all optimal solutions} is unique.
Therefore, the distance between $\phat$ and the set of all optimal solutions (or equivalently, the minimum distance between $\phat$ and an optimal solution) is a well-defined measure to quantify the speed-up gained by using prediction $\phat$.
Moreover, this idea can strengthen distance-dependent time complexity bounds by taking the minimum among all optimal solutions.

\begin{table}[tb]
	\caption{Improved time complexity bounds for the problems studied in \citep{Sakaue2022-jr} (see also Table 1 therein).
  For weighted perfect bipartite matching and discrete energy minimization, $n$ and $m$ are the sizes of vertex and edge sets, respectively.
  For weighted matroid intersection, $r$ is the rank of matroids, $n$ is the ground-set size, and $\tau$ is the running time of independence oracles. }\label{table:results}
	\vskip 0.15in
	\centering
	\begin{tabular}{cc} \toprule
	Problem  & Time complexity  \\ \midrule
	Weighted perfect bipartite matching  & $\Ord(m\sqrt{n} \cdot \mubar(\phat; g))$  \\
	Weighted matroid intersection  & $\Ord(\tau n r^{1.5} \cdot \mubar(\phat; g))$  \\
	Discrete energy minimization & $\Ord(mn^2 \cdot \mubar(\phat; g))$  \\
	\bottomrule
	\end{tabular}
  \vskip -0.1in
\end{table}

\subsection{Our Contribution}\label{subsec:our-result}
We present a new framework with time complexity bounds proportional to the distance between prediction $\phat$ and the set of all optimal solutions.
Building on a recent improvement \citep{Sakaue2022-jr} of \citep{Dinitz2021-sd}, we develop our framework for \emph{L-/\Ln-convex function minimization}, a broad class of discrete optimization problems, such as the weighted perfect bipartite matching, weighted matroid intersection, and discrete energy minimization.
The pitfall mentioned above also exists in L-/\Ln-convex minimization (see \cref{rem:multiple-opt}) and has remained open in the prior work.

We here give some informal definitions for convenience (see \cref{section:preliminaries} for details).
Let $g$ be an L-/\Ln-convex function to be minimized, which represents both an objective function and constraints, taking $+\infty$ if infeasible.
We quantify the distance between prediction $\phat$ and the set, $\conv(\argmin g)$, of all optimal solutions with the \emph{$\ell^\pm_\infty$-norm}; let $\mubar(\phat; g)$ denote this distance.
Our high-level idea is to use $\mubar(\phat; g)$ instead of any distance defined with some fixed optimal $p^*$.
Although the idea is simple, it involves two unprecedented challenges:
(i) to show that algorithms warm-started with $\phat$ run in time proportional to $\mubar(\phat; g)$ and
(ii) to learn $\phat$ that approximately minimizes $\E_g[\mubar(\phat; g)]$.
We describe how to achieve them.

\Cref{sec:main-framework} shows that an L-/\Ln-convex minimization method warm-started with $\phat$ enjoys a time complexity bound proportional to $\mubar(\phat; g)$.
As with \citep{Sakaue2022-jr}, we employ the steepest descent method for solving L-/\Ln-convex minimization, and we additionally utilize a fact that it converges to an optimal solution closest to an initial feasible solution.
Our analysis applies to all the problems studied in \citep{Sakaue2022-jr} and improves their time complexity bounds, which are proportional to the $\ell_\infty$-distance, $\norm{p^* - \phat}_\infty$, between $\phat$ and some fixed optimal $p^*$.
\cref{table:results} summarizes our improved time complexity bounds, and \Cref{fig:mu_is_smaller} illustrates how our idea improves their previous bounds.

\begin{figure}
	\centering
  \includegraphics[width=.48\linewidth]{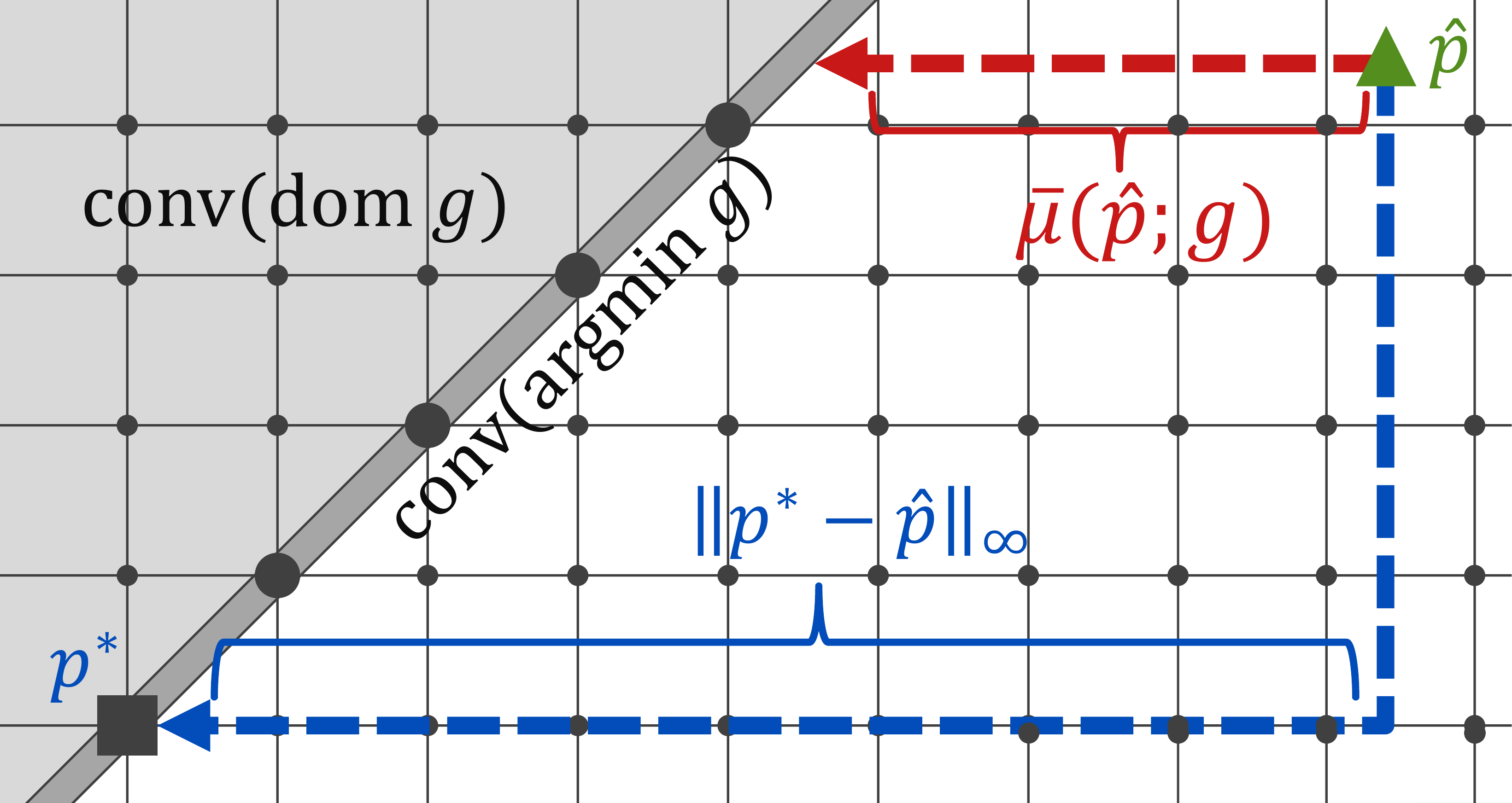}
	\caption{
		Comparison of our result with $\mubar(\phat; g)$ and the previous result with $\norm{p^* - \phat}_\infty$ \citep{Sakaue2022-jr}.
    Imagine L-convex minimization in $\Z^2$.
    Let $\phat \in \R^2$ be a given prediction, $g$ an L-convex function, and $p^*$ an optimal solution selected for $g$.
    The gray area, $\conv(\dom g)$, is (the convex hull of) the feasible region, and the darker area, $\conv(\argmin g)$, is the set of all optimal solutions.
    As~discussed in \cref{sec:introduction}, $\conv(\argmin g)$ has freedom in the all-one direction;
    hence, $\norm{p^* - \phat}_\infty$ can be arbitrarily large if $p^*$ is far from $\phat$.
    By contrast, $\mubar(\phat; g)$ uniquely represents the minimum $\ell^\pm_\infty$-distance between $\phat$ and $p^* \in \conv(\argmin g)$ closest to $\phat$.
	}
  \label{fig:mu_is_smaller}
\end{figure}

\Cref{sec:learning} presents how to learn $\phat$ that approximately minimizes $\E_g[\mubar(\phat; g)]$.
Similar to \citep{Khodak2022-sf,Sakaue2022-jr}, we prove a regret bound of the online gradient descent method (OGD) for learning $\phat$ and obtain a sample complexity bound via online-to-batch conversion.
Our contribution is to obtain those bounds for $\mubar(\phat; g)$, not for any distance between $\phat$ and fixed optimal $p^*$.
The main difficulty lies in computing subgradients of $\mubar(\cdot; g)$ used in OGD, for which we use a connection between $\mubar(\cdot; g)$ and a shortest path problem and Danskin's theorem (see \cref{subsec:subgradient}).
Also, computing a subgradient requires an inequality system that represents the set, $\conv(\argmin g)$, of all optimal solutions, for which we give polynomial-time methods (see \cref{sec:minimizer-set}).
We thus obtain the first polynomial-time learnability of predictions that can provably warm-start algorithms regardless of multiple optimal solutions.
Furthermore, our regret bound is tight up to constant factors, as shown in \cref{asec:lower-bound}.

\paragraph{Remarks and Limitations.}
Since we focus on theoretically refining prediction-dependent time complexity bounds, the practical impact would be somewhat limited; still, \cref{asec:experiment} presents some promising empirical results.
Also, we do not discuss worst-case bounds since we can bound the worst-case runtime by executing standard algorithms with worst-case guarantees in parallel, as in \citep[Section~6]{Sakaue2022-jr}.
We emphasize that our motivation is to warm-start simple algorithms with predictions, while theoretically fast algorithms, which are often hard to implement and slow in practice, sometimes enjoy better time complexity bounds.

\subsection{Related Work}\label{subsec:related-work}
A flurry of recent work has been devoted to going beyond the worst-case analysis of algorithms using predictions.
While improving competitive ratios of online algorithms has occupied a central place \citep{Purohit2018-yx,Bamas2020-rj,Lykouris2021-xn,Azar2022-rh}, the idea has gained increasing attention in various areas, including algorithmic game theory \citep{Agrawal2022-cb} and data structures \citep{Boffa2022-sl}.
A comprehensive list of papers in this field is provided by \citet{Lindermayr_undated-mr}.

Besides \citep{Sakaue2022-jr}, \citet{Chen2022-li} have improved the prediction-dependent time complexity bound of \citep{Dinitz2021-sd} and presented general results for warm-starting various graph algorithms and learning predictions.
\citet{Polak2022-je} have used predictions to warm-start a maximum-flow algorithm.
Although those existing studies have provided time complexity bounds depending on some distance, $\norm{p^* -\phat}$, between an optimal solution $p^*$ and a prediction $\phat$, the non-uniqueness of $p^*$, despite its prevalence, has not been well discussed---$p^*$ has been (implicitly) assumed to be unique.\footnote{In \citep{Polak2022-je}, their bound is said to hold for every optimal solution, but its non-uniqueness is not correctly handled when learning predictions. See \cref{asubsec:polak} for details.}
Researchers have also used predictions to accelerate algorithms for support estimation \citep{Eden2021-wa}, the shortest path problem \citep{Feijen2021-mi}, generalized sorting \citep{Lu2021-cn}, nearest neighbor search \citep{Andoni2022-iy}, and clustering \citep{Ergun2022-gq}, while their time complexity analyses are different from those of warm-starts with predictions.

The L-/\Ln-convexity is a fundamental notion in \emph{discrete convex analysis} \citep{Murota2003-bq}, a discrete analog of convex analysis.
It enables us to see various discrete optimization algorithms as the steepest descent method.
This viewpoint offers a geometric understanding of warm-starts with predictions; that is, a prediction closer to an optimum is naturally better since an algorithm iteratively approaches an optimum.
Although \citep{Sakaue2022-jr} is also based on the steepest descent method, they did not utilize a notable property that it converges to an optimal solution closest to an initial point (see \cref{proposition:convergence-iterations}), which is a key to obtaining our result.


\section{Preliminaries}\label{section:preliminaries}
Let $\floor{\cdot}$, $\ceil{\cdot}$, and $\round{\cdot}$ be the element-wise ceiling, floor, and rounding, respectively, where $\round*{\cdot}$ rounds down $0.5$ fractional parts.
For $n \in \N$, let $V = \set*{1,2,\dots,n}$ be a finite ground set of size $n$.
Let $\zeros, \ones \in \R^V$ denote the all-zero and all-one vectors, respectively.
For $S\subseteq \R^V$, let $\conv(S) \subseteq \R^V$ be the convex hull of $S$ and $\delta_S$ the indicator function of $S$, i.e., $\delta_S(p)=0$ if $p \in S$ and $+\infty$ otherwise.

For a function $g:\Z^V \to \R\cup\set{+\infty}$, we define its \textit{effective domain} by $\dom g = \Set*{p \in \Z^V}{g(p) < +\infty}$, which represents the feasible region of a minimization problem of the form $\min_{p\in\Z^V} g(p)$.
We say $g$ is \textit{proper} if $\dom g \neq \emptyset$.

\subsection{L-/\Ln-Convex Functions and Sets}\label{subsec:l-convexity}
We overview the properties of L-/\Ln-convex functions and sets.
We refer the reader to \citep{Murota2003-bq} for more details.

Let $g:\Z^V \to \R\cup\set{+\infty}$ be a proper function.
We say $g$ is \textit{L-convex} if $g(p) + g(q) \ge g(p \vee q) + g(p \wedge q)$ for all $p, q \in \Z^V$, where $\vee$ ($\wedge$) is the element-wise maximum (minimum), and there exists $r \in \R$ such that $g(p + \ones) = g(p) + r$ for all $p \in \Z^V$.
Also, $g$ is \textit{\Ln-convex} if $\gtl(p_0, p) \coloneqq g(p - p_0\ones)$ is L-convex on $\Z\times\Z^V$; this is equivalent to $g(p) + g(q) \ge g\prn*{\ceil*{\frac{p+q}{2}}} + g\prn*{\floor*{\frac{p+q}{2}}}$ for all $p, q \in \Z^V$, analogous to the standard convexity of functions on $\R^V$.
Since L-convexity on $\Z\times\Z^V$ and \Ln-convexity on $\Z^V$ are equivalent, we may use whichever is convenient.
If $g_1$ and $g_2$ are L-/\Ln-convex on $\Z^V$ and $g_1 + g_1$ is proper, $g_1 + g_1$ is L-/\Ln-convex.

The L-/\Ln-convex function minimization, $\min_{p \in \Z^V} g(p)$, is known to contain a wide variety of problems, e.g., the dual of weighted bipartite matching, dual of weighted matroid intersection, and discrete energy minimization, which generalizes minimum-cost flow (see \citep[Chapter~9]{Murota2003-bq}).
In what follows, we make the following basic assumption.

\begin{assumption}\label{assump:non-empty-argmin}
  L-/\Ln-convex functions $g$ always have at least one minimizer, i.e., $\argmin g \neq \emptyset$.
\end{assumption}

A non-empty set $S \subseteq \Z^V$ is \textit{L-/\Ln-convex} if its indicator function $\delta_S: \Z^V \to \set*{0, +\infty}$ is L-/\Ln-convex.
Conversely, if $g:\Z^V \to \R\cup\set*{+\infty}$ is L-/\Ln-convex, $\dom g \subseteq \Z^V$ is an L-/\Ln-convex set; furthermore, the set of all minimizers, $\argmin g \subseteq \Z^V$, is also L-/\Ln-convex \citep[Theorem 7.17]{Murota2003-bq}.
We use this fact in \cref{subsec:subgradient}.
L-/\Ln-convex sets enjoy useful inequality-system representations as follows.
\begin{proposition}[{\citet[Sections~5.3 and~5.5]{Murota2003-bq}}]\label{proposition:l-convex-set}
  For a non-empty set $S \subseteq \Z^V$, the following two are equivalent:
  (i) $S$ is an \Ln-convex set and (ii) $S$ is written as
  \begin{align}\label{eq:inequality-system-z}
    \Set*{p \in \Z^V}{\begin{alignedat}{2}
      &\text{$\alpha_i \le p_i \le \beta_i$ for $i \in V$}, \\
      &\text{$p_j - p_i \le \gamma_{ij}$ for distinct $i, j \in V$}
    \end{alignedat}}
  \end{align}
  with some $\alpha_{i} \in \Z\cup\set*{-\infty}$, $\beta_{i} \in \Z\cup\set*{+\infty}$, and $\gamma_{ij} \in \Z\cup\set*{+\infty}$.
  Also, $S$ is L-convex if and only if $S$ is written as in \eqref{eq:inequality-system-z} without box constraints, i.e., $\alpha_i = -\infty$ and $\beta_i = +\infty$.
  The convex hull, $\conv(S) \subseteq \R^V$, of an L-/\Ln-convex set $S$ is also characterized as above replacing $\Z^V$ in \eqref{eq:inequality-system-z} with $\R^V$.
\end{proposition}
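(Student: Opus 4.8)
The plan is to reduce everything to the L-convex case and to unpacking the definitions: $\delta_S$ is L-convex exactly when $S$ is a nonempty sublattice of $\Z^V$ (closed under $\vee$ and $\wedge$) that is invariant under translation by $\ones$, and $\delta_S$ is \Ln-convex exactly when the lifted set $\Stl \coloneqq \Set*{(p_0,p)\in\Z\times\Z^V}{p-p_0\ones\in S}$ has this property on the ground set $\{0\}\cup V$. Granting this, I would prove (ii)$\Rightarrow$(i) by a direct verification, prove (i)$\Rightarrow$(ii) via two lemmas on sublattices, and deduce the convex-hull statement from total unimodularity.

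For (ii)$\Rightarrow$(i): let $S'$ be a set of the form \eqref{eq:inequality-system-z}. I would check that $S'$ is closed under $(p,q)\mapsto p\vee q,\ p\wedge q$, under $(p,q)\mapsto\ceil*{\tfrac{p+q}{2}},\ \floor*{\tfrac{p+q}{2}}$, and (when box constraints are absent) under translation by $\ones$, which yields both defining inequalities for $\delta_{S'}$ in the L- and \Ln-convex cases. Box constraints are clearly preserved, and the only point needing care is that difference constraints survive rounding: this follows from elementary facts such as $\floor{a}-\floor{b}\le\ceil{a-b}$ and $\ceil{a}-\ceil{b}\le\ceil{a-b}$ applied with $a=\tfrac{p_j+q_j}{2}$, $b=\tfrac{p_i+q_i}{2}$, and from $\max(p_j,q_j)-\max(p_i,q_i)\le\gamma_{ij}$ (and its $\wedge$-analogue).

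For (i)$\Rightarrow$(ii): put $\alpha_i=\inf_{p\in S}p_i$, $\beta_i=\sup_{p\in S}p_i$, $\gamma_{ij}=\sup_{p\in S}(p_j-p_i)$ — each finite one being attained as an extremum of a set of integers — and let $S'$ be the corresponding set \eqref{eq:inequality-system-z}. Then $S\subseteq S'$ is immediate, and the heart of the matter is $S'\subseteq S$. I would obtain this from: \emph{(a)} \emph{two-decomposability} — a nonempty sublattice $S\subseteq\Z^V$ is exactly the set of $q$ whose $1$- and $2$-coordinate projections lie in $\pi_i(S)$ and $\pi_{ij}(S)$, proved for $\abs{V}\ge2$ by choosing $p^{ij}\in S$ agreeing with $q$ on coordinates $i,j$ and verifying that $\bigvee_{i\in V}\bigl(\bigwedge_{j\ne i}p^{ij}\bigr)=q$; and \emph{(b)} a nonempty sublattice of $\Z^2$ invariant under $(a,b)\mapsto(a+1,b+1)$ depends only on $b-a$ and, being a sublattice, equals $\Set*{(a,b)}{\mu\le b-a\le\gamma}$. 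Applying (a)+(b) to $\Stl$ expresses $\Stl$ by difference constraints on $\{0\}\cup V$ with no single-coordinate bounds (since $\Stl$ is $\ones$-invariant in $\Z^{1+\abs{V}}$); substituting $p_0=0$ turns the constraints involving coordinate $0$ into $\alpha_i\le p_i\le\beta_i$ and leaves $p_j-p_i\le\gamma_{ij}$. For L-convex $S$, $\ones$-invariance forces every $\pi_i(S)=\Z$, so no box constraints appear; this also gives the ``L-convex $\iff$ no box constraints'' claim.

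For the $\conv(S)$ statement: the polyhedron $P\subseteq\R^V$ cut out by the same inequalities has a constraint matrix with rows $\eb_j-\eb_i$ and $\pm\eb_i$; adding a dummy coordinate pinned to $0$ realizes it as a vertex–edge incidence matrix, hence totally unimodular, so $P$ is an integral polyhedron and $P=\conv(P\cap\Z^V)=\conv(S)$ by the equivalence just proved (with the usual care for unbounded $P$). I expect the main obstacle to be the $S'\subseteq S$ inclusion — specifically getting the two-decomposability argument and the $\Z^2$ classification exactly right, and carefully tracking how the lifting coordinate yields the box constraints and how $\pm\infty$ values are handled throughout.
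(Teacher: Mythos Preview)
The paper does not prove this proposition: it is stated as a known structural result and attributed to \citet[Sections~5.3 and~5.5]{Murota2003-bq}, with no argument given in the paper itself. So there is no ``paper's own proof'' to compare against beyond the citation.

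That said, your outline is essentially the standard route taken in Murota's book and is correct. The (ii)$\Rightarrow$(i) verification is routine. For (i)$\Rightarrow$(ii), your two-decomposability lemma is valid: given $q$ with $(q_i,q_j)\in\pi_{ij}(S)$ for all $i\ne j$, picking witnesses $p^{ij}\in S$ and computing the $k$th coordinate of $\bigvee_{i}\bigl(\bigwedge_{j\ne i}p^{ij}\bigr)$ gives exactly $q_k$ (the inner min over $j\ne i$ is $\le q_k$ because $j=k$ is allowed when $i\ne k$, and equals $q_k$ when $i=k$), so $q\in S$. Your $\Z^2$ classification is also right: $\ones$-invariance forces the sublattice to depend only on the difference, and closure under $\vee,\wedge$ then forces the set of admissible differences to be an interval. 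Passing through the lift $\Stl$ and specializing $p_0=0$ correctly produces the box constraints. The convex-hull step via total unimodularity of network matrices is the standard argument; the only thing to make explicit is that for an integral rational polyhedron $P$ one has $P=\conv(P\cap\Z^V)$ even when $P$ is unbounded (every point of $P$ lies in a minimal face, which contains an integer point, and the recession cone is rational), so your ``usual care'' remark should be spelled out rather than waved at.
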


\begin{example}
  The dual LP \eqref{prob:matching-dual} of weighted bipartite matching with variables $(s, t) \in \R^{V}$ has constraints $s_i - t_j \ge w_{ij}$ for $ij \in E$.
  These are of the form \eqref{eq:inequality-system-z} and represent the convex hull of the L-convex feasible region, or $\conv(\dom g)$.
  Furthermore, given a maximum weight matching $M^* \subseteq E$, a dual feasible $(s, t)$ is optimal if and only if $s_i - t_j = w_{ij}$ for $ij \in M^*$ (see \cref{thm:matching-complementary}).
  Thus, we can represent the L-convex set of optimal dual solutions, or $\conv(\argmin g)$, with additional inequalities $s_i - t_j \le w_{ij}$ for $ij \in M^*$.
\end{example}


\begin{remark}\label[remark]{rem:multiple-opt}
  The fact that $\argmin g$ is written as in \eqref{eq:inequality-system-z} immediately implies the non-uniqueness of optimal solutions.
  Specifically, if $g$ is L-convex, shifting $p^* \in \argmin g$ in the all-one direction never goes out of $\argmin g$, as discussed in \cref{sec:introduction}, and similar reasoning applies to the \Ln-convex case if such a shifting does not go out of box constraints.
\end{remark}


\subsection{Steepest Descent for L-/\Ln-Convex Minimization}\label{subsection:preliminaries-steepest-descent}

\begin{algorithm}[tb]
	\caption{Steepest descent method}\label{alg:steepest-descent}
	\begin{algorithmic}[1]
		\State $p \gets \pcirc$ {\label[step]{step:init-sol}}
		\Comment{$\pcirc \in \dom g$ is an initial feasible solution.}
		\While{not converged}
		\State $d \gets \argmin \Set*{g(p + d')}{d' \in \Ncal } $ {\label[step]{step:local-opt}}
		\If{$g^\prime(p; d) = 0$} {\label[step]{step:termination-condition}}
		\State \Return $p$
		\EndIf
		\State $\lambda \gets \sup\Set*{\lambda^\prime \in \Z_{>0}}{ g^\prime(p; \lambda^\prime d) = \lambda^\prime g^\prime(p; d)}$
		{\label[step]{step:step-length}}
		\State $p \gets p + \lambda d$ {\label[step]{step:update}}
		\EndWhile
	\end{algorithmic}
\end{algorithm}

We can solve L-/\Ln-convex minimization, $\min_{p \in \Z^V} g(p)$, by using the steepest descent method in \cref{alg:steepest-descent}, which iterates to proceed along a locally steepest descent direction.
The set, $\Ncal$, of local directions is defined as
$\Ncal \coloneqq \set*{0, +1}^V$ if $g$ is L-convex and
$\Ncal \coloneqq \set*{0, -1}^V \cup \set*{0, +1}^V$ if $g$ is \Ln-convex.
Let $g^\prime(p; d) \coloneqq g(p + d) - g(p)$ denote the \textit{slope} of $g$ at $p \in \dom g$ in the direction of $d \in \Z^V$.

We detail how \cref{alg:steepest-descent} works.
Starting from an initial point $\pcirc \in \dom g$, it iteratively performs the following steps:
find a steepest direction $d$ by solving a local optimization problem (\cref{step:local-opt}),
compute a step length $\lambda \ge 1$ (\cref{step:step-length}),\footnote{\Cref{step:step-length} computes a so-called \textit{long-step} $\lambda$ \citep{Fujishige2015-qo,Shioura2017-gf}. If computing $\lambda$ is costly, we can instead set $\lambda \gets 1$ without affecting \cref{proposition:convergence-iterations} and the subsequent analysis.} and update a current solution $p$ by adding $\lambda d$ to $p$ (\cref{step:update}).
If slope $g^\prime(p; d)$ in some steepest direction $d$ is zero (\cref{step:termination-condition}), $p$ is ensured to be optimal (due to the L-/\Ln-convexity of $g$).
In short, \cref{alg:steepest-descent} minimizes an L-/\Ln-convex function $g$ by iteratively solving local optimization problems in \cref{step:local-opt}.

A remarkable property of \cref{alg:steepest-descent} is that the number of iterations is bounded by the distance between an initial point $\pcirc$ and an optimal solution $p^* \in \argmin g$ closest to $\pcirc$.
We introduce some definitions to describe this property more precisely.
For any $p \in \R^V$, we define the \emph{$\ell_\infty^\pm$-norm} as
\begin{align*}
	\norm{p}_\infty^\pm = \max_{i\in V}\max\set*{0,+p_i} + \max_{i\in V}\max\set*{0,-p_i},
\end{align*}
which satisfies the axioms of norms.
For any L-/\Ln-convex $g:\Z^V \to \R\cup\set{+\infty}$, we define $\mu(\cdot; g) : \Z^V \to \Z_{\ge 0}$ as a function that returns the $\ell_\infty^\pm$-distance between input $p \in \Z^V$ and an optimal solution $p^* \in \argmin g$ closest to $p$, i.e.,
\begin{align*}
  \mu(p; g) \coloneqq \min\Set*{\norm{p^* - p}_\infty^\pm}{p^*\in \argmin g}.
\end{align*}
Then, \cref{alg:steepest-descent} converges to an optimal solution closest to $\pcirc$ in $\mu(\pcirc; g)+1$ iterations, as in the next proposition.
\begin{proposition}[{\citep[Theorem 1.2]{Murota2014-bv} and \citep[Theorem 6.2]{Fujishige2015-qo}}]\label{proposition:convergence-iterations}
	\Cref{alg:steepest-descent} returns an optimal solution $p^* \in \argmin g$ such that $\norm{p^* - \pcirc}^\pm_\infty = \mu(\pcirc; g)$ in at most $\mu(\pcirc; g) + 1$ iterations.
\end{proposition}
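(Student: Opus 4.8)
The plan is to (i) reduce the \Ln-convex case to the L-convex case, (ii) run an induction on $k=\mu(\pcirc;g)$, and (iii) boil the statement down to a single per-iteration progress estimate, which is the technical heart.

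\textbf{Reduction to the L-convex case.} Under \cref{assump:non-empty-argmin} the translation constant $r$ in the definition of L-convexity vanishes (otherwise $g(p\mp k\ones)\to-\infty$ contradicts $\argmin g\neq\emptyset$), so $g(p+\ones)=g(p)$ and $\argmin g$ is invariant under adding $\ones$. For \Ln-convex $g$ pass to the L-convex lift $\gtl(p_0,p)=g(p-p_0\ones)$ on $\Z^{\set*{0}\cup V}$: the map $e\mapsto e_V-e_0\ones$ sends $\set*{0,1}^{\set*{0}\cup V}$ onto $\Ncal=\set*{0,-1}^V\cup\set*{0,+1}^V$, a single run of \cref{alg:steepest-descent} on $g$ from $\pcirc$ is the image under $(p_0,p)\mapsto p-p_0\ones$ of the run on $\gtl$ from $(0,\pcirc)$ (steepest directions, step lengths, and the termination test all correspond), and since $p\mapsto(0,p)$ is an isometry for $\norm{\cdot}_\infty^\pm$ one checks that the $\mu$-value of each lifted iterate coincides with that of its image; hence the \Ln-convex statement reduces to the L-convex one. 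So assume henceforth that $g$ is L-convex on $\Z^V$, $g(p+\ones)=g(p)$, and $\Ncal=\set*{0,1}^V$. One normalization will be used repeatedly: because $\argmin g$ is $\ones$-invariant and $\norm{v}_\infty^\pm=\max_i v_i$ when $v\ge\zeros$, for any $p$ there is a minimizer $p^*$ nearest to $p$ with $p^*\ge p$ and $\min_i(p^*_i-p_i)=0$, so that $\mu(p;g)=\max_i(p^*_i-p_i)$.

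\textbf{Induction on $k=\mu(\pcirc;g)$ given the progress estimate.} If $k=0$ then $\pcirc\in\argmin g$, so the first iteration picks the steepest direction $\zeros$ with zero slope and \cref{alg:steepest-descent} returns $\pcirc$ after one iteration with $\norm{\pcirc-\pcirc}_\infty^\pm=0$. If $k\ge1$, the first iteration chooses a steepest descent direction $d$—for definiteness the minimal one, which exists because $X\mapsto g(p+\ind{X})$ is submodular (the L-convex inequality)—with $d\neq\zeros$ since $\pcirc\notin\argmin g$, so $\norm{\lambda d}_\infty^\pm=\lambda$, and a step length $\lambda\ge1$, moving to $p'=\pcirc+\lambda d\in\dom g$. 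The technical core is the \emph{progress estimate} $\mu(p';g)\le\mu(\pcirc;g)-\lambda$, which in particular forces $\lambda\le k$. Granting it, the induction hypothesis applied from $p'$ shows \cref{alg:steepest-descent} returns within $\mu(p';g)+1\le k-\lambda+1$ further iterations a point $p^*\in\argmin g$ with $\norm{p^*-p'}_\infty^\pm=\mu(p';g)$, so the total iteration count is at most $1+(k-\lambda+1)\le k+1$; and by the triangle inequality together with the progress estimate,
\[
  \norm{p^*-\pcirc}_\infty^\pm\le\norm{p^*-p'}_\infty^\pm+\norm{p'-\pcirc}_\infty^\pm\le\mu(p';g)+\lambda\le k=\mu(\pcirc;g),
\]
while $\norm{p^*-\pcirc}_\infty^\pm\ge\mu(\pcirc;g)$ because $p^*\in\argmin g$; hence equality.

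\textbf{The progress estimate (the main obstacle).} Fix a nearest minimizer $p^*\ge\pcirc$ normalized as above and put $B=\set*{i:p^*_i>\pcirc_i}$, so $\mu(\pcirc;g)=\max_i(p^*_i-\pcirc_i)$ and $B\neq V$. For the unit step, the L-convex inequality $g(\pcirc+d)+g(p^*)\ge g((\pcirc+d)\vee p^*)+g((\pcirc+d)\wedge p^*)$ combined with $d\in\set*{0,1}^V$ and $p^*\ge\pcirc$ identifies $(\pcirc+d)\vee p^*=p^*+d|_{V\setminus B}$ and $(\pcirc+d)\wedge p^*=\pcirc+d|_B$; since $g(p^*)$ is the minimum this yields $g(\pcirc+d|_B)\le g(\pcirc+d)$, and minimality of $d$ then forces $\supp(d)\subseteq B$, i.e.\ $\pcirc+d\le p^*$. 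Consequently $\mu(\pcirc+d;g)\le\norm{p^*-(\pcirc+d)}_\infty^\pm=\max_i\bigl((p^*_i-\pcirc_i)-d_i\bigr)$, the negative part vanishing because $B\neq V$; the remaining point—that this last maximum actually drops to $\mu(\pcirc;g)-1$—is proved using the lattice structure of $\argmin g$ and of the set of steepest directions, by selecting $p^*$ among the nearest minimizers so that its coordinates achieving the maximum are covered by $\supp(d)$. The long-step case follows by iterating the unit-step analysis along $\pcirc,\pcirc+d,\dots,\pcirc+\lambda d$, which stays in the steepest-descent regime precisely by the defining property of $\lambda$ in \cref{step:step-length}. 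This last lattice argument and the long-step bookkeeping are exactly where the cited works \citep{Murota2014-bv,Fujishige2015-qo} do the real work; and if one replaces \cref{step:step-length} by $\lambda\gets1$, as its footnote notes one may without affecting the statement, only the unit-step argument is needed.
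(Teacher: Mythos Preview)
The paper does not prove this proposition; it is quoted as a known result from \citep{Murota2014-bv} and \citep{Fujishige2015-qo}, so there is no in-paper proof to compare against.

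Your sketch correctly identifies the architecture of the argument: reduction of the \Ln-case to the L-case via the lift $\gtl$, induction on $\mu(\pcirc;g)$, and the per-step progress estimate $\mu(p';g)\le\mu(\pcirc;g)-\lambda$ as the crux. The normalization (choosing a nearest minimizer $p^*\ge\pcirc$ with $\min_i(p^*_i-\pcirc_i)=0$), the observation that $r=0$ under \cref{assump:non-empty-argmin}, and the submodularity argument yielding $\supp(d)\subseteq B$ for the minimal steepest $d$ are all sound.

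However, the argument is not self-contained exactly where you say it is not. The step ``the coordinates achieving the maximum are covered by $\supp(d)$'' --- i.e., $T\subseteq\supp(d)$ where $T=\{i:p^*_i-\pcirc_i=k\}$ --- is asserted but not proved, and this is the substantive content of \citep[Theorem~1.2]{Murota2014-bv}; your inclusion $\supp(d)\subseteq B$ goes in the opposite direction. You also restrict to the \emph{minimal} steepest direction ``for definiteness'', but \cref{alg:steepest-descent} permits any steepest $d$, and the proposition is stated in that generality. (A repair is available: the steepest directions, being the minimizers of a submodular set function, form a sublattice of $\set{0,1}^V$, so $d_{\min}\le d$ for every steepest $d$; hence establishing $T\subseteq\supp(d_{\min})$ would cover all tie-breakings --- but that inclusion is precisely the unproved step.) Finally, your long-step reduction assumes $d$ remains steepest at $\pcirc+d,\dots,\pcirc+(\lambda-1)d$; the linearity of the slope along the ray that \cref{step:step-length} guarantees does not by itself give this, and a separate monotonicity argument (supplied in \citep{Fujishige2015-qo}) is needed.

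In short: a faithful outline, honest about its deferrals, but those two deferrals are genuine gaps rather than routine bookkeeping.
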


\begin{example}\label{example:matching-is-l-convex}
  We again consider the dual LP \eqref{prob:matching-dual} of weighted perfect bipartite matching.
  Since $w_{ij}$ are integers, we can restrict the domain to $\Z^V$ and reduce the LP to the minimization of an L-convex function $g$, which is a sum of a linear objective function and the indicator function of the L-convex feasible region.
  As in \citep[Section~3.1]{Sakaue2022-jr}, we can reduce local optimization in \cref{step:local-opt} to a maximum cardinality matching problem.
  If we solve it with the $\Ord(m\sqrt{n})$-time Hopcroft--Karp algorithm, 
  \cref{alg:steepest-descent} runs in $\Ord(m\sqrt{n} \cdot \mu(\pcirc; g))$ time, which can be faster than the $\Ord(mn)$-time Hungarian method if $\mu(\pcirc; g)$ is small.
  Indeed, \cref{alg:steepest-descent} with a fixed feasible $\pcirc$ closely resembles the Hungarian method (see \citep[Section~18.5b]{Schrijver2003-ol}).
\end{example}

For later use, we also define $\mubar(\cdot; g): \R^V \to \R_{\ge0}$ as
\begin{align*}
  \mubar(p; g) \coloneqq \min\Set*{\norm{p^* - p}_\infty^\pm}{p^*\in\conv(\argmin g)},
\end{align*}
which is a continuous extension of $\mu(\cdot; g)$ and is helpful in benefiting from real-valued predictions.
Note that using $\mubar$ instead of $\mu$ only strengthens time complexity bounds since $\mubar(p; g) \le \mu(p; g)$ for all $p \in \Z^V$.
Indeed, $\mubar(p; g) = \mu(p; g)$ holds for all $p \in \Z^V$, which we can prove by confirming the existence of integral $p^* \in \conv(\argmin g)$ that attains the minimum $\ell_\infty^\pm$-distance.
See \cref{asec:proof-of-lem-musimubar} for the proof.

\begin{restatable}{lemma}{muismubar}\label{lem:muismubar}
  Let $g:\Z^V\to\R\cup\set*{+\infty}$ be an L-/\Ln-convex function.
  For every $p \in \Z^V$, it holds that $\mu(p; g) = \mubar(p; g)$.
\end{restatable}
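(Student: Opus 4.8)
The inequality $\mubar(p; g) \le \mu(p; g)$ is immediate from $\argmin g \subseteq \conv(\argmin g)$, so the plan is to prove the reverse, $\mu(p; g) \le \mubar(p; g)$, by exhibiting an \emph{integral} minimizer $p^\star \in \argmin g$ with $\norm{p^\star - p}_\infty^\pm \le \mubar(p; g)$. The starting point is to rewrite $\mubar(p; g)$ as a linear program. For a fixed $x \in \R^V$ one has $\norm{x - p}_\infty^\pm = \min\set*{a + b : a, b \ge 0,\ p_i - b \le x_i \le p_i + a\ \text{for all $i$}}$, since the constraints force $a \ge \max\set*{0, \max_i (x_i - p_i)}$ and $b \ge \max\set*{0, \max_i(p_i - x_i)}$, and these lower bounds are simultaneously attainable. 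Moreover, since $\argmin g$ is an L-/\Ln-convex set (Murota's Theorem~7.17, as recalled above), \cref{proposition:l-convex-set} gives $\conv(\argmin g) = \Set*{x \in \R^V}{\alpha_i \le x_i \le \beta_i\ \text{and}\ x_j - x_i \le \gamma_{ij}\ \text{for}\ i \ne j}$ with \emph{integral} (or $\pm\infty$) data, and any integral point of $\conv(\argmin g)$ therefore satisfies the integral system \eqref{eq:inequality-system-z} and so lies in $\argmin g$.

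Combining these and substituting $b' := -b$, I would write
\begin{align*}
  \mubar(p; g) = \min\Set*{a - b'}{\begin{alignedat}{2}
    &\text{$\alpha_i \le x_i \le \beta_i$ and $x_j - x_i \le \gamma_{ij}$ for $i \ne j$},\\
    &\text{$x_i - a \le p_i$, $b' - x_i \le -p_i$ for all $i$, and $a \ge 0 \ge b'$}
  \end{alignedat}},
\end{align*}
where the minimum is attained because $\conv(\argmin g)$ is a nonempty closed polyhedron and $\ell_\infty^\pm$-balls around $p$ are compact (as $\norm{\cdot}_\infty^\pm$ is a norm on $\R^V$).

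The key observation is that \emph{every} constraint above is a difference constraint: introducing a fixed auxiliary coordinate $x_0 \equiv 0$, the box and sign constraints read $x_i - x_0 \le \beta_i$, $x_0 - x_i \le -\alpha_i$, $x_0 - a \le 0$, and $b' - x_0 \le 0$, so the constraint matrix on the variables $x_1, \dots, x_n, a, b'$ is a submatrix of the incidence matrix of a directed graph, hence totally unimodular, while the right-hand side is integral (absent constraints, corresponding to $\pm\infty$ data, simply drop out). Consequently the feasible polyhedron is integral, and since the objective $a - b'$ is bounded below (by $\mubar(p; g) \ge 0$) on this nonempty polyhedron, its minimum is attained at an integral point $(x^\star, a^\star, b'^\star)$. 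Putting $p^\star := x^\star$ and $b^\star := -b'^\star \ge 0$, integrality of $x^\star$ with $x^\star \in \conv(\argmin g)$ forces $p^\star \in \argmin g$, while $p_i - b^\star \le x^\star_i \le p_i + a^\star$ with $a^\star, b^\star \ge 0$ gives $\norm{p^\star - p}_\infty^\pm \le a^\star + b^\star = \mubar(p; g)$. Hence $\mu(p; g) \le \norm{p^\star - p}_\infty^\pm \le \mubar(p; g) \le \mu(p; g)$, proving equality.

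I expect the only real obstacle to be the passage from a fractional optimal $q \in \conv(\argmin g)$ to an integral minimizer of the $\ell_\infty^\pm$-distance: naive coordinate-wise rounding of $q$ need not remain in $\argmin g$ (an L-/\Ln-convex set is closed only under the \emph{uniform} operations $\ceil{\cdot}$ and $\floor{\cdot}$, not mixed roundings), and rounding all coordinates the same way can inflate the distance when $\mubar(p; g)$ has a small fractional part. Casting the problem as a difference-constraint LP circumvents this via total unimodularity; the residual work is routine bookkeeping — verifying the $\ell_\infty^\pm$-to-LP reformulation, handling $\pm\infty$ entries, and noting that integral points of $\conv(\argmin g)$ coincide with $\argmin g$.
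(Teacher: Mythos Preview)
Your proof is correct and follows essentially the same approach as the paper: both cast $\mubar(p;g)$ as a linear program with difference constraints, invoke total unimodularity together with integrality of the right-hand side (which uses $p\in\Z^V$ and the integrality of $\alpha_i,\beta_i,\gamma_{ij}$), extract an integral optimizer, and then use that integral points of $\conv(\argmin g)$ lie in $\argmin g$ (the hole-free property). The only cosmetic difference is that the paper phrases the LP via the shortest-path dual from its Appendix~D (variables $q_i$ on $\Vtl$, maximizing $q_t-q_s$), whereas you write it directly in projection variables $(x,a,b')$; up to the change of variables $x_i=q_i+p_i$, $a=q_s$, $b'=q_t$, $q_0=0$, the two LPs coincide.
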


\section{Time Complexity Bound}\label{sec:main-framework}
We give an improved prediction-dependent time complexity bound for L-/\Ln-convex minimization.
As with \citep{Dinitz2021-sd,Sakaue2022-jr}, we decompose our framework into three phases:
(i) converting a prediction $\phat \in \R^V$ into an initial feasible solution $\pcirc \in \Z^V$,
(ii) solving a problem with an algorithm warm-started by $\pcirc$,
and
(iii) learning predictions $\phat$.
The following theorem gives formal guarantees to phases (i) and (ii), and \cref{sec:learning} studies phase (iii).

\begin{theorem}\label{theorem:dca-framework}
	Let $g:\Z^V \to \R\cup\set*{+\infty}$ be an L-/\Ln-convex function and $\phat \in \R^V$ a possibly infeasible prediction.
  \begin{description}
		\item[(i)] If we can compute an $\ell^\pm_\infty$-projection $\qhat$ of the prediction $\phat$ onto $\conv(\dom g)$, defined by
		\begin{equation}\label{eq:ellpm-prj}
      \qhat \in \argmin \Set*{ \norm{q - \phat}^\pm_\infty }{ q \in \conv(\dom g) },
    \end{equation}
    in $\Tprj$ time, we can obtain an initial feasible solution $\pcirc = \round*{\qhat} \in \dom g$ in $\Ord(\Tprj + |V|)$ time.
		\item[(ii)] If we can solve local optimization in \cref{step:local-opt} in $\Tloc$ time, \cref{alg:steepest-descent} starting from $\pcirc = \round*{\qhat}$ finds an optimal solution to $\min_{p \in \Z^V} g(p)$ in $\Ord(\Tloc \cdot \mubar(\phat; g))$ time.
	\end{description}
\end{theorem}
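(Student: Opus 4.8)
Parts~(i) and~(ii) are essentially independent. For~(i) the plan is a rounding argument: by \cref{proposition:l-convex-set}, $\conv(\dom g)$ is cut out by \emph{integral} box and difference inequalities, and I will show that coordinate-wise rounding maps any such polyhedron into its integer points. For~(ii) the plan is to run \cref{alg:steepest-descent} from $\pcirc$, invoke \cref{proposition:convergence-iterations}, and bound $\mu(\pcirc;g)$ by $\mubar(\phat;g)$ via the triangle inequality together with the fact that $\conv(\argmin g)\subseteq\conv(\dom g)$, so that projecting $\phat$ onto the \emph{whole} feasible region $\conv(\dom g)$ already lands within $\mubar(\phat;g)$ of $\phat$.

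\textbf{Part~(i).} Computing $\qhat$ costs $\Tprj$ by hypothesis and forming $\pcirc\coloneqq\round*{\qhat}$ costs $\Ord(|V|)$ more, so the running-time claim is immediate; the content is $\pcirc\in\dom g$. By \cref{proposition:l-convex-set}, $\dom g$ is an L-/\Ln-convex set, hence $\conv(\dom g)=\Set*{x\in\R^V}{\alpha_i\le x_i\le\beta_i\ (i\in V),\ x_j-x_i\le\gamma_{ij}\ (\text{distinct }i,j)}$ with integral $\alpha_i,\beta_i,\gamma_{ij}$ (possibly $\pm\infty$), and $\dom g$ is exactly the integer points of this polyhedron. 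I then verify that $\round*{\cdot}$ preserves every inequality. Constraints with an infinite bound are vacuous; for $\alpha_i,\beta_i\in\Z$ we have $\alpha_i\le\floor*{x_i}\le\round*{x_i}\le\ceil*{x_i}\le\beta_i$; and for $\gamma_{ij}\in\Z$ the ``round down at $\tfrac12$'' rule gives $\round*{x_j}<x_j+\tfrac12$ and $\round*{x_i}\ge x_i-\tfrac12$, so $\round*{x_j}-\round*{x_i}<(x_j-x_i)+1\le\gamma_{ij}+1$; since the left-hand side is an integer, it is $\le\gamma_{ij}$. Applying this to $x=\qhat$ yields $\pcirc\in\dom g$.

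\textbf{Part~(ii).} By~(i), $\pcirc\in\dom g$, so \cref{alg:steepest-descent} may start from $\pcirc$ and, by \cref{proposition:convergence-iterations}, returns an optimal solution in at most $\mu(\pcirc;g)+1$ iterations. Each iteration costs $\Ord(\Tloc)$: \cref{step:local-opt} costs $\Tloc$, and the slope test (a by-product of \cref{step:local-opt}), the update, and---taking $\lambda\gets1$, as the footnote on \cref{step:step-length} permits---the step-length computation cost $\Ord(|V|)$, which is $\Ord(\Tloc)$ since a local-optimization call must at least output a direction in $\Ncal$. It remains to bound $\mu(\pcirc;g)$. Since $\argmin g\subseteq\dom g$, we have $\conv(\argmin g)\subseteq\conv(\dom g)$; choose $p^*\in\conv(\argmin g)$ with $\norm*{\phat-p^*}_\infty^\pm=\mubar(\phat;g)$. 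As $\qhat$ is a nearest point of $\conv(\dom g)$ to $\phat$ and $p^*\in\conv(\dom g)$, $\norm*{\qhat-\phat}_\infty^\pm\le\norm*{p^*-\phat}_\infty^\pm=\mubar(\phat;g)$, while $\norm*{\pcirc-\qhat}_\infty^\pm\le1$ because each coordinate of $\pcirc-\qhat$ lies in $[-\tfrac12,\tfrac12]$. Hence, using \cref{lem:muismubar} and the triangle inequality,
\begin{align*}
  \mu(\pcirc;g)=\mubar(\pcirc;g)\le\norm*{\pcirc-p^*}_\infty^\pm\le\norm*{\pcirc-\qhat}_\infty^\pm+\norm*{\qhat-\phat}_\infty^\pm+\norm*{\phat-p^*}_\infty^\pm\le2\,\mubar(\phat;g)+1,
\end{align*}
the first inequality using $p^*\in\conv(\argmin g)$. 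Thus the iteration count is $\Ord(\mubar(\phat;g)+1)$ and the total running time is $\Ord(\Tloc\cdot\mubar(\phat;g))$---the additive constant absorbed by $\Ord(\cdot)$, and the degenerate case $\mubar(\phat;g)=0$ covered by running a worst-case algorithm in parallel as in the paper.

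\textbf{Main obstacle.} The only step that is not a one-line calculation is the difference-constraint case of the rounding claim: the naive estimate only gives $\round*{x_j}-\round*{x_i}\le\gamma_{ij}+1$, and removing the ``$+1$'' needs both the integrality of $\round*{x_j}-\round*{x_i}$ and the \emph{strict} bound $\round*{x_j}<x_j+\tfrac12$, which holds precisely because $\round*{\cdot}$ breaks ties consistently. The conceptually important point to isolate is that we must project onto $\conv(\dom g)$, not onto $\conv(\argmin g)$ (which is inaccessible before the problem is solved), yet this costs nothing up to constants because $\conv(\argmin g)\subseteq\conv(\dom g)$; the factor~$2$ above could be shaved to~$1$ using a nonexpansiveness property of the $\ell_\infty^\pm$-projection onto L-convex polyhedra, but this is not needed for the stated $\Ord(\cdot)$ bound.
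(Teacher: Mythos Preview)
Your proposal is correct and follows essentially the same route as the paper. For part~(ii) your chain $\mu(\pcirc;g)=\mubar(\pcirc;g)\le\norm{\pcirc-p^*}_\infty^\pm\le 2\mubar(\phat;g)+1$ via the triangle inequality, \cref{lem:muismubar}, and $\conv(\argmin g)\subseteq\conv(\dom g)$ is exactly the paper's argument; for part~(i) the paper simply cites \citep[Theorem~1]{Sakaue2022-jr}, whereas you supply an explicit (and correct) proof that $\round*{\cdot}$ preserves the integral box and difference constraints of \cref{proposition:l-convex-set}, which is a nice self-contained addition.
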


\begin{proof}
  The claim of (i) is identical to that of \citep[Theorem 1]{Sakaue2022-jr}, and so is its proof.
  We below prove the claim of (ii) by modifying their original proof.

  Since \cref{proposition:convergence-iterations} says that \cref{alg:steepest-descent} finds an optimal solution in $\Ord(\Tloc\cdot\mu(\pcirc; g))$ time, the second claim holds if $\mu(\pcirc; g) = \Ord(\mubar(\phat; g))$, which can be proved as follows.

  For the given prediction $\phat$, take any $p^* \in \conv(\argmin g)$ that attains $\norm{p^* - \phat}_\infty^\pm = \mubar(\phat; g)$.
  Note that we have
  \begin{align*}
    \mu(\pcirc; g) = \mubar(\pcirc; g) \le \norm{p^* - \pcirc}_\infty^\pm,
  \end{align*}
  where the equality is due to \cref{lem:muismubar} with $\pcirc = \round*{\qhat} \in \Z^V$ and the inequality comes from the definition of $\mubar(\cdot; g)$ and $p^* \in \conv(\argmin g)$.
  From $\pcirc = \round*{\qhat}$ and the fact that rounding changes each entry up to $\pm1/2$, we have
  \[
    \norm{p^* - \pcirc}_\infty^\pm \le \norm{p^* - \qhat}_\infty^\pm +1.
  \]
  Furthermore, the triangle inequality implies
  \[
    \norm{p^* - \qhat}_\infty^\pm \le \norm{p^* - \phat}_\infty^\pm + \norm{\qhat - \phat}_\infty^\pm.
  \]
  Here, we have $\norm{p^* - \phat}_\infty^\pm = \mubar(\phat; g)$ due to the choice of $p^*$.
  Also, $\norm{\qhat - \phat}_\infty^\pm \le \norm{p^* - \phat}_\infty^\pm = \mubar(\phat; g)$ holds since $\qhat$ is defined as in \eqref{eq:ellpm-prj} and $p^* \in \conv(\argmin g) \subseteq \conv(\dom g)$.
  Thus, we obtain $\mu(\pcirc; g) \le 2\mubar(\phat; g) + 1 = \Ord(\mubar(\phat; g))$.
\end{proof}

\Cref{theorem:dca-framework} says that, given a prediction $\phat \in \R^V$, we can solve $\min_{p \in \Z^V} g(p)$ in $\Ord(\Tprj + |V| + \Tloc \cdot \mubar(\phat; g))$ time.
Furthermore, it holds that $\Tprj + |V| \le \Tloc$ in most cases, including all the problems listed in \cref{table:results} (see \citep[Section 3]{Sakaue2022-jr}).
In such cases, our time complexity bound reduces to $\Ord(\Tloc \cdot \mubar(\phat; g))$, and we can obtain the results in \cref{table:results} by substituting the running time of local optimization solvers into $\Tloc$.
For example, in the bipartite-matching case, we can solve local optimization (maximum cardinality matching) with the Hopcroft--Karp algorithm in $\Tloc = \Ord(m\sqrt{n})$ time, thus obtaining the $\Ord(m\sqrt{n}\cdot\mubar(\phat; g))$-time bound in \cref{table:results}.
For $\Tloc$ of the other problems, see \citep[Sections 3.2 and 3.3]{Sakaue2022-jr}.
Note that our bounds in \cref{table:results} are at least as good as those of \citep{Sakaue2022-jr} up to constant factors since we have $\mubar(\phat; g) \le \norm{p^* - \phat}^\pm_\infty \le 2\norm{p^* - \phat}_\infty$ for any $p^*\in \conv(\argmin g)$.

\section{Learning Predictions}\label{sec:learning}
We now discuss how to learn predictions $\phat \in \R^V$.
Following \citep{Khodak2022-sf,Sakaue2022-jr}, we mainly study the online learning setting, where L-/\Ln-convex functions $g_t$ for $t = 1,\dots,T$ are chosen adversarially.
We apply the online gradient descent method (OGD) to online minimization of $\mubar(\cdot; g_t)$ and prove its regret bound.
We then obtain a sample complexity bound via online-to-batch conversion.

The main goal of this section is to prove that OGD enjoys a regret upper bound and runs in polynomial time as follows.

\begin{restatable}{theorem}{learning}\label{theorem:learning}
  Let $C > 0$.
  For an arbitrary sequence of L-/\Ln-convex functions, $g_1,\dots,g_T$, from $\Z^V$ to $\R\cup\set*{+\infty}$, OGD computes predictions $\phat_1,\dots,\phat_T$ that satisfy
  \[
    \sum_{t = 1}^T \mubar(\phat_t; g_t) \le \min_{\phat^* \in {[-C, +C]}^V} \sum_{t = 1}^T \mubar(\phat^*; g_t) + C\sqrt{2nT}.
  \]
  In each round $t$, if an inequality system of $\conv(\argmin g_t)$ can be obtained in $\Tineq$ time (as in \cref{assump:inequality-system}) and $p^*_t \in \argmin g_t$ is given, OGD takes $\Tineq + \Ord(n^2)$ time.
\end{restatable}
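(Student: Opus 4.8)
The plan is to treat the online minimization of $\mubar(\cdot; g_t)$ over the box ${[-C,+C]}^V$ as online convex optimization and invoke the standard projected-OGD regret bound; the only problem-specific facts needed are that each $\mubar(\cdot; g_t)$ is convex and that a subgradient of $\ell_2$-norm at most $\sqrt{2}$ can be computed in $\Tineq + \Ord(n^2)$ time.

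First I would record the structural facts. By \cref{proposition:l-convex-set} together with \cref{assump:non-empty-argmin}, $\conv(\argmin g_t)$ is a nonempty closed polyhedron, and $\mubar(p; g_t) = \min\Set*{\norm{q - p}_\infty^\pm}{q \in \conv(\argmin g_t)}$ is the distance from $p$ to this set in the norm $\norm{\cdot}_\infty^\pm$. A norm-distance to a nonempty convex set is a finite convex function on $\R^V$, hence subdifferentiable everywhere, and since $\norm{\cdot}_\infty^\pm$ is coercive, a minimizer $q^* = q^*(p) \in \conv(\argmin g_t)$ is attained; thus OGD is well defined on the functions $\mubar(\cdot; g_t)$.

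The heart of the argument is bounding a subgradient. If $p \in \conv(\argmin g_t)$ then $\mubar(p; g_t) = 0$ is a minimum, so $0 \in \partial\mubar(p; g_t)$; otherwise standard subdifferential calculus (Danskin's theorem, cf.\ \cref{subsec:subgradient}) gives $\partial\mubar(p; g_t) \subseteq \partial\norm{\cdot}_\infty^\pm(p - q^*)$. Writing $\norm{r}_\infty^\pm = \max\{0, \max_i r_i\} + \max\{0, \max_i(-r_i)\}$ as a sum of two max-of-linear functions, any element of $\partial\norm{\cdot}_\infty^\pm(r)$ at $r \neq \zeros$ can be written as $\lambda - \nu$ with $\lambda, \nu \ge \zeros$ and $\norm{\lambda}_1 \le 1$, $\norm{\nu}_1 \le 1$; moreover, since the first term is active only on coordinates with $r_i > 0$ and the second only on coordinates with $r_i < 0$, one may choose $\lambda$ and $\nu$ with disjoint supports, whence $\norm{\lambda - \nu}_2^2 = \norm{\lambda}_2^2 + \norm{\nu}_2^2 \le \norm{\lambda}_1^2 + \norm{\nu}_1^2 \le 2$. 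So the subgradient used, call it $v_t$, satisfies $\norm{v_t}_2 \le \sqrt{2}$. Its explicit computation --- reducing the $\norm{\cdot}_\infty^\pm$-projection onto $\conv(\argmin g_t)$ to a shortest-path problem and differentiating via Danskin's theorem --- is carried out in \cref{subsec:subgradient}; it first needs an inequality system of the form \eqref{eq:inequality-system-z} for $\conv(\argmin g_t)$, which, given the minimizer $p_t^* \in \argmin g_t$, is produced in $\Tineq$ time by \cref{assump:inequality-system} (see \cref{sec:minimizer-set}), and then runs in $\Ord(n^2)$ time.

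Finally I would run projected OGD over ${[-C,+C]}^V$ started at $\phat_1 = \zeros$ with the fixed step size $\eta = C\sqrt{n/(2T)}$, where projection is coordinatewise clipping into $[-C,+C]$. For any $\phat^* \in {[-C,+C]}^V$ the standard projected-OGD inequality gives
\[
  \sum_{t=1}^T \mubar(\phat_t; g_t) - \sum_{t=1}^T \mubar(\phat^*; g_t) \le \frac{\norm{\phat_1 - \phat^*}_2^2}{2\eta} + \frac{\eta}{2}\sum_{t=1}^T \norm{v_t}_2^2 .
\]
Because $\phat_1 = \zeros$ forces $\norm{\phat_1 - \phat^*}_2 \le C\sqrt{n}$ and the subgradient bound above gives $\norm{v_t}_2 \le \sqrt{2}$, the right-hand side is at most $\frac{C^2 n}{2\eta} + \eta T = C\sqrt{2nT}$ for the chosen $\eta$; minimizing over $\phat^*$ yields the stated bound. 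Per round, the work is $\Tineq$ (inequality system) $+\,\Ord(n^2)$ (subgradient) $+\,\Ord(n)$ (gradient step and clipping), i.e.\ $\Tineq + \Ord(n^2)$. The main obstacle is the subgradient step: both halves of the theorem hinge on it --- the sharp $\sqrt{2}$ (rather than a cruder constant obtained from the $1$-Lipschitzness of $\mubar$ alone) comes precisely from the disjoint-support structure of $\partial\norm{\cdot}_\infty^\pm$, and the $\Ord(n^2)$ term is exactly the shortest-path cost --- with the supporting details deferred to \cref{subsec:subgradient,sec:minimizer-set}.
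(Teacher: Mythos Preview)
Your proof is correct and follows the same overall architecture as the paper: establish convexity of $\mubar(\cdot;g_t)$, bound the $\ell_2$-norm of a subgradient by $\sqrt{2}$, and invoke the standard projected-OGD regret bound with $\phat_1=\zeros$ and $\eta=C\sqrt{n/(2T)}$; the per-round cost is then $\Tineq$ for the inequality system plus $\Ord(n^2)$ for the shortest-path subgradient.

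The one genuine difference is how the $\sqrt{2}$ bound is obtained. The paper derives it \emph{a posteriori} from the explicit form of the shortest-path subgradient in~\eqref{eq:subgradient}: the vector has at most one $+1$ and one $-1$, hence $\norm{z}_2\le\sqrt{2}$. You instead argue \emph{a priori} via the containment $\partial\mubar(p;g_t)\subseteq\partial\norm{\cdot}_\infty^\pm(p-q^*)$ and the disjoint-support structure of $\partial\norm{\cdot}_\infty^\pm$. Your route is cleaner conceptually---it shows every subgradient of $\mubar$ has norm at most $\sqrt{2}$, not just the computed one, and it decouples the norm bound from the shortest-path machinery---whereas the paper's route falls out for free once the computation is done. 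Two small remarks: the containment you invoke is not literally Danskin's theorem (which the paper applies to the max-over-paths formulation~\eqref{eq:mu-is-linmax}) but the elementary fact that $d_S(p')\le\norm{p'-q^*}$ with equality at $p$; and the optimal $p_t^*$ is used in the paper not to build the inequality system but as a potential to make edge weights nonnegative so that Dijkstra applies (\cref{asec:potentials}).
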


Note that the regret bound in terms of $\mubar(\phat; g_t)$ is the main difference from the previous studies,
which consider simpler functions of the form $\norm{p^*_t - \phat}$ with some fixed optimal $p^*_t$.
Our regret bound is as small as that of \citep{Sakaue2022-jr} even though we consider more involved functions, $\mubar(\cdot; g_t)$, and is indeed asymptotically tight as shown in \cref{asec:lower-bound}.

We show in \Cref{subsec:general-ineq-method} that $\Tineq$ is polynomial even when we only have black-box access to $g_t$.
Moreover, \cref{subsec:efficient-ineq-method} shows that $\Tineq$ can be much smaller for the specific problems listed in \cref{table:results}.
The assumption that $p^*_t \in \argmin g_t$ is available usually holds since we learn $\phat_t$ after solving the $t$th instance, $\min_{p\in \Z^V} g_t(p)$.
(If not, we may solve the $t$th instance with standard polynomial algorithms; then OGD runs in polynomial time.)
In the bipartite-matching case, under those assumptions, OGD will turn out to take only $\Ord(m + n \log n)$ time per round (see \cref{subsec:efficient-ineq-method}), which is even faster than a single local optimization step in \cref{alg:steepest-descent}, or the $\Ord(m\sqrt{n})$-time Hopcroft--Karp algorithm.
Therefore, although our learning method is generally slower than the previous ones, it is usually not a serious drawback.


Given \cref{theorem:learning}, we can obtain a sample complexity bound via online-to-batch conversion. 
The proof is almost identical to those of \citep{Khodak2022-sf,Sakaue2022-jr} and thus deferred to \cref{asec:proof-of-cor-sample}.

\begin{restatable}{corollary}{sample}\label{cor:sample}
  Let $\Dcal$ be an (unknown) distribution over L-/\Ln-convex functions $g:\Z^V \to \R\cup\set*{+\infty}$, $\delta \in (0,1]$, and $\varepsilon >0$.
  Given $T = \Omega\prn*{\prn*{\frac{C}{\varepsilon}}^2 \prn*{n + \log \frac{1}{\delta}}}$ i.i.d.\ draws of $g_1,\dots,g_T \sim \Dcal$, we can obtain $\phat \in \R^V$ that satisfies
	\[
		\E_{g \sim \Dcal} \brc*{\mubar(\phat; g)} \le \min_{\phat^* \in {[-C, +C]}^V} \E_{g \sim \Dcal} \brc*{\mubar(\phat^*; g)} + \varepsilon
	\]
  with probability at least $1-\delta$.
  Under the assumptions of \cref{theorem:learning}, we can compute $\phat$ in $\Ord(T \cdot (\Tineq + n^2))$ time.
\end{restatable}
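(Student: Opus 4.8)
The plan is to carry out a standard online-to-batch conversion of the regret bound in \cref{theorem:learning}, following \citep{Khodak2022-sf,Sakaue2022-jr}. First I would run OGD on the i.i.d.\ sequence $g_1,\dots,g_T\sim\Dcal$ to produce iterates $\phat_1,\dots,\phat_T \in [-C,+C]^V$, and output the averaged prediction $\phat = \frac{1}{T}\sum_{t=1}^T \phat_t$. Two structural facts about the per-round loss $\mubar(\cdot; g)$ drive the argument: for each fixed $g$ it is \emph{convex}, being the $\norm{\cdot}_\infty^\pm$-distance from $p$ to the convex set $\conv(\argmin g)$, and it is \emph{$1$-Lipschitz} with respect to the norm $\norm{\cdot}_\infty^\pm$. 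Writing $L(\phat) \coloneqq \E_{g\sim\Dcal}\brc*{\mubar(\phat; g)}$ and fixing a population minimizer $\phat^{**} \in \argmin_{\phat^*\in[-C,+C]^V} L(\phat^*)$, convexity together with Jensen's inequality gives $L(\phat) \le \frac{1}{T}\sum_{t=1}^T L(\phat_t)$ pointwise, so it suffices to control the average population loss of the iterates.

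The key step, and the main obstacle, is a martingale concentration that replaces the empirical losses by their population counterparts. Since $\phat_t$ depends only on $g_1,\dots,g_{t-1}$ while $g_t\sim\Dcal$ is drawn independently, the sequence $W_t \coloneqq \brc*{L(\phat_t)-L(\phat^{**})} - \brc*{\mubar(\phat_t; g_t)-\mubar(\phat^{**}; g_t)}$ is a martingale difference sequence for the filtration $\Fcal_{t-1}=\sigma(g_1,\dots,g_{t-1})$, i.e., $\E\brc*{W_t \mid \Fcal_{t-1}}=0$. The subtlety is that $\mubar(\phat_t; g_t)$ itself need not be bounded, since $\conv(\argmin g_t)$ can lie far from the box $[-C,+C]^V$, so naive Hoeffding on the loss fails. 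I would resolve this by concentrating the centered \emph{difference} instead: by $1$-Lipschitzness and $\phat_t,\phat^{**}\in[-C,+C]^V$, we have $\abs{\mubar(\phat_t; g_t)-\mubar(\phat^{**}; g_t)} \le \norm{\phat_t-\phat^{**}}_\infty^\pm \le 4C$, and the same bound survives taking expectations, so $\abs{W_t}\le 8C$. Azuma--Hoeffding then yields, with probability at least $1-\delta$, the bound $\sum_{t=1}^T W_t \le 8C\sqrt{2T\log(1/\delta)}$.

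Combining the pieces finishes the generalization bound. Adding $\sum_t W_t$ to the regret inequality $\sum_t \mubar(\phat_t; g_t) \le \sum_t \mubar(\phat^{**}; g_t) + C\sqrt{2nT}$ from \cref{theorem:learning}, instantiated with the comparator $\phat^{**}\in[-C,+C]^V$, cancels the empirical terms and gives $\sum_t \brc*{L(\phat_t)-L(\phat^{**})} \le C\sqrt{2nT} + 8C\sqrt{2T\log(1/\delta)}$ with probability at least $1-\delta$. Dividing by $T$ and using $L(\phat)\le\frac{1}{T}\sum_t L(\phat_t)$ yields $L(\phat) \le L(\phat^{**}) + \Ord\!\left(C\sqrt{(n+\log(1/\delta))/T}\right)$, and the stated choice $T=\Omega\!\left((C/\varepsilon)^2(n+\log(1/\delta))\right)$ drives the error term below $\varepsilon$, giving exactly the claimed inequality against $\min_{\phat^*\in[-C,+C]^V} \E_{g\sim\Dcal}\brc*{\mubar(\phat^*; g)}$.

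For the runtime claim, I would simply account for the cost of the procedure above. By \cref{theorem:learning}, each of the $T$ OGD rounds takes $\Tineq+\Ord(n^2)$ time, and forming the average $\phat=\frac{1}{T}\sum_t \phat_t$ costs $\Ord(Tn)$, which is dominated, so the total is $\Ord\!\left(T\cdot(\Tineq+n^2)\right)$ as stated. I expect the only genuinely delicate point to be the bounded-difference argument of the second paragraph; the remaining steps are routine once the convexity and Lipschitz properties of $\mubar(\cdot;g)$ are recorded.
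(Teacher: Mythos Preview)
Your argument is correct, and it takes a genuinely different route from the paper. The paper does not run the OGD of \cref{theorem:learning} verbatim; it instead invokes the \emph{anytime} online-to-batch scheme of \citet{Cutkosky2019-ik}: it computes the subgradient $z_t\in\partial\mubar(q_t;g_t)$ at the running average $q_t=\frac{1}{t}\sum_{t'\le t}\phat_{t'}$, feeds the linearized loss $f_t(\phat)=\iprod{z_t,\phat}$ to OGD, and then applies \citep[Theorem~1]{Cutkosky2019-ik} with $\norm{z_t}_1\le 2$ and the $\ell_\infty$-diameter $2C$ to obtain the same $\frac{C}{\sqrt{T}}\bigl(\sqrt{2n}+8\sqrt{\log(2/\delta)}\bigr)$-type bound for $\phat=q_T$. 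Your approach is more elementary in that it uses OGD exactly as stated in \cref{theorem:learning} without the $q_t$-modification, and replaces the black-box appeal to Cutkosky's theorem with a direct Azuma--Hoeffding argument; your key observation---that although $\mubar(\phat_t;g_t)$ may be unbounded, the \emph{difference} $\mubar(\phat_t;g_t)-\mubar(\phat^{**};g_t)$ is bounded by $\norm{\phat_t-\phat^{**}}_\infty^\pm\le 4C$ via $1$-Lipschitzness of the $\ell_\infty^\pm$-distance---is exactly what the paper's route avoids by working with bounded subgradients instead. Both yield the same sample complexity and runtime; the paper's choice is consistent with the anytime scheme used in their experiments (\cref{asec:experiment}), while yours is self-contained and does not alter the algorithm being analyzed.
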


\begin{remark}\label{rem:C}
  We can bound the constant, $C$, with parameters of minimization instances.
  In the bipartite-matching case, we have $C = nW$ if edge weights are always in $[-W, +W]$.
  See \citep[Section 4]{Sakaue2022-jr} for more information.
\end{remark}


\subsection{Basics of Online Gradient Descent}\label{sec:regred-bound}
We regard $f_t(\phat) = \mubar(\phat; g_t)$ as the $t$th loss for $t = 1,\dots,T$ and use the following standard OGD:
starting from $\phat_1 = \zeros$, in each $t$th round, play $\phat_t$, observe $f_t$, compute $z_t \in \partial f_t(\phat_t)$, and set $\phat_{t+1} \gets \Pi_C(\phat_{t} - \eta z_t)$, where $\eta>0$ is a learning rate and $\Pi_C$ is the $\ell_2$-projection onto $[-C,+C]^V$.
This OGD enjoys the following regret bound.

\begin{proposition}[{\citet[Section 2.2]{Orabona2020-dx}}]\label{prop:ogd}
  Let $C>0$ and $f_1,\dots,f_T$ be an arbitrary sequence of convex functions from $\R^V$ to $\R$.
  If OGD uses subgradients $z_t \in \partial f_t(\phat_t)$ such that $\norm{z_t}_2 \le L$ for $t=1,\dots,T$ and a learning rate of $\eta = \frac{C}{L}\sqrt{\frac{n}{T}}$, it returns $\phat_1,\dots,\phat_T$ satisfying
    \[
      \sum_{t=1}^T f_t(\phat_t) \le \min_{\phat^* \in {[-C, +C]}^V} \sum_{t=1}^T f_t(\phat^*) + CL\sqrt{nT}.
    \]
\end{proposition}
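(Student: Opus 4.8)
The plan is to establish this standard regret bound for projected online gradient descent via a telescoping potential argument, using the squared $\ell_2$-distance $\norm{\phat_t - \phat^*}_2^2$ to an arbitrary fixed comparator $\phat^* \in [-C,+C]^V$ as the potential. First I would reduce the regret in terms of the $f_t$ to a \emph{linear} regret: by convexity of each $f_t$ and the subgradient inequality $f_t(\phat^*) \ge f_t(\phat_t) + \iprod{z_t, \phat^* - \phat_t}$, we get $f_t(\phat_t) - f_t(\phat^*) \le \iprod{z_t, \phat_t - \phat^*}$, so it suffices to bound $\sum_{t=1}^T \iprod{z_t, \phat_t - \phat^*}$.

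Next I would control the per-round change of the potential. Writing $\phat_{t+1} = \Pi_C(\phat_t - \eta z_t)$ and using $\phat^* \in [-C,+C]^V$, the nonexpansiveness of the Euclidean projection onto the convex set $[-C,+C]^V$ gives $\norm{\phat_{t+1} - \phat^*}_2 \le \norm{\phat_t - \eta z_t - \phat^*}_2$. Expanding the right-hand side and rearranging yields $\iprod{z_t, \phat_t - \phat^*} \le \frac{1}{2\eta}\prn*{\norm{\phat_t - \phat^*}_2^2 - \norm{\phat_{t+1} - \phat^*}_2^2} + \frac{\eta}{2}\norm{z_t}_2^2$. Summing over $t$ telescopes the first group of terms to $\frac{1}{2\eta}\prn*{\norm{\phat_1 - \phat^*}_2^2 - \norm{\phat_{T+1} - \phat^*}_2^2}$, which I would upper bound by $\frac{1}{2\eta}\norm{\phat^*}_2^2 \le \frac{nC^2}{2\eta}$ using $\phat_1 = \zeros$, dropping the nonnegative final term, and $\phat^* \in [-C,+C]^V$ together with $|V| = n$. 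The gradient term is bounded via $\norm{z_t}_2 \le L$ as $\frac{\eta}{2}\sum_{t=1}^T \norm{z_t}_2^2 \le \frac{\eta T L^2}{2}$.

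Combining these gives $\sum_{t=1}^T \iprod{z_t, \phat_t - \phat^*} \le \frac{nC^2}{2\eta} + \frac{\eta T L^2}{2}$, and substituting the prescribed step size $\eta = \frac{C}{L}\sqrt{\frac{n}{T}}$ makes the two terms equal, each $\frac{CL}{2}\sqrt{nT}$, for a total of $CL\sqrt{nT}$. Since the bound holds for every fixed $\phat^* \in [-C,+C]^V$, it holds in particular for the minimizer, yielding the claimed regret. The only genuinely nontrivial ingredient is the projection nonexpansiveness step, which I expect to be the main obstacle to state cleanly; it follows from the first-order optimality condition for the Euclidean projection onto the convex set $[-C,+C]^V$ (equivalently, the obtuse-angle/Pythagorean inequality $\iprod{\phat_t - \eta z_t - \phat_{t+1}, \phat^* - \phat_{t+1}} \le 0$). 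The remainder is elementary algebra, and the step size is exactly the value that balances the two error terms.
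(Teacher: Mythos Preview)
Your argument is correct and is exactly the standard telescoping-potential proof of the projected OGD regret bound. Note that the paper itself does not prove this proposition; it is quoted from \citet[Section~2.2]{Orabona2020-dx} as a known result, and the proof you outline is essentially the one found there.
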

We prove \cref{theorem:learning} building on this proposition.
First, we confirm the convexity of the loss functions.

\begin{lemma}\label{lem:mubar-is-convex}
  $f_t(\phat) = \mubar(\phat; g_t)$ is convex in $\phat \in \R^V$.
\end{lemma}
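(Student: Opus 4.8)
The plan is to show that $\mubar(\phat; g_t) = \min\Set{\norm{p^* - \phat}_\infty^\pm}{p^* \in \conv(\argmin g_t)}$ is convex in $\phat$ by recognizing it as the \emph{infimal convolution} (equivalently, the distance-to-a-convex-set function) of two convex functions. First I would recall two ingredients. (a) The $\ell_\infty^\pm$-norm satisfies the axioms of a norm, as noted right after its definition in \cref{subsection:preliminaries-steepest-descent}; in particular $p \mapsto \norm{p}_\infty^\pm$ is convex on $\R^V$. (b) The set $\conv(\argmin g_t)$ is convex: by \cref{assump:non-empty-argmin} $\argmin g_t \neq \emptyset$, and $\argmin g_t$ is an L-/\Ln-convex set by \citep[Theorem 7.17]{Murota2003-bq}, so by \cref{proposition:l-convex-set} its convex hull is a (nonempty) convex polyhedron in $\R^V$.

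Next I would write $\mubar(\phat; g_t)$ explicitly as a distance function: letting $\Scal \coloneqq \conv(\argmin g_t)$ and $h(p) \coloneqq \norm{p}_\infty^\pm$,
\[
  \mubar(\phat; g_t) = \inf_{p^* \in \Scal} h(p^* - \phat) = \inf_{p \in \R^V}\set*{h(\phat - p) + \delta_\Scal(p)} = (h \mathbin{\square} \delta_\Scal)(\phat),
\]
where $\square$ denotes infimal convolution and I use that $h$ is even. Since $h$ and $\delta_\Scal$ are both convex (the latter because $\Scal$ is convex and nonempty, so the infimum is a genuine minimum, never $-\infty$ or $+\infty$), their infimal convolution is convex; this is a standard fact in convex analysis (e.g.\ Rockafellar, \emph{Convex Analysis}, Thm.~5.4). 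Alternatively, and perhaps cleaner to state self-containedly, I would argue directly from the definition: fix $\phat, \phat' \in \R^V$ and $\theta \in [0,1]$, pick $p^* \in \Scal$ attaining $\mubar(\phat; g_t)$ and $p'^* \in \Scal$ attaining $\mubar(\phat'; g_t)$ (minima are attained since $\Scal$ is a nonempty closed set and $h$ is coercive); then $\theta p^* + (1-\theta) p'^* \in \Scal$ by convexity of $\Scal$, so
\[
  \mubar(\theta\phat + (1-\theta)\phat'; g_t)
  \le \norm{(\theta p^* + (1-\theta)p'^*) - (\theta\phat + (1-\theta)\phat')}_\infty^\pm
  \le \theta\norm{p^* - \phat}_\infty^\pm + (1-\theta)\norm{p'^* - \phat'}_\infty^\pm,
\]
using the triangle inequality and homogeneity of $\norm{\cdot}_\infty^\pm$, and the right-hand side equals $\theta\,\mubar(\phat; g_t) + (1-\theta)\,\mubar(\phat'; g_t)$.

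There is no real obstacle here; the only point requiring a word of care is the nonemptiness of $\conv(\argmin g_t)$ (so that $\mubar$ is finite-valued and the infimum is not over the empty set), which is exactly what \cref{assump:non-empty-argmin} guarantees, and the fact that the defining minimum is actually attained (needed for the direct argument to pick $p^*, p'^*$), which follows from $\Scal$ being nonempty and closed together with $\norm{\cdot}_\infty^\pm$ having bounded sublevel sets. I would present the direct computation as the proof since it is short, elementary, and avoids invoking infimal-convolution machinery.
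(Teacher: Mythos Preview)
Your proposal is correct and your infimal-convolution argument is essentially identical to the paper's own proof, which also writes $f_t(\phat) = \inf_{p^*\in\R^V}\set*{\norm{p^* - \phat}_\infty^\pm + \delta_S(p^*)}$ with $S = \conv(\argmin g_t)$ and invokes \citep[Theorem~5.4]{Rockafellar1970-sk}. The direct verification you offer as an alternative is fine too, though the paper does not include it; the paper stops at the infimal-convolution observation.
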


\begin{proof}
  Let $S = \conv(\argmin g_t)$.
  We can rewrite $f_t(\phat) = \mubar(\phat; g_t)  = \min\Set*{\norm{p^* - \phat}_\infty^\pm}{p^*\in S}$ as
  \begin{align*}
    f_t(\phat)
    &= \inf\Set*{\norm{p^* - \phat}_\infty^\pm + \delta_{S}(p^*)}{p^*\in\R^V},
  \end{align*}
  where $\delta_{S}:\R^V \to \set*{0, +\infty}$ is the indicator function of a convex set $S$.
  Also, $\norm{\cdot}^\pm_\infty$ is convex by the triangle inequality.
  Thus, $f_t(\phat)$ is the infimal convolution of convex functions, hence convex \citep[Theorem 5.4]{Rockafellar1970-sk}.
\end{proof}

The following section completes the proof of \cref{theorem:learning} by presenting how to compute a subgradient $z_t$ of $f_t(\phat_t) = \mubar(\phat_t; g_t)$ such that $\norm{z_t}_2 \le L = \sqrt{2}$ in $\Tineq + \Ord(n^2)$ time.

\subsection{Computation of Subgradients}\label{subsec:subgradient}
We below omit $t$ and let, e.g., $g = g_t$ and $\phat = \phat_t$ for brevity since this section focuses only on the $t$th round.

First, we detail the assumption in \cref{theorem:learning}.
Recall that $\argmin g$ is an L-/\Ln-convex set due to \citep[Theorem~7.17]{Murota2003-bq}.
Therefore, $\conv(\argmin g)$ has an inequality-system representation as in \cref{proposition:l-convex-set}.
In this section, we assume one such inequality system to be available.
\begin{assumption}\label{assump:inequality-system}
  We can obtain an inequality-system representation of $\conv(\argmin g) \subseteq \R^V$ of the form
  \begin{align}\label{eq:inequality-system}
    \Set*{p \in \R^V}{\begin{alignedat}{2}
      &\text{$\alpha_i \le p_i \le \beta_i$ for $i \in V$}, \\
      &\text{$p_j - p_i \le \gamma_{ij}$ for distinct $i, j \in V$}
    \end{alignedat}}
  \end{align}
  in $\Tineq$ time, where $- \alpha_i, \beta_i, \gamma_{ij} \in \Z \cup \set{+\infty}$.
\end{assumption}

\begin{remark}
  Although inequality-system representations of $\conv(\argmin g)$ are not unique, whichever of the form~\eqref{eq:inequality-system} works in the following discussion.
  If an inequality system at hand lacks inequalities for some $i, j \in V$, we suppose those with
  $\alpha_i = -\infty$,
  $\beta_i = +\infty$, and
  $\gamma_{ij} = +\infty$
  to be given; we always apply this treatment to all $\alpha_i$ and $\beta_i$ if $g$ is L-convex since $\argmin g$ has no box constraints (see \cref{proposition:l-convex-set}).
\end{remark}

We then observe that computing the value of $\mubar(\phat; g)$ for any given $\phat \in \R^V$ can be reduced to a shortest path problem in a directed graph with possibly negative weights.
Since the reduction is presented in \citep[Appendix~D]{Sakaue2022-jr}, we here only give a brief description for later convenience.

Let $E = \Set*{ij}{i,j\in V; i\neq j}$ and $V_0 = \set{0} \cup V$.
We use $\Vtl = V_0 \cup \set*{s, t}$ as a vertex set, where $s$ is the origin and $t$ is the destination.
We define a set $\Etl$ of directed edges as
\[
  E\cup\set*{\set{0}\times V} \cup \set*{V\times\set{0}} \cup \set*{\set*{s} \times V_0} \cup \set*{V_0 \times \set*{t}}.
\]
Given any $\phat \in \R^V$, we define weights of edges $ij \in \Etl$ as
\begin{align}\label{eq:wtl}
  \wtl_{ij}(\phat) =
  \begin{cases}
    \gamma_{ij} - \phat_j + \phat_i & \text{if $ij \in E$,}
    \\
    -\alpha_i + \phat_i & \text{if $i \in V$ and $j = 0$,}
    \\
    \beta_j - \phat_j & \text{if $i = 0$ and $j \in V$,}
    \\
    0 & \text{if $i = s$ or $j = t$,}
  \end{cases}
\end{align}
where $\alpha_i$, $\beta_j$, $\gamma_{ij}$ are those representing $\conv(\argmin g)$ as in \eqref{eq:inequality-system}.
We take $ij \in \Etl$ to be removed if $\wtl_{ij} = +\infty$.

\begin{figure}
	\centering
	\begin{minipage}[b]{.4\linewidth}
			\centering
			\includegraphics[width=1.0\linewidth]{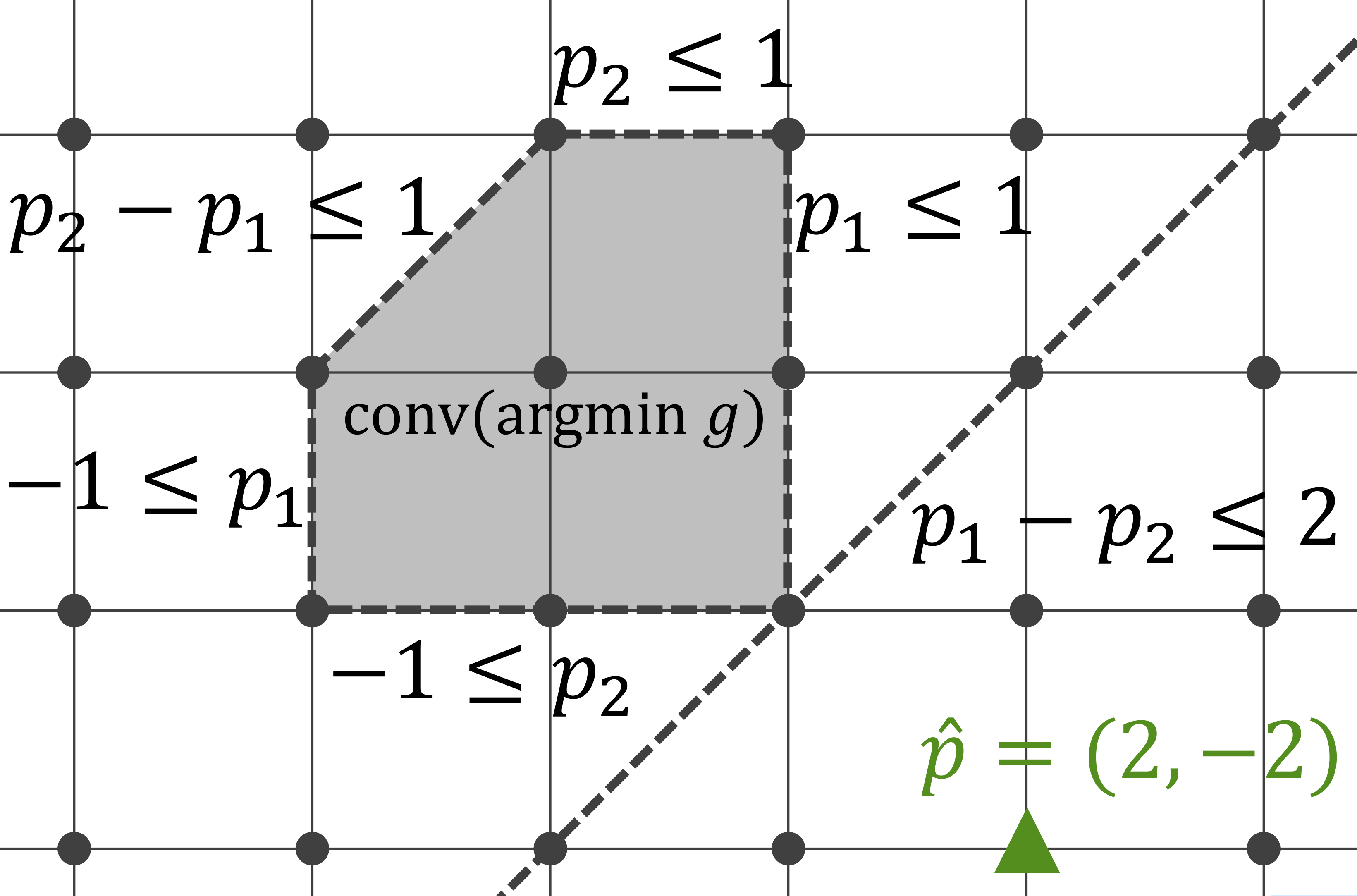}
			\subcaption{$\conv(\argmin g)$}\label{subfig:ln-convex-set}
	\end{minipage}
	\begin{minipage}[b]{.4\linewidth}
			\centering
			\includegraphics[width=1.0\linewidth]{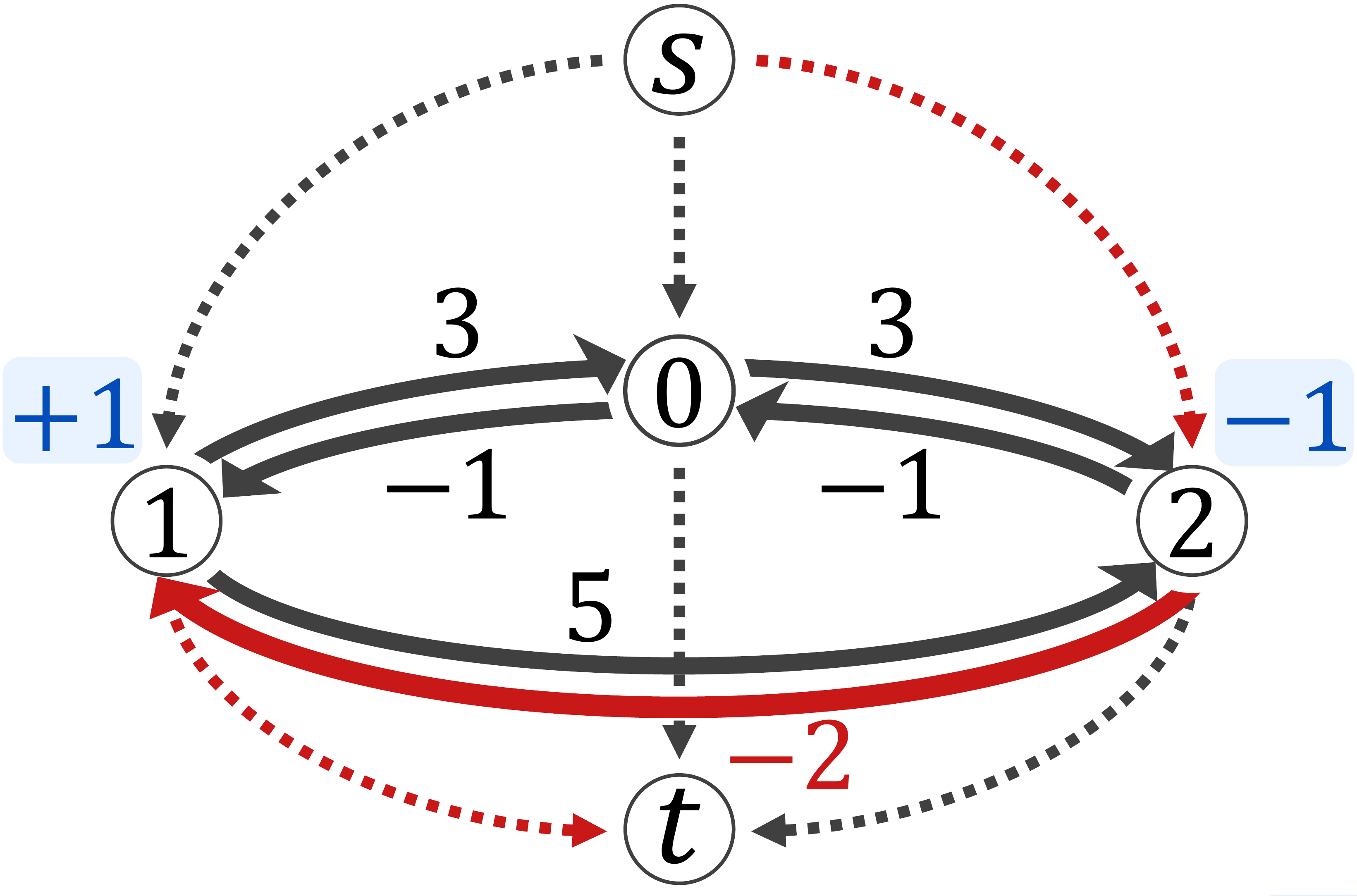}
			\subcaption{$(\Vtl, \Etl)$ and a shortest path}\label{subfig:shortest-path}
	\end{minipage}
	\caption{
    If $\conv(\argmin g)$ is given by inequalities in (a) and $\phat = (2, -2)$, we can compute $\mubar(\phat; g)$ by solving the shortest path problem in $(\Vtl, \Etl)$ as in (b), where weights of dashed edges are zero and the others have weights $\wtl_{ij}(\phat)$ shown nearby edges.
    A shortest path $\stpath^* = \set*{s2, 21, 1t}$ is shown in red, and the negative of its total weight is equal to $\mubar(\phat; g) = 2$.
    We can obtain a subgradient, $-\nabla \phi(\phat; \stpath^*) = (+1, -1)$, as shown in blue in (b).
    If we replace a redundant inequality constraint, $p_1 - p_2 \le 2$, in (a) with $p_1 - p_2 \le +\infty$, the edge from $2$ to $1$ is removed in (b); still, the other shortest path, $\set*{s2, 20, 01, 1t}$, yields the same subgradient.
	}
  \label{fig:reduction}
\end{figure}

Note that the negative weights, $-\wtl_{ij}(\phat)$, for $ij \in V_0 \times V_0$ indicate how much $\phat$ violates the corresponding inequalities in~\eqref{eq:inequality-system} representing $\conv(\argmin g)$.
From this fact, we can show that the negative of the total weight of a shortest $s$--$t$ path in $(\Vtl, \Etl)$ is equal to $\mubar(\phat; g)$, or how far $\phat$ is from $\conv(\argmin g)$ in terms of the $\ell^\pm_\infty$-norm (see \citep[Appendix~D]{Sakaue2022-jr}).
\Cref{fig:reduction} illustrates an example of $\conv(\argmin g)$ and the shortest path problem for computing $\mubar(\phat; g)$.
Note that $(\Vtl, \Etl)$ has no negative cycles; otherwise, the shortest-path weight is $-\infty$, hence $\mubar(\phat; g) = +\infty$, contradicting $\argmin g \ne \emptyset$ (\cref{assump:non-empty-argmin}).
Also, the shortest-path weight is always non-positive since there always exist zero-weight $s$--$t$ paths $\set{si, it}$ for $i \in V_0$.


We then rewrite $\mubar(\phat; g)$ keeping the reduction to the shortest path problem in mind.
Let $\Pcal \subseteq 2^{\Etl}$ be the set of all simple $s$--$t$ paths.
For each $\stpath \in \Pcal$, define $\phi(\cdot; \stpath):\R^V \to \R$ by
\begin{align}\label{eq:wtl-sum}
  \phi(\phat; \stpath) \coloneqq \sum_{ij \in \stpath} \wtl_{ij}(\phat),
\end{align}
which equals the total weight of an $s$--$t$ path $\stpath$.
Since $\mubar(\phat; g)$ is the negative of the total weight of a shortest path, we have
\begin{align}\label{eq:mu-is-linmax}
  \mubar(\phat; g)
  &= \max\Set*{-\phi(\phat; \stpath)}{\stpath \in \Pcal}.
\end{align}

Since each $-\phi(\phat; \stpath)$ is linear in $\phat$ by \eqref{eq:wtl} and \eqref{eq:wtl-sum}, and $\Pcal$ is finite (hence compact), Danskin's theorem \citep{Danskin1966-gb} (see, also \citep[Proposition B.22]{Bertsekas2016-qi}) implies
\[
  \partial \mubar(\phat; g) = \conv\Set*{-\nabla \phi(\phat; \stpath^*)}{\stpath^* \in \Pcal(\phat)},
\]
where $\Pcal(\phat) \coloneqq \argmax\Set*{-\phi(\phat; \stpath)}{\stpath \in \Pcal}$ is the set of all the shortest $s$--$t$ paths when $\phat \in \R^V$ is given.
Therefore, we can compute a subgradient of $\mubar(\cdot; g)$ at $\phat$ by finding a shortest $s$--$t$ path $\stpath^* \in \Pcal(\phat)$ and calculating
\[
  -\nabla \phi(\phat; \stpath^*) = - \sum_{ij \in \stpath^*} \nabla \wtl_{ij}(\phat).
\]
We then take a closer look at the subgradient $-\nabla \phi(\phat; \stpath^*)$.
From \eqref{eq:wtl}, each $-\nabla\wtl_{ij}(\phat)$ has at most one $-1$ and one $+1$.
These non-zeros are canceled out by taking the summation along the shortest path $\stpath^*$, except for at most two non-zeros, $-1$ and $+1$, corresponding to the two vertices adjacent to $s$ and $t$ in $\stpath^*$, respectively; if $s$ and/or $t$ are adjacent to $0 \in V_0$, the corresponding non-zeros also vanish.
See \cref{subfig:shortest-path} for an illustration of how $-\nabla \phi(\phat; \stpath^*)$ is calculated.

Formally, if the first and last edges in a shortest path $\stpath^* \in \Pcal(\phat)$ are $si$ and $jt$, respectively, with $i \neq j$, a subgradient $-\nabla \phi(\phat; \stpath^*) \in \partial \mubar(\phat; g)$ can be written as
\begin{align}\label{eq:subgradient}
  \begin{pNiceArray}{ccccc}[first-row]
     & \text{$i$th} &  & \text{$j$th}& \\
    0 \dots 0 & -\ind{i \neq 0} & 0 \dots 0 & \ind{j \neq 0} & 0 \dots 0
  \end{pNiceArray},
\end{align}
where $\ind{k \neq 0} = 1$ if $k \neq 0$ and $0$ otherwise; if $i = j$, the subgradient is zero.
To conclude, we obtain the next lemma.

\begin{lemma}\label{lem:subgradient}
  If an inequality system of $\conv(\argmin g)$ and $p^* \in \argmin g$ are available, we can compute a subgradient $z \in \partial \mubar(\phat; g)$ with $\norm{z}_2\le\sqrt{2}$ in $\Ord(n^2)$ time.
\end{lemma}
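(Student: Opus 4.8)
The discussion preceding the lemma has already done the conceptual work: computing $z \in \partial \mubar(\phat; g)$ reduces to finding one shortest $s$--$t$ path $\stpath^* \in \Pcal(\phat)$ in the digraph $(\Vtl, \Etl)$ and reading off $z = -\nabla \phi(\phat; \stpath^*)$ via \eqref{eq:subgradient}. Correctness ($z \in \partial \mubar(\phat; g)$) is Danskin's theorem as applied above, and the norm bound is immediate from \eqref{eq:subgradient}, which has at most one entry equal to $-1$ and at most one equal to $+1$, so $\norm{z}_2 \le \sqrt{2}$. Thus the only thing left to establish is the $\Ord(n^2)$ time bound. The graph $(\Vtl, \Etl)$ has $\Ord(n)$ vertices and $\Ord(n^2)$ edges, but the weights $\wtl_{ij}(\phat)$ may be negative, so running Bellman--Ford directly costs $\Ord(n \cdot n^2) = \Ord(n^3)$, which is too slow.

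To obtain $\Ord(n^2)$, the plan is to reweight the edges by a potential function (Johnson's trick) and then run Dijkstra. The key observation is that the supplied $p^* \in \argmin g$ is a feasible point of the inequality system \eqref{eq:inequality-system} for $\conv(\argmin g)$, and this is exactly the certificate needed to build a valid potential. Define $h : \Vtl \to \R$ by $h(i) = p^*_i - \phat_i$ for $i \in V$, $h(0) = 0$, $h(s) = \max_{v \in V_0} h(v)$, and $h(t) = \min_{v \in V_0} h(v)$. A one-line check per edge family shows the reduced weights $\wtl_{ij}(\phat) + h(i) - h(j)$ are nonnegative throughout $\Etl$: on an edge $ij \in E$ it equals $\gamma_{ij} + p^*_i - p^*_j \ge 0$ since $p^*_j - p^*_i \le \gamma_{ij}$; on an edge $i0$ it equals $p^*_i - \alpha_i \ge 0$; on an edge $0j$ it equals $\beta_j - p^*_j \ge 0$; and on the $s$- and $t$-incident (zero-weight) edges it is nonnegative by the choice of $h(s)$ and $h(t)$. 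Since $(\Vtl, \Etl)$ has no negative cycle (as already noted, else $\mubar(\phat; g) = +\infty$ contradicting \cref{assump:non-empty-argmin}), the reweighted graph has all weights $\ge 0$, and reweighting does not change which $s$--$t$ path is shortest.

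With nonnegative reduced weights, I would run array-based Dijkstra from $s$; on a graph with $\Ord(n)$ vertices and $\Ord(n^2)$ edges this takes $\Ord(n^2)$ time. Recovering $\stpath^*$ from the resulting shortest-path tree (we need only its first edge $si$ and last edge $jt$, not the value of $\mubar(\phat;g)$ itself) and assembling the subgradient \eqref{eq:subgradient} cost $\Ord(n)$, and computing $h$ and the reduced weights costs $\Ord(n^2)$, so the total is $\Ord(n^2)$. The main obstacle is the one already flagged: the negative weights make the direct shortest-path computation a factor of $n$ too slow, and the resolution is to notice that the optimal solution $p^*$, which the lemma assumes is available, doubles as a feasible point of \eqref{eq:inequality-system} and hence as a potential that makes Dijkstra applicable.
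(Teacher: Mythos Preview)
Your proposal is correct and matches the paper's proof essentially line for line: the paper likewise uses the given $p^* \in \argmin g$ to build the potential $q = p^* - \phat$ on $\Vtl$ (with $q_0 = 0$, $q_s = \max_{j \in V_0}(p^*_j - \phat_j)$, $q_t = \min_{i \in V_0}(p^*_i - \phat_i)$), reweights to nonnegative edge costs, and runs Dijkstra in $\Ord(|\Etl| + |\Vtl|\log|\Vtl|) \lesssim \Ord(n^2)$. The only cosmetic difference is that you invoke array-based Dijkstra for the $\Ord(n^2)$ bound, whereas the paper cites the heap-based bound and absorbs it into $\Ord(n^2)$.
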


\begin{proof}
  Given a shortest $s$--$t$ path $\stpath^* \in \Pcal(\phat)$, we can compute $z \in \partial \mubar(\phat; g)$ as in \eqref{eq:subgradient}, which satisfies $\norm{z}_2 \le \sqrt{2}$.
  To obtain $\stpath^*$, we first transform the possibly negative edge weights \eqref{eq:wtl} into non-negative ones that preserve the shortest-path set $\Pcal(\phat)$ via a \emph{potential} (see \cref{asec:potentials} for details).
  We can do this transformation in $\Ord(|\Etl|)$ time by using an optimal solution $p^* \in \argmin g$, which is assumed to be given in \cref{theorem:learning}.
  Therefore, by finding a shortest $s$--$t$ path $\stpath^*$ with Dijkstra's algorithm in $\Ord(|\Etl| + |\Vtl|\log|\Vtl|) \lesssim \Ord(n^2)$ time, we can compute a subgradient in $\Ord(n^2)$ time.
\end{proof}

We can easily obtain an intuition of the subgradient \eqref{eq:subgradient} when a shortest $s$--$t$ path, $\stpath^* \in \Pcal(\phat)$, is of the form $\set*{si, ij, jt}$.
Such a shortest path implies that a current prediction $\phat$ violates inequality constraint $p_j - p_i \le \gamma_{ij}$ most largely among those in \eqref{eq:inequality-system} that represent $\conv(\argmin g)$.
Updating $\phat$ along the negative direction of the subgradient~\eqref{eq:subgradient} reduces the magnitude of the violation, $\phat_j - \phat_i - \gamma_{ij} >0$, by increasing $\phat_i$ and decreasing $\phat_j$.
Thus, the subgradient descent moves a current prediction closer to $\conv(\argmin g)$.

At a high level, our strategy is to write the distance to the set of optimal solutions as a maximum of linear (or convex) functions, as in \eqref{eq:mu-is-linmax}, and use Danskin's theorem to obtain a subgradient.
Then, we can use OGD to learn predictions close to sets of optimal solutions.
We expect that this simple idea is also useful in other settings, e.g., \citep{Chen2022-li}.

\section{Obtaining Inequality Systems of Minimizers}\label{sec:minimizer-set}
We show how to get an inequality system of $\conv(\argmin g)$ as in \cref{assump:inequality-system}.
\Cref{subsec:efficient-ineq-method} provides efficient methods that utilize problem-specific structures, and \cref{subsec:general-ineq-method} presents a general polynomial-time method that only uses black-box access to $g$, implying $\Tineq$ is at most polynomial.

\subsection{Efficient Problem-Specific Methods}\label{subsec:efficient-ineq-method}
We can efficiently construct a desired inequality system if the primal-dual structure of the problem, $\min_{p \in \Z^V} g(p)$, is available.
We detail this method for bipartite matching.

We consider the weighted perfect bipartite matching problem introduced in \cref{sec:introduction}.
Let $(V, E)$ be a bipartite graph with equal-sized bipartition $V = L \cup R$, weights $w \in \Z^E$, $n = |V|$, and $m = |E|$.
Recall that we can write the dual LP as in~\eqref{prob:matching-dual} with constraints $s_i - t_j \ge w_{ij}$ for $ij \in E$.
The following complementarity theorem gives a useful characterization of the set of dual optimal solutions.

\begin{proposition}[{\citet[Proposition~2.3]{Murota1995-bo}}]\label{thm:matching-complementary}
  Let $M \subseteq E$ be a matching in $(V, E)$ and $p = (s, t) \in \Z^{L\cup R}$ a dual feasible solution to \eqref{prob:matching-dual}.
  Then, $M$ and $p$ are optimal if and only if $s_i - t_j = w_{ij}$ for all $ij \in M$.
\end{proposition}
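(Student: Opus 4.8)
The plan is to recognize \cref{thm:matching-complementary} as the specialization of LP complementary slackness to the bipartite perfect matching LP and its dual \eqref{prob:matching-dual}; the whole statement will then follow from a one-line ``weak duality'' identity together with the integrality of the bipartite perfect matching polytope. Throughout I treat $M$ as a perfect matching, since otherwise it cannot be optimal for the maximum-weight perfect matching problem (and ``optimal'' in the statement should be read as including this).

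First I would record the following identity: for the given perfect matching $M$ and any dual feasible $p = (s,t)$, writing $w(M) = \sum_{ij\in M} w_{ij}$,
\[
  \Bigl(\sum_{i\in L} s_i - \sum_{j\in R} t_j\Bigr) - w(M) \;=\; \sum_{ij\in M}(s_i - t_j - w_{ij}) \;\ge\; 0,
\]
where the equality uses that a perfect matching hits every $i\in L$ and every $j\in R$ exactly once, and the inequality is dual feasibility. Since the right-hand side is a sum of nonnegative terms, it vanishes \emph{if and only if} $s_i - t_j = w_{ij}$ for every $ij\in M$; equivalently, the complementarity condition holds iff the dual objective at $p$ equals $w(M)$. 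The same identity applied to an arbitrary perfect matching $M'$ and an arbitrary dual feasible $p$ also yields the plain weak-duality bound $w(M') \le \sum_{i\in L} s_i - \sum_{j\in R} t_j$.

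For the ``if'' direction, suppose $s_i - t_j = w_{ij}$ for all $ij\in M$. The identity gives $w(M) = \sum_{i\in L} s_i - \sum_{j\in R} t_j$, and weak duality then sandwiches everything: every perfect matching $M'$ satisfies $w(M') \le \sum_{i\in L} s_i - \sum_{j\in R} t_j = w(M)$, so $M$ is maximum-weight; and every dual feasible $p'$ has objective at least $w(M)$, which $p$ attains, so $p$ is dual-optimal.

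For the ``only if'' direction I would invoke strong LP duality. The maximum-weight perfect matching value equals the optimum of the LP relaxation over the bipartite perfect matching polytope, because the bipartite incidence matrix is totally unimodular (equivalently, by Birkhoff--von Neumann, the vertices of the assignment polytope are exactly the perfect matchings), and the LP dual of that relaxation is precisely \eqref{prob:matching-dual}. Hence, if $M$ is a maximum-weight perfect matching and $p$ is an optimal dual solution, strong duality forces $\sum_{i\in L} s_i - \sum_{j\in R} t_j = w(M)$, so the right-hand side of the identity is $0$, i.e.\ $s_i - t_j = w_{ij}$ for all $ij\in M$. The only non-elementary ingredient is this integrality/strong-duality step, which I expect to be the crux and would simply cite rather than reprove; everything else is the weak-duality identity and a sandwich argument.
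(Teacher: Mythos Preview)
The paper does not prove this proposition; it is quoted directly from \citet[Proposition~2.3]{Murota1995-bo} and used as a black box to build the inequality-system representation of $\conv(\argmin g)$ in \cref{subsec:efficient-ineq-method}. Your argument---the weak-duality identity $\sum_{i\in L}s_i-\sum_{j\in R}t_j - w(M)=\sum_{ij\in M}(s_i-t_j-w_{ij})\ge 0$ for a perfect matching $M$, followed by the sandwich for the ``if'' direction and LP strong duality (via total unimodularity of the bipartite incidence matrix, equivalently Birkhoff--von~Neumann) for the ``only if'' direction---is the standard and correct proof of this complementary-slackness characterization. Your reading of ``optimal'' as implicitly forcing $M$ to be perfect is the right one in this context, since the primal is the maximum-weight \emph{perfect} matching problem.
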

This proposition implies that given an arbitrary maximum weight matching $M^* \subseteq E$, we can represent the set of dual optimal solutions, $\argmin g$, by an inequality system as
\[
  \Set*{p = (s, t) \in \Z^{L \cup R}}{\begin{alignedat}{2}
    &\text{$s_i - t_j \ge w_{ij}$ for $ij \in E$}, \\
    &\text{$s_i - t_j \le w_{ij}$ for $ij \in M^*$} \\
  \end{alignedat}},
\]
and replacing $\Z^{L \cup R}$ with $\R^{L \cup R}$ yields an inequality-system representation of $\conv(\argmin g)$ (see \cref{proposition:l-convex-set}).
Note that a maximum weight matching $M^*$ is usually available for free since the $t$th instance is already solved when learning $\phat_t$.
Once $M^*$ is given, we can construct the above inequality system in $\Ord(m)$ time, hence $\Tineq = \Ord(m)$.

Having seen $\Tineq$ is small enough, the dominant part in the per-round time complexity of OGD is Dijkstra's algorithm for computing a subgradient, which runs in $\Ord(m + n\log n)$ time since the above inequality system leads to graph $(\Vtl, \Etl)$ with $|\Etl| = \Ord(m)$.
Hence, OGD's per-round running time is shorter than that of solving local optimization in \cref{alg:steepest-descent} once with the $\Ord(m\sqrt{n})$-time Hopcroft--Karp algorithm.


The core idea of the above method is to utilize the ``if and only if'' condition of the complementarity theorem.
In other words, once we find an arbitrary primal (dual) optimal solution, we can capture the set of all dual (primal) optimal solutions via the complementarity condition.
This idea has been well studied in \emph{combinatorial relaxation} \citep{Murota1995-bo} and is applicable to matroid intersection and discrete energy minimization.
We below present the results; see \cref{asec:proof-mi-argmin,asec:proof-dem-argmin}, respectively, for the proofs.

\begin{restatable}{theorem}{miargmin}\label{thm:mi-argmin}
  Consider the dual problem, $\min_{p \in Z^V} g(p)$, of weighted matroid intersection defined on a ground set $V$ of size $n$.
  If a maximum weight common base is available (or the problem is already solved), we can obtain an inequality system of the form \eqref{eq:inequality-system} representing $\conv(\argmin g)$ in $\Tineq = \Ord(\tau nr)$ time, where $r$ is the rank of the matroids and $\tau$ is the running time of independence oracles.
\end{restatable}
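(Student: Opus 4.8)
The plan is to mirror the bipartite-matching argument: find a complementarity-type characterization of dual optimal solutions for weighted matroid intersection, then translate it into an inequality system of the form \eqref{eq:inequality-system}, and finally account for the running time. Recall the primal problem is to find a maximum-weight common base $B^*$ of two matroids $\Mbf_1 = (V, \Ical_1)$ and $\Mbf_2 = (V, \Ical_2)$ with weights $w \in \Z^V$; the Lagrangian/LP dual can be written so that the dual objective $g$ is an L${}^\natural$-convex function of a potential vector $p \in \Z^V$, and by the matroid intersection duality theorem a dual feasible $p$ is optimal exactly when it certifies optimality of $B^*$. The key step is to recall (or re-derive from the weight-splitting theorem of matroid intersection) that optimality of $B^*$ together with dual variables $p$ is equivalent to $B^*$ being simultaneously a maximum-weight base of $\Mbf_1$ under a modified weight $w - p$ and a maximum-weight base of $\Mbf_2$ under $p$ (or an analogous split), which in turn is equivalent, by the greedy/exchange characterization of maximum-weight bases, to a finite list of linear inequalities on $p$: for each $i \in B^*$ and each $j \notin B^*$ with $B^* - i + j \in \Ical_k$, the exchange does not improve the weight, giving an inequality of the form $p_j - p_i \le c_{ij}$ (with the constant $c_{ij}$ read off from $w$), plus box constraints coming from elements whose membership in every maximum-weight common base is forced.

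Concretely, first I would fix an arbitrary maximum-weight common base $B^*$ (available by assumption, or computable in polynomial time). Second, I would invoke the weight-splitting optimality criterion for weighted matroid intersection to express ``$(B^*, p)$ optimal'' as ``$B^*$ is greedy-optimal in $\Mbf_1$ w.r.t.\ one part of a split of $w$ and in $\Mbf_2$ w.r.t.\ the other, with the split encoded by $p$.'' Third, I would expand each greedy-optimality condition into pairwise exchange inequalities: for matroid $\Mbf_k$, for every $i \in B^*$, let $C_k(i, j)$ denote the fundamental circuit condition; for each $j$ with $B^* - i + j \in \Ical_k$ we get $\,(\text{part}_k\text{-weight of }j) \le (\text{part}_k\text{-weight of }i)$, which after substituting the $p$-dependence becomes $p_j - p_i \le \gamma_{ij}$ (or the reversed inequality, or a box constraint $\alpha_i \le p_i \le \beta_i$ when $i$ or $j$ is forced in/out), matching exactly the template \eqref{eq:inequality-system}. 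Fourth, I would check the converse: any $p$ satisfying all these inequalities makes $B^*$ greedy-optimal in both matroids under the induced split, hence $p$ is dual optimal; this uses the fact that greedy optimality of a base is \emph{certified} by all single-element exchanges, a standard matroid fact. Fifth, by \cref{proposition:l-convex-set} the same system with $\R^V$ in place of $\Z^V$ describes $\conv(\argmin g)$, completing the structural part.

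For the running time: there are $O(n)$ box constraints and, for each of the two matroids, for each $i \in B^*$ (at most $r$ choices) and each $j \in V \setminus B^*$ (at most $n$ choices) one exchange test $B^* - i + j \in \Ical_k$, costing one independence-oracle call of cost $\tau$; this is $O(\tau n r)$ calls total, and assembling the $O(nr)$ inequalities takes $O(nr)$ additional time, so $\Tineq = O(\tau n r)$ as claimed. The main obstacle I anticipate is getting the weight-split bookkeeping exactly right so that the resulting inequalities land in the restricted shape \eqref{eq:inequality-system} (only single-variable box constraints and two-variable difference constraints $p_j - p_i \le \gamma_{ij}$, with integer right-hand sides) rather than the more general knapsack-like inequalities one might naively write; this is precisely where L${}^\natural$-convexity of $\argmin g$ — guaranteed abstractly by \citep[Theorem 7.17]{Murota2003-bq} — must be matched with a concrete, oracle-efficient derivation of the coefficients, and it is the part of the argument that genuinely uses the structure of matroid intersection rather than generic discrete convex analysis.
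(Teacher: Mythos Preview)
Your proposal is correct and follows essentially the same route as the paper: fix a maximum-weight common base $B^*$, use the weight-splitting/strong-duality characterization to say that $p$ is dual optimal iff $B^*$ is simultaneously a maximum-weight base of $\Mbf_1$ under $p$ and of $\Mbf_2$ under $w-p$, expand each greedy-optimality condition via single-element exchanges into pairwise difference inequalities $p_j - p_i \le \gamma_{ij}$, and count the $\Ord(r\cdot n)$ independence-oracle calls needed to enumerate the valid exchanges. One minor point: the dual $g$ for matroid intersection is L-convex (not merely \Ln-convex), so no box constraints appear in the resulting system and your hedge about ``elements whose membership is forced'' is unnecessary---the exchange inequalities alone cut out $\argmin g$.
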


\begin{restatable}{theorem}{demargmin}\label{thm:dem-argmin}
  Consider discrete energy minimization, $\min_{p \in Z^V} g(p)$, defined on a graph with $n$ vertices and $m$ edges.
  If $\dom g \subseteq [0, W]^V$ for some $W>0$ (which is true in most computer-vision applications), we can obtain an inequality system of the form \eqref{eq:inequality-system} representing $\conv(\argmin g)$ in $\Tineq = \Ord(mn\log(n^2/m)\log(nW))$ time.
\end{restatable}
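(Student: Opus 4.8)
The plan is to reuse the combinatorial-relaxation template that underlies the bipartite-matching case and \cref{thm:mi-argmin}: obtain one optimal primal--dual pair for the instance, and then read an inequality system of the form \eqref{eq:inequality-system} directly off the complementary-slackness conditions. The starting point is that $\argmin g$ is \Ln-convex \citep[Theorem~7.17]{Murota2003-bq}, so by \cref{proposition:l-convex-set} such a system already \emph{exists}; the work is only to compute its parameters $\alpha_i,\beta_i,\gamma_{ij}$, and since \emph{any} valid system of that shape is admissible, we may freely output $+\infty$ wherever a constraint turns out non-binding.

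First I would use the standard representation of this energy class in separable form $g(p)=\sum_{i\in V}f_i(p_i)+\sum_{ij\in E}g_{ij}(p_i-p_j)$ with univariate convex (extended-real-valued) functions $f_i,g_{ij}$ and $\dom f_i\subseteq[0,W]$, which exhibits $\min_p g(p)$ as (the dual of) a minimum-cost flow problem on the underlying graph---the $p_i$ are node potentials, and the slopes and breakpoints of $f_i,g_{ij}$ become arc costs and capacities (see \citep[Chapter~9]{Murota2003-bq}). A scaling-based minimum-cost-flow algorithm then computes, in $\Ord(mn\log(n^2/m)\log(nW))$ time, an optimal $p^*\in\argmin g$ together with subgradients $u_i\in\partial f_i(p^*_i)$ and $v_{ij}\in\partial g_{ij}(p^*_i-p^*_j)$ obeying $u_k+\sum_{j:kj\in E}v_{kj}-\sum_{i:ik\in E}v_{ik}=0$ for every $k\in V$, i.e.\ a certificate of $0\in\partial g(p^*)$. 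This is the only expensive step.

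Next I would extract the system. For each $i\in V$, set $[\alpha_i,\beta_i]\coloneqq\Set*{v\in\Z}{u_i\in\partial f_i(v)}$ (an interval containing $p^*_i$, confined to $[0,W]$ because $\dom f_i\subseteq[0,W]$); for each $ij\in E$, set $[c_{ij},d_{ij}]\coloneqq\Set*{v\in\Z}{v_{ij}\in\partial g_{ij}(v)}$ (an interval containing $p^*_i-p^*_j$) and put $\gamma_{ji}\coloneqq d_{ij}$, $\gamma_{ij}\coloneqq -c_{ij}$, using $\pm\infty$ when an interval is one-sidedly unbounded. All these values are integers or $\pm\infty$ by integrality of the data. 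For correctness, note that for any $q$ the subgradient inequalities give $g(q)\ge g(p^*)+\sum_{k\in V}\bigl(u_k+\sum_{j}v_{kj}-\sum_{i}v_{ik}\bigr)(q_k-p^*_k)=g(p^*)$, and equality holds in the $i$th (resp.\ $ij$th) summand if and only if $u_i\in\partial f_i(q_i)$ (resp.\ $v_{ij}\in\partial g_{ij}(q_i-q_j)$), equivalently $q_i\in[\alpha_i,\beta_i]$ (resp.\ $q_i-q_j\in[c_{ij},d_{ij}]$). Hence $q\in\argmin g$ exactly when $q$ satisfies all the box and difference constraints, so this system equals $\argmin g$ over $\Z^V$; relaxing $\Z^V$ to $\R^V$ yields $\conv(\argmin g)$ by \cref{proposition:l-convex-set}. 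Extraction costs $\Ord(m+n)$ (plus $\Ord(\log W)$ per convex function to locate a breakpoint), so the total is dominated by the first step and $\Tineq=\Ord(mn\log(n^2/m)\log(nW))$.

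The main obstacle is the bookkeeping in the first step: pinning down the precise minimum-cost-flow encoding of this energy class---in particular how the effective domains, feasibility, and the $[0,W]^V$ box become arc capacities---confirming that a suitable scaling routine meets the $\Ord(mn\log(n^2/m)\log(nW))$ bound, and verifying that it returns not merely an optimal flow but the optimal potentials $p^*$ and the subgradient certificate $(u,v)$ used downstream. Once that primal--dual certificate is in hand, the interval extraction and the two-directional complementary-slackness check are routine and run exactly parallel to the matching and matroid-intersection cases.
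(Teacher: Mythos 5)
Your proposal is correct and follows essentially the same route as the paper: compute an optimal dual flow certificate with a scaling convex-cost-flow algorithm (the paper cites Ahuja et al.\ for exactly the $\Ord(mn\log(n^2/m)\log(nW))$ bound you need, which closes the one step you flag as open), then read off the box and difference constraints as the argmin intervals of the shifted univariate convex functions and locate their endpoints by binary search in $\Ord((n+m)\log W)$ time. The only difference is that you derive the complementary-slackness characterization of $\argmin g$ from scratch via the subgradient/flow-conservation identity, whereas the paper invokes it as a known result \citep[Theorem~3.1]{Kolmogorov2009-nj}; your derivation is a correct proof of that proposition.
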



\subsection{General Polynomial-Time Method}\label{subsec:general-ineq-method}
We then discuss L-/\Ln-convex minimization, $\min_{p \in \Z^V} g(p)$, where we only have black-box access to $g$ values.
Unlike the above cases, this setting does not enjoy useful primal-dual structures.
Still, we can construct a desired inequality system in polynomial time.
We here assume $\argmin g \cap [-C, +C]^V \neq \emptyset$, where $C>0$ is the constant used in OGD, to deal with possibly unbounded $\argmin g$.
This condition is reasonable since the best prediction, $\phat^*$, in \cref{theorem:learning} is selected from $[-C, +C]^V$.
We also assume $g$ to have a finite minimum value.
Under these assumptions, the following theorem holds (see \cref{asec:find-ineq-general} for the complete proof).

\begin{restatable}{theorem}{genargmin}\label{thm:gen-argmin}
  For general L-/\Ln-convex function minimization, $\min_{p \in Z^V} g(p)$, such that $\argmin g \cap [-C, +C]^V \neq \emptyset$ and $\min g > -\infty$, we can obtain an inequality system of a subset of $\conv(\argmin g)$ that is sufficient for the subgradient computation in $\Tineq = \Ord(n^2\log^2 C \cdot (\mathrm{EO}\cdot n^3\log^2n + n^4\log^{\Ord(1)}n))$ time, where $\mathrm{EO}$ is the time for evaluating $g$.
\end{restatable}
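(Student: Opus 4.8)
The plan is to reduce the problem to a \emph{bounded} minimizer set and then to a polynomial number of linear-maximization subproblems over it. First I would observe that, although $\conv(\argmin g)$ may be unbounded, OGD only ever evaluates $\mubar(\cdot;g)$ at iterates $\phat_t\in[-C,+C]^V$ and its regret analysis (\cref{prop:ogd}, as used in \cref{theorem:learning}) only compares against $\phat^*\in[-C,+C]^V$. Since $\argmin g\cap[-C,+C]^V\neq\emptyset$, pick $q^*$ there; for any $\phat\in[-C,+C]^V$ the triangle inequality gives $\mubar(\phat;g)\le\norm{q^*-\phat}_\infty^\pm\le 4C$, so an $\ell_\infty^\pm$-closest point $p^*\in\conv(\argmin g)$ satisfies $\norm{p^*}_\infty\le 5C$. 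Hence it suffices to produce an inequality system of $S':=\conv(\argmin g)\cap[-5C,+5C]^V$, which is nonempty and L-/\Ln-convex (intersection of L-/\Ln-convex sets), so it admits a representation of the form \eqref{eq:inequality-system} by \cref{proposition:l-convex-set}. On $[-C,+C]^V$ we have $\mubar(\cdot;S')=\mubar(\cdot;g)$, while $\mubar(\cdot;S')\ge\mubar(\cdot;g)$ everywhere; consequently a subgradient of $\mubar(\cdot;S')$ at $\phat_t\in[-C,+C]^V$ certifies $f_t(\phat_t)-f_t(\phat^*)\le\iprod{z_t}{\phat_t-\phat^*}$ for all $\phat^*\in[-C,+C]^V$, which is exactly what \cref{prop:ogd} consumes. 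Thus the shortest-path/subgradient machinery of \cref{subsec:subgradient}, applied to the inequality system of $S'$, does the job — this is the sense in which $S'$ is ``sufficient for the subgradient computation.''

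Next I would compute the tightest parameters of $S'$. Because the polyhedron $\conv(\argmin g)$ is cut out by box and difference inequalities with integral right-hand sides (\cref{proposition:l-convex-set}), it is an integral polyhedron, and so is its intersection with the integral box; hence for any integral direction $d$, $\max\{\iprod{d}{p}:p\in S'\}=\max\{\iprod{d}{p}:p\in\argmin g,\ p\in[-5C,+5C]^V\}$ and this value is an integer in $[-10C,+10C]$. Taking $d\in\{e_i,-e_i,e_j-e_i\}$ yields $\beta_i$, $\alpha_i$, and $\gamma_{ij}$ respectively, so the whole task reduces to $\Ord(n^2)$ subproblems of the form $M_d=\max\{\iprod{d}{p}:p\in\argmin g\cap[-5C,+5C]^V\}$.

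To solve one such subproblem I would first compute $\min g$ by a single L-/\Ln-convex minimization of $g+\delta_{[-5C,+5C]^V}$, which is L-/\Ln-convex with a finite minimum by the hypotheses. Then I would binary-search $M_d\in\{-10C,\dots,+10C\}$: for a threshold $v$, test whether $\min\{g(p):p\in[-5C,+5C]^V,\ \iprod{d}{p}\ge v\}=\min g$. The extra constraint $\iprod{d}{p}\ge v$ is itself a box bound or a difference bound of the form in \eqref{eq:inequality-system-z}, so the restricted feasible region is still L-/\Ln-convex and each test is one L-/\Ln-convex minimization over a box of width $\Ord(C)$ (infeasibility simply returns $+\infty\ne\min g$, correctly answering ``no''). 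Carrying out each minimization with a polynomial-time L-/\Ln-convex minimization algorithm — steepest descent (\cref{proposition:convergence-iterations}) with scaling, solving the local problems of \cref{step:local-opt} by a fast submodular-function-minimization subroutine — costs $\Ord(\log C)\cdot(\mathrm{EO}\cdot n^3\log^2 n+n^4\log^{\Ord(1)}n)$ per minimization. Multiplying the $\Ord(n^2)$ directions, the $\Ord(\log C)$ binary-search rounds, and the per-minimization cost gives $\Tineq=\Ord(n^2\log^2 C\cdot(\mathrm{EO}\cdot n^3\log^2 n+n^4\log^{\Ord(1)}n))$.

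I expect two points to require the most care. The conceptual one is the reduction in the first paragraph: $\mubar(\cdot;S')$ differs from $\mubar(\cdot;g)$ outside $[-C,+C]^V$, so one must be explicit that a subgradient of the former still drives the OGD regret bound because that bound only probes the subgradient inequality at comparators in $[-C,+C]^V$; this is where ``a subset of $\conv(\argmin g)$ that is sufficient'' is doing real work. The technical one is routing the auxiliary minimizations through the right subroutines so the exponents land exactly as stated — verifying that adding linear functions and indicators of sets of the form \eqref{eq:inequality-system-z} preserves L-/\Ln-convexity (so every subproblem is a genuine instance), using integrality of the L-/\Ln-convex polyhedra to pass between $\argmin g$ and $\conv(\argmin g)$, and matching the scaling L-/\Ln-minimization and SFM complexities to the claimed bound.
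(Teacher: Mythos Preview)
Your proposal is correct and follows essentially the same approach as the paper: restrict to a bounded box (the paper uses $[-2C,+2C]^V$ rather than your $[-5C,+5C]^V$, but this is immaterial), then binary-search each of the $\Ord(n^2)$ parameters $\alpha_i,\beta_i,\gamma_{ij}$ by testing validity via an \Ln-convex minimization solved with the scaling steepest-descent method (costing $\Ord(\log C)$ SFM calls each). Your justification that $\mubar(\cdot;S')$ agrees with $\mubar(\cdot;g)$ on $[-C,+C]^V$ and hence its subgradients drive the OGD regret bound is more explicit than the paper's, which simply asserts that the $\ell_\infty^\pm$-projection stays in the box so the outlying constraints are irrelevant; both arguments capture the same point.
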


\begin{proof}[Proof sketch]
  Note that $\argmin g$ can be written with $\Ord(n^2)$ inequalities due to \cref{proposition:l-convex-set}.
  We seek appropriate values of all the $\Ord(n^2)$ constants, $\alpha_i, \beta_i, \gamma_{ij} \in \Z$, via binary search, each of which takes $\Ord(\log C)$ iterations by the assumption of $\argmin g \cap [-C, +C]^V \neq \emptyset$.
  In each iteration, we check whether a given inequality, e.g., $p_j - p_i \le \gamma_{ij}$, is satisfied by all relevant minimizers of $g$ or not.
  Based on the steepest descent scaling algorithm \citep[Section~10.3.2]{Murota2003-bq}, we can check this by solving submodular function minimization (defined as with local optimization in \cref{step:local-opt} of \cref{alg:steepest-descent}) $\Ord(\log C)$ times.
  If we solve it with an $\Ord(\mathrm{EO}\cdot n^3\log^2n + n^4\log^{\Ord(1)}n)$-time algorithm of \citep{Lee2015-ia}, we obtain the desired time complexity.
\end{proof}


\section{Empirical Observation}
We experimentally compared our learning method with the previous methods \citep{Dinitz2021-sd,Sakaue2022-jr}, which learn prediction $\phat$ based on $\norm{p^* - \phat}_1$ and $\norm{p^* - \phat}_\infty$, respectively, defined with some optimal $p^*$; we let $p^*$ be outputs of warm-started algorithms.
We applied those learning methods to small random bipartite matching instances and compared the number of iterations of the Hungarian method (or \cref{alg:steepest-descent} for bipartite matching) warm-started with learned predictions.
Our learning method produced predictions that led to fewer iterations of the Hungarian method than those of the previous methods.
This result suggests that predictions learned by minimizing the $\ell^\pm_\infty$-distance to the set of optimal solutions can be more beneficial than those learned with some fixed optimal solutions.
In addition, our learning method converged to good predictions more quickly, implying that it empirically requires fewer sampled instances to learn good predictions.
Those observations suggest that our learning method is not merely of theoretical interest.
We present the details of the experiments in \cref{asec:experiment}.

\section{Conclusion}\label{section:conclusion}
We have presented a new warm-start-with-prediction framework for L-/\Ln-convex minimization that provides time complexity bounds proportional to $\mubar(\phat; g)$, the $\ell_\pm$-distance between a prediction $\phat$ and the set, $\conv(\argmin g)$, of optimal solutions.
Specifically, we have shown that the steepest descent method warm-started by $\phat$ takes $\Ord(\mubar(\phat; g))$ iterations and that we can learn $\phat$ to approximately minimize $\E_g[\mubar(\phat; g)]$ in polynomial time.
At a technical level, we have shown an efficient method for computing subgradients of $\mubar(\cdot; g)$ to learn $\phat$ with OGD.
Our results imply the first polynomial-time learnability of predictions that can provably warm-start algorithms regardless of the non-uniqueness of optimal solutions.
This implication would be significant progress in warm-starts with predictions because the non-uniqueness always exists in the broad class of L-/\Ln-convex minimization, as described in \cref{sec:introduction} and \cref{rem:multiple-opt}.
Studying how to learn predictions with similar guarantees for other problems will be an interesting future direction.

\subsection*{Acknowledgements}
This work was supported by JST ERATO Grant Number JPMJER1903 and JSPS KAKENHI Grant Number JP22K17853.

\bibliographystyle{abbrvnat}
\bibliography{DCAWarmStart}

\begin{thebibliography}{39}
\providecommand{\natexlab}[1]{#1}
\providecommand{\url}[1]{\texttt{#1}}
\expandafter\ifx\csname urlstyle\endcsname\relax
  \providecommand{\doi}[1]{doi: #1}\else
  \providecommand{\doi}{doi: \begingroup \urlstyle{rm}\Url}\fi

\bibitem[Agrawal et~al.(2022)Agrawal, Balkanski, Gkatzelis, Ou, and
  Tan]{Agrawal2022-cb}
P.~Agrawal, E.~Balkanski, V.~Gkatzelis, T.~Ou, and X.~Tan.
\newblock Learning-augmented mechanism design: Leveraging predictions for
  facility location.
\newblock In \emph{Proceedings of the 23rd ACM Conference on Economics and
  Computation (EC 2022)}, pages 497--528. ACM, 2022.

\bibitem[Ahuja et~al.(2003)Ahuja, Hochbaum, and Orlin]{Ahuja2003-qz}
R.~K. Ahuja, D.~S. Hochbaum, and J.~B. Orlin.
\newblock Solving the convex cost integer dual network flow problem.
\newblock \emph{Manage. Sci.}, 49\penalty0 (7):\penalty0 950--964, 2003.

\bibitem[Andoni and Beaglehole(2022)]{Andoni2022-iy}
A.~Andoni and D.~Beaglehole.
\newblock Learning to hash robustly, guaranteed.
\newblock In \emph{Proceedings of the 39th International Conference on Machine
  Learning (ICML 2022)}, volume 162, pages 599--618. PMLR, 2022.

\bibitem[Arora et~al.(2012)Arora, Dekel, and Tewari]{Arora2012-hp}
R.~Arora, O.~Dekel, and A.~Tewari.
\newblock Online bandit learning against an adaptive adversary: From regret to
  policy regret.
\newblock In \emph{Proceedings of the 29th International Coference on
  International Conference on Machine Learning (ICML 2012)}, pages 1747--1754.
  Omnipress, 2012.

\bibitem[Azar et~al.(2022)Azar, Panigrahi, and Touitou]{Azar2022-rh}
Y.~Azar, D.~Panigrahi, and N.~Touitou.
\newblock Online graph algorithms with predictions.
\newblock In \emph{Proceedings of the 2022 Annual ACM-SIAM Symposium on
  Discrete Algorithms (SODA 2022)}, pages 35--66. SIAM, 2022.

\bibitem[Bamas et~al.(2020)Bamas, Maggiori, and Svensson]{Bamas2020-rj}
E.~Bamas, A.~Maggiori, and O.~Svensson.
\newblock The primal-dual method for learning augmented algorithms.
\newblock In \emph{Advances in Neural Information Processing Systems (NeurIPS
  2020)}, volume~33, pages 20083--20094. Curran Associates, Inc., 2020.

\bibitem[Bertsekas(2016)]{Bertsekas2016-qi}
D.~P. Bertsekas.
\newblock \emph{Nonlinear Programming}.
\newblock Athena Scientific, 3rd edition, 2016.

\bibitem[Boffa et~al.(2022)Boffa, Ferragina, and Vinciguerra]{Boffa2022-sl}
A.~Boffa, P.~Ferragina, and G.~Vinciguerra.
\newblock A learned approach to design compressed rank/select data structures.
\newblock \emph{ACM Trans. Algorithms}, 18\penalty0 (3):\penalty0 1--28, 2022.

\bibitem[Cesa-Bianchi and Lugosi(2006)]{Cesa-Bianchi2006-oa}
N.~Cesa-Bianchi and G.~Lugosi.
\newblock \emph{Prediction, Learning, and Games}.
\newblock Cambridge University Press, 2006.

\bibitem[Cesa-Bianchi et~al.(2004)Cesa-Bianchi, Conconi, and
  Gentile]{Cesa-Bianchi2004-id}
N.~Cesa-Bianchi, A.~Conconi, and C.~Gentile.
\newblock On the generalization ability of on-line learning algorithms.
\newblock \emph{IEEE Trans. Inf. Theory}, 50\penalty0 (9):\penalty0 2050--2057,
  2004.

\bibitem[Chen et~al.(2022)Chen, Silwal, Vakilian, and Zhang]{Chen2022-li}
J.~Chen, S.~Silwal, A.~Vakilian, and F.~Zhang.
\newblock Faster fundamental graph algorithms via learned predictions.
\newblock In \emph{Proceedings of the 39th International Conference on Machine
  Learning (ICML 2022)}, volume 162, pages 3583--3602. PMLR, 2022.

\bibitem[Cutkosky(2019)]{Cutkosky2019-ik}
A.~Cutkosky.
\newblock Anytime online-to-batch, optimism and acceleration.
\newblock In \emph{Proceedings of the 36th International Conference on Machine
  Learning (ICML 2019)}, volume~97, pages 1446--1454. PMLR, 2019.

\bibitem[Danskin(1966)]{Danskin1966-gb}
J.~M. Danskin.
\newblock The theory of max-min, with applications.
\newblock \emph{SIAM J. Appl. Math.}, 14\penalty0 (4):\penalty0 641--664, 1966.

\bibitem[Dinitz et~al.(2021)Dinitz, Im, Lavastida, Moseley, and
  Vassilvitskii]{Dinitz2021-sd}
M.~Dinitz, S.~Im, T.~Lavastida, B.~Moseley, and S.~Vassilvitskii.
\newblock Faster matchings via learned duals.
\newblock In \emph{Advances in Neural Information Processing Systems (NeurIPS
  2021)}, volume~34, pages 10393--10406. Curran Associates, Inc., 2021.

\bibitem[Eden et~al.(2021)Eden, Indyk, Narayanan, Rubinfeld, Silwal, and
  Wagner]{Eden2021-wa}
T.~Eden, P.~Indyk, S.~Narayanan, R.~Rubinfeld, S.~Silwal, and T.~Wagner.
\newblock Learning-based support estimation in sublinear time.
\newblock In \emph{International Conference on Learning Representations (ICLR
  2021)}, 2021.

\bibitem[Edmonds(1971)]{Edmonds1971-ck}
J.~Edmonds.
\newblock Matroids and the greedy algorithm.
\newblock \emph{Math. Program.}, 1:\penalty0 127--136, 1971.

\bibitem[Ergun et~al.(2022)Ergun, Feng, Silwal, Woodruff, and
  Zhou]{Ergun2022-gq}
J.~C. Ergun, Z.~Feng, S.~Silwal, D.~Woodruff, and S.~Zhou.
\newblock Learning-augmented $k$-means clustering.
\newblock In \emph{International Conference on Learning Representations (ICLR
  2022)}, 2022.

\bibitem[Feijen and Sch{\"a}fer(2021)]{Feijen2021-mi}
W.~Feijen and G.~Sch{\"a}fer.
\newblock Using machine learning predictions to speed-up {Dijkstra}'s shortest
  path algorithm.
\newblock \emph{arXiv:2112.11927}, 2021.

\bibitem[Frank(2011)]{Frank2011-rt}
A.~Frank.
\newblock \emph{Connections in Combinatorial Optimization}.
\newblock Oxford Lecture Series in Mathematics and Its Applications. Oxford
  University Press, 2011.

\bibitem[Fujishige et~al.(2015)Fujishige, Murota, and
  Shioura]{Fujishige2015-qo}
S.~Fujishige, K.~Murota, and A.~Shioura.
\newblock Monotonicity in steepest ascent algorithms for polyhedral {L}-concave
  functions.
\newblock \emph{J. Oper. Res. Soc. Japan}, 58\penalty0 (2):\penalty0 184--208,
  2015.

\bibitem[Hazan and Kale(2012)]{Hazan2012-hc}
E.~Hazan and S.~Kale.
\newblock Online submodular minimization.
\newblock \emph{J. Mach. Learn. Res.}, 13\penalty0 (93):\penalty0 2903--2922,
  2012.

\bibitem[Khodak et~al.(2022)Khodak, Balcan, Talwalkar, and
  Vassilvitskii]{Khodak2022-sf}
M.~Khodak, M.-F. Balcan, A.~Talwalkar, and S.~Vassilvitskii.
\newblock Learning predictions for algorithms with predictions.
\newblock In \emph{Advances in Neural Information Processing Systems (NeurIPS
  2022)}, volume~35. Curran Associates, Inc., 2022.

\bibitem[Kolmogorov and Shioura(2009)]{Kolmogorov2009-nj}
V.~Kolmogorov and A.~Shioura.
\newblock New algorithms for convex cost tension problem with application to
  computer vision.
\newblock \emph{Discrete Optim.}, 6\penalty0 (4):\penalty0 378--393, 2009.

\bibitem[Lee et~al.(2015)Lee, Sidford, and Wong]{Lee2015-ia}
Y.~T. Lee, A.~Sidford, and S.~C.-W. Wong.
\newblock A faster cutting plane method and its implications for combinatorial
  and convex optimization.
\newblock In \emph{Proceedings of the 2015 IEEE 56th Annual Symposium on
  Foundations of Computer Science (FOCS 2015)}, pages 1049--1065. IEEE, 2015.

\bibitem[Lindermayr and Megow()]{Lindermayr_undated-mr}
A.~Lindermayr and N.~Megow.
\newblock {Website for Algorithms with Predictions (ALPS)}.
\newblock \url{https://algorithms-with-predictions.github.io/}.
\newblock Accessed: 2023-01-18.

\bibitem[Lu et~al.(2021)Lu, Ren, Sun, and Zhang]{Lu2021-cn}
P.~Lu, X.~Ren, E.~Sun, and Y.~Zhang.
\newblock Generalized sorting with predictions.
\newblock In \emph{Proceedings of the 4th SIAM Symposium on Simplicity in
  Algorithms (SOSA 2021)}, pages 111--117. SIAM, 2021.

\bibitem[Lykouris and Vassilvitskii(2021)]{Lykouris2021-xn}
T.~Lykouris and S.~Vassilvitskii.
\newblock Competitive caching with machine learned advice.
\newblock \emph{J. ACM}, 68\penalty0 (4):\penalty0 1--25, 2021.

\bibitem[Mitzenmacher and Vassilvitskii(2021)]{Mitzenmacher2021-bq}
M.~Mitzenmacher and S.~Vassilvitskii.
\newblock Algorithms with predictions.
\newblock In \emph{Beyond the Worst-Case Analysis of Algorithms}, pages
  646--662. Cambridge University Press, 2021.

\bibitem[Murota(1995)]{Murota1995-bo}
K.~Murota.
\newblock {Computing the degree of determinants via combinatorial relaxation}.
\newblock \emph{SIAM J. Comput.}, 24\penalty0 (4):\penalty0 765--796, 1995.

\bibitem[Murota(2003)]{Murota2003-bq}
K.~Murota.
\newblock \emph{Discrete Convex Analysis}.
\newblock Discrete Mathematics and Applications. SIAM, 2003.

\bibitem[Murota and Shioura(2014)]{Murota2014-bv}
K.~Murota and A.~Shioura.
\newblock Exact bounds for steepest descent algorithms of {L}-convex function
  minimization.
\newblock \emph{Oper. Res. Lett.}, 42\penalty0 (5):\penalty0 361--366, 2014.

\bibitem[Orabona(2020)]{Orabona2020-dx}
F.~Orabona.
\newblock \emph{A Modern Introduction to Online Learning}.
\newblock OpenBU, 2020.

\bibitem[Polak and Zub(2022)]{Polak2022-je}
A.~Polak and M.~Zub.
\newblock Learning-augmented maximum flow.
\newblock \emph{arXiv:2207.12911}, 2022.

\bibitem[Purohit et~al.(2018)Purohit, Svitkina, and Kumar]{Purohit2018-yx}
M.~Purohit, Z.~Svitkina, and R.~Kumar.
\newblock Improving online algorithms via {ML} predictions.
\newblock In \emph{Advances in Neural Information Processing Systems (NeurIPS
  2018)}, volume~31. Curran Associates, Inc., 2018.

\bibitem[Rockafellar(1970)]{Rockafellar1970-sk}
R.~T. Rockafellar.
\newblock \emph{Convex Analysis}.
\newblock Princeton University Press, 1970.

\bibitem[Sakaue and Oki(2022)]{Sakaue2022-jr}
S.~Sakaue and T.~Oki.
\newblock {Discrete-convex-analysis-based framework for warm-starting
  algorithms with predictions}.
\newblock In \emph{{Proceedings of the 36th Advances in Neural Information
  Processing Systems (NeurIPS 2022)}}, 2022.

\bibitem[Schrijver(2003)]{Schrijver2003-ol}
A.~Schrijver.
\newblock \emph{Combinatorial Optimization: Polyhedra and Efficiency}.
\newblock Springer, 2003.

\bibitem[Shioura(2017)]{Shioura2017-gf}
A.~Shioura.
\newblock Algorithms for {L}-convex function minimization: Connection between
  discrete convex analysis and other research fields.
\newblock \emph{J. Oper. Res. Soc. Japan}, 60\penalty0 (3):\penalty0 216--243,
  2017.

\bibitem[Streeter and Brendan~McMahan(2010)]{Streeter2010-bn}
M.~Streeter and H.~Brendan~McMahan.
\newblock Less regret via online conditioning.
\newblock \emph{arXiv:1002.4862}, 2010.

\end{thebibliography}

\appendix

\clearpage

\begin{center}
	{\fontsize{18pt}{0pt}\selectfont \bf Appendix}
\end{center}

\section{Details of Issues Caused by Non-uniqueness of Optimal Solutions}\label{asec:remark}
We detail the problem caused by ignoring the non-uniqueness of optimal solutions $p^*$, which arises in many problem settings, including the dual of bipartite matching and L-/\Ln-convex function minimization.
In short, the problem is a kind of dilemma:
if we fix some optimal $p^*$ independently of prediction $\phat$, the time complexity bounds depending on $\norm{p^* - \phat}$ can be poor;
if~we select optimal $p^*$ depending on prediction $\phat$ to make $\norm{p^* - \phat}$ small, we cannot use existing results on the learnability of $\phat$ due to the dependence of $p^*$ on $\phat$.
We below detail these two types of problems.

\subsection{On Fixing Optimal Solutions Independently of Predictions}
If an optimal solution is uniquely associated with each input instance independently of predictions, we can rely on the existing learnability results of predictions.
Hence, one may first think of using some tie-breaking rule to handle the non-uniqueness of optimal solutions.
Note, however, that no tie-breaking rule can lead to essentially stronger results than ours of using the minimum distance to the set of all optimal solutions.
Moreover, seemingly reasonable tie-breaking rules often result in poor bounds.
For example, consider a natural tie-braking rule that uniquely selects an optimal $p^*$ closest to some fixed point, say the origin $\zeros \in \R^V$, in the $\ell_2$-norm.
If an instance with $\conv(\argmin g) = \Set*{p \in \R^2}{ p_2 - p_1 = 100}$ is given, $p^* \in \conv(\argmin g)$ closest to $\zeros$ is $p^* = (-50, 50)$.
Then, if a given prediction is $\phat = (1, 100)$, we have $\norm{p^* - \phat}_\infty = 50$ even though $\mubar(\phat; g) = 1$.
Therefore, such a tie-breaking rule that selects $p^*$ closest to some fixed point generally results in poor prediction-dependent time complexity bounds even when $\phat$ is close to the set, $\conv(\argmin g)$, of optimal solutions.

\subsection{On Existing Results for Learning Predictions}\label{asec:existing-learnability}
Considering the above drawback of fixing $p^*$, one may want to let $p^*$ be an optimal solution close to a given prediction $\phat$.
Indeed, most experiments in the previous studies seem to be implicitly based on this kind of idea; that is, they let $p^*$ be an optimal solution returned by an algorithm warm-started by $\phat$ and learn $\phat$ to decrease loss values of the form $\norm{p^* - \phat}$.
This idea, however, makes optimal $p^* \in \conv(\argmin g)$ selected depending on a given prediction $\phat$.
We below explain why the existing theoretical results for learning predictions $\phat$ cannot deal with the dependence of $p^*$ on $
\phat$.

\paragraph*{PAC learning approach.}
A popular approach to obtaining guarantees for learning predictions is to use the PAC learning framework \citep{Dinitz2021-sd,Chen2022-li}.
With this approach, supposing input instances $g$ to be drawn i.i.d.\ from a distribution, we usually analyze the \emph{pseudo-dimension} of a function class of the form $\Set*{f_{\phat}: g \mapsto \R}{\phat\in\R^V}$.
The existing studies, however, analyzed the pseudo-dimension of a class of functions of the form $f_{\phat}(p^*) = \norm{p^* - \phat}$, ignoring the non-uniqueness of optimal $p^*$.
If we want to let $p^*$ be an optimal solution close to $\phat$, we must regard $p^*$ as a function of $\phat$ and specify how $p^*$ is uniquely computed from $\phat$ for each instance $g$; hence, the function should look like $f_{\phat}(g) = \norm{p^*(\phat, g) - \phat}$.
No existing PAC learnability results for warm-starts with predictions have discussed such a complicated dependence.

\paragraph*{Online learning approach.}
Another approach is to use online algorithms for learning predictions \citep{Khodak2022-sf,Sakaue2022-jr}.
In those studies, the $t$th loss function takes the form $f_t(\phat) = \norm{p^*_t - \phat}$, where $p^*_t$ is some optimal solution selected for the $t$th instance, and the regret is defined as $\sum_{t = 1}^T \norm{p^*_t - \phat_t} - \min_{\phat^* \in {[-C, +C]}^V} \sum_{t = 1}^T \norm{p^*_t - \phat^*}$.
If we want to let $p^*$ depend on $\phat$, we need to learn $\phat_t$ against an adversary who selects $p^*_t$ depending on $\phat_t$.
In this situation, the above regret does not make sense; one obvious issue is that there is room for achieving a small regret by choosing $\phat_t$ to make the second term large since $\phat_t$ can affect $p^*_t$.
A similar issue is discussed in \citep{Arora2012-hp}, but the problem here would be more severe since the adversary acts \emph{after} the learner.
The existing studies have not considered this situation.
Note that, although our method belongs to this category, our loss function, $\mubar(\phat; g_t)$, is designed to avoid the non-uniqueness issue.

\subsection{On Learnability Result of \texorpdfstring{\citep{Polak2022-je}}{(Polak \& Zub, 2022)}}\label{asubsec:polak}
\citet{Polak2022-je} have studied a maximum flow algorithm warm-started with predictions.
The authors have stated that their method enjoys a time complexity bound proportional to $\norm{p^* - \phat}_1$ for an optimal flow $p^*$ closest to $\phat$.
Although this means that $p^*$ depends on $\phat$, their analysis for learning $\phat$ seems insufficient for handling the dependence, as detailed below.

In \citep[Lemma~7]{Polak2022-je}, the authors present a uniform bound on the difference between empirical and expected losses.
Specifically, given $T$ instances $g_1,\dots,g_T$ drawn i.i.d.\ from a distribution $\Dcal$, the lemma says that a bound of the form
$
 \abs*{
  \frac{1}{T}\sum_{t=1}^T \norm{p^*(g_t) - p}_1 - \E_{g\sim\Dcal} \brc*{ \norm{p^*(g) - p}_1}
 }
 \le 1
$
holds for all $p \in \Z^V$ satisfying some constraints with high probability.
Then, in the proof of \citep[Theorem~4]{Polak2022-je}, the authors use Lemma~7 substituting prediction $\phat$ into $p$, where the ``for all $p \in \Z^V$'' part is justified by using the fact that prediction $\phat$ is chosen after instances $g_t$ are sampled from $\Dcal$.
This justification is correct if $p^*(g)$ is independent of $\phat$, i.e., we can uniquely define function $f_{g}(p) = \norm{p^*(g) - p}_1$ from $g$.
However, if we let $p^*(g)$ be an optimal solution closest to $\phat$, this justification is incorrect.
For a bound like Lemma~7 to hold for $p^*(g)$ depending on $\phat$, we need to derive a uniform convergence by, e.g., defining how $p^*(g)$ is computed from a pair of $(\phat, g)$ and bounding the pseudo-dimension for the computation procedure.
Considering the above, the sample complexity bound of \citep{Polak2022-je} for learning $\phat$ seems to be true only when $p^*(g)$ is fixed for each $g$ independently of $\phat$.

\section{Proof of \texorpdfstring{\Cref{lem:muismubar}}{Lemma \ref{lem:muismubar}}}\label{asec:proof-of-lem-musimubar}
\muismubar*

\begin{proof}
  Let $p \in \Z^V$.
  As discussed in \cref{subsec:subgradient}, $\mubar(p; g)$ is equal to the negative of the total weight of a shortest path in $(\Vtl, \Etl)$, which we can represent as an optimal value of the following LP (see \citep[Appendix~D]{Sakaue2022-jr}):
  \begin{align}\label{app-problem:projection-dual-lp}
    \begin{array}{lllll}
      &\displaystyle\maximize \quad &{q}_t - {q}_s &
      \\
      &\subto \quad &{q}_j - {q}_i \le \wtl_{ij}(p) & \forall ij \in \Etl,
      \\
      & & {q}_0 = 0, &
    \end{array}
  \end{align}
  where
  \begin{align*}
    \wtl_{ij}(p) =
    \begin{cases}
      \gamma_{ij} - p_j + p_i & \text{if $ij \in E$,}
      \\
      -\alpha_i + p_i & \text{if $i \in V$ and $j = 0$,}
      \\
      \beta_j - p_j & \text{if $i = 0$ and $j \in V$,}
      \\
      0 & \text{if $i = s$ or $j = t$}
    \end{cases}
  \end{align*}
  for $-\alpha_i, \beta_j, \gamma_{ij} \in \Z \cup \set{+\infty}$.
  From $p \in \Z^V$, all $\wtl_{ij}(p)$ are integers.
  Furthermore, the constraints in \eqref{app-problem:projection-dual-lp} can be written with a totally unimodular matrix.
  Therefore, the LP \eqref{app-problem:projection-dual-lp} has an integer optimal solution $q^* \in \Z^{\Vtl}$.
  Let $q^*_V \in \Z^V$ be the restriction of $q^*$ to $V = \Vtl\setminus\set*{0, s, t}$.
  Then, $p^* = p + q^*_V$ attains $\norm{p^* - p}^\pm_\infty = \mubar(p; g)$, as shown in \citep[Appendix~D]{Sakaue2022-jr}.
  Moreover, $p^*$ is an integer vector due to $p\in \Z^V$ and $q^*_V \in \Z^V$, and thus we have $p^* \in \conv(\argmin g) \cap \Z^V = \argmin g$, where the equality is known as the \emph{hole-free property} of L-/\Ln-convex sets \citep[Theorem 5.2 and Section 5.5]{Murota2003-bq}.
  From $\norm{p^* - p}^\pm_\infty = \mubar(p; g)$ and $p^* \in \argmin g$, we have $\mu(p; g) = \norm{p^* - p}^\pm_\infty$, hence $\mu(p; g) = \norm{p^* - p}^\pm_\infty = \mubar(p; g)$.
\end{proof}

\section{Proof of \texorpdfstring{\Cref{cor:sample}}{Corollary \ref{cor:sample}}}\label{asec:proof-of-cor-sample}
\sample*

\begin{proof}
  The basic proof idea is to use online-to-batch conversion \citep{Cesa-Bianchi2004-id} to convert the regret bound (\cref{theorem:learning}) into the sample complexity bound.
  Here, we use a refined variant, called anytime online-to-batch conversion \citep[Theorem~1]{Cutkosky2019-ik}, which is useful for ensuring the last iterate convergence of predictions computed for stochastic loss functions.
  We will also use this technique in the experiments in \cref{asec:experiment}.

  To use \citep[Theorem~1]{Cutkosky2019-ik}, we slightly modify the online algorithm for computing predictions.
  In each $t$th round, let $q_t = \frac{1}{t} \sum_{t^\prime = 1}^t \phat_{t^\prime}$ and compute a subgradient $z_t$ at $q_t$, i.e., $z_t \in \partial \mubar(q_t, g_t)$.
  The $t$th loss function revealed to an online learner is a linear loss function $f_t(\phat) = \iprod{z_t, \phat}$, and the learner uses an online algorithm to compute $\phat_1,\dots,\phat_T \in [-C, +C]^V$ that satisfy a regret bound of $\sum_{t=1}^T f_t(\phat_t) - \sum_{t=1}^T f_t(\phat^*) = \sum_{t = 1}^T \iprod{z_t, \phat_t - \phat^*} = \Ord(C\sqrt{nT})$ for any $\phat^* \in [-C, +C]^V$.
  Here, the learner can use OGD described in \cref{sec:regred-bound}; one can easily confirm that it enjoys the $C\sqrt{2nT}$-regret bound also for the linearized loss $f_t(\phat) = \iprod{z_t, \phat}$.
  Then, by substituting $\norm{z_t}_1 \le 2$ and $\max\Set*{\norm{p - q}_\infty}{p, q \in [-C, +C]^V} \le 2C$ into \citep[Theorem~1]{Cutkosky2019-ik}, we can show that $\phat = q_T$ satisfies the following inequality with a probability of at least $1-\delta$:
  \begin{align*}
    \E_{g \sim \Dcal} \brc*{\mubar(\phat; g_t)} - \min_{\phat^* \in {[-C, +C]}^V} \E_{g \sim \Dcal} \brc*{\mubar(\phat^*; g_t)} \le \frac{C\sqrt{2nT} + 8C\sqrt{T\log(2/\delta)}}{T}
    =
    \frac{C}{\sqrt{T}}\prn*{ \sqrt{2n} + 8\sqrt{\log\frac{2}{\delta}} }.
	\end{align*}
	Thus, the sample size of $T = \Omega\prn*{\prn*{\frac{C}{\varepsilon}}^2 \prn*{n + \log \frac{1}{\delta}}}$ is sufficient for ensuring that the right-hand side is at most $\varepsilon$.

  As for the time complexity, the per-round running time of OGD is $\Tineq + \Ord(n^2)$ by \cref{lem:subgradient} (since $p^*_t \in \argmin g_t$ is available), and this is repeated $T$ times to obtain $\phat = q_T$.
  Therefore, it takes $\Ord(T \cdot (\Tineq + n^2))$ time in total.
\end{proof}

\section{Regret Lower Bound}\label{asec:lower-bound}
We show an $\Omega(C\sqrt{nT})$ regret lower bound for online minimization of $\mubar(\phat; g_t)$ to complement \cref{theorem:learning}.
The proof idea is based on \citep[Theorem 14]{Hazan2012-hc}, which presents a regret lower bound for online submodular minimization.

For ease of analysis, we only consider a learner who selects $\phat_1,\dots,\phat_T$ from $[-C, +C]^V$.
Our OGD satisfies this condition due to the $\ell_2$-projection onto $[-C, +C]^V$; hence the lower bound implies the tightness of the $\Ord(C\sqrt{nT})$ upper bound.
Another remark is that we below obtain a lower bound by using $g_t$ such that $\argmin g_t \cap [-C, +C]^V = \emptyset$, while predictions $\phat$ are constrained to $[-C, +C]^V$.
We leave it for future work to prove a lower bound using $g_t$ with $\argmin g_t \cap [-C, +C]^V \neq \emptyset$.

\begin{theorem}
  Let $C>0$ be an integer.
  For any online leaner who plays $\phat_1,\dots,\phat_T \in [-C, +C]^V$, there is a sequence of \Ln-convex functions $g_1,\dots,g_T$ such that the learner incurs an $\Omega(C\sqrt{nT})$ regret.
\end{theorem}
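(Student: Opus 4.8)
The plan is to adapt the classical lower-bound construction for online convex (submodular) minimization: use a randomized adversary that, in each round, presents an \Ln-convex function whose induced loss $\mubar(\cdot; g_t)$ behaves, on the cube $[-C,+C]^V$, like a random linear function with $\pm 1$ coefficients scaled by $C$. Concretely, for each round $t$ I would pick a uniformly random sign vector $\sigma_t \in \{-1,+1\}^V$ and choose $g_t$ so that $\conv(\argmin g_t)$ is a translate of the ``diagonal ray'' direction placed far outside the cube in the direction $\sigma_t$; by the shortest-path characterization of \cref{subsec:subgradient} (or directly from the definition of $\mubar$ via the $\ell^\pm_\infty$-norm), the resulting loss restricted to $\phat \in [-C,+C]^V$ is an affine function of the form $a_t - \tfrac{1}{2}\sum_{i} \sigma_{t,i}\phat_i$ for a constant $a_t$ independent of $\phat$ (the factor $\tfrac12$ coming from the symmetric split of $\norm{\cdot}^\pm_\infty$ into its positive and negative parts). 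The key point is that such a $g_t$ can be realized by an \Ln-convex function using the inequality representation of \cref{proposition:l-convex-set}, so the construction is legitimate.

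First I would fix the explicit family of instances and verify the claimed affine form of $\mubar(\phat;g_t)$ on $[-C,+C]^V$ — this is the only place where \Ln-convexity and the structure of $\mubar$ enter, and it reduces to evaluating the $\ell^\pm_\infty$-distance from a point in the cube to a far-away ray, which is a short direct computation. Next, using that affine form, the regret of any learner against the random sequence becomes
\[
  \E\brc*{\sum_{t=1}^T \mubar(\phat_t;g_t)} - \E\brc*{\min_{\phat^*\in[-C,+C]^V}\sum_{t=1}^T \mubar(\phat^*;g_t)}
  = \tfrac12\,\E\brc*{\max_{\phat^*\in[-C,+C]^V} \sum_{i\in V}\phat^*_i\sum_{t=1}^T\sigma_{t,i}} - \tfrac12\,\E\brc*{\sum_{i\in V}\phat_{t,i}\sum_{t=1}^T\sigma_{t,i}},
\]
where the second expectation vanishes because each $\phat_t$ is chosen before $\sigma_t$ is revealed and $\E[\sigma_t]=0$ (the adversary moves after the learner, so this independence is exactly what we need). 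Then I would lower-bound the first term: the maximizing $\phat^*$ takes $\phat^*_i = C\cdot\sign(\sum_t \sigma_{t,i})$, giving $\tfrac{C}{2}\sum_{i\in V}\E\abs*{\sum_{t=1}^T\sigma_{t,i}}$, and by the Khintchine inequality (or a Central Limit / anti-concentration estimate for a sum of $T$ i.i.d.\ Rademacher variables) each $\E\abs*{\sum_{t=1}^T\sigma_{t,i}} = \Omega(\sqrt{T})$. Summing over the $n$ coordinates yields an $\Omega(C\sqrt{nT})$ lower bound on the expected regret, and finally, since the worst case is at least the average over the adversary's randomness, there exists a deterministic sequence $g_1,\dots,g_T$ forcing $\Omega(C\sqrt{nT})$ regret, as claimed.

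I expect the main obstacle to be the first step: exhibiting \Ln-convex functions $g_t$ whose minimizer sets are placed so that $\mubar(\cdot;g_t)$ is \emph{exactly} (not just approximately) the desired affine function on the whole cube $[-C,+C]^V$, while keeping $g_t$ genuinely \Ln-convex and proper. The natural candidate is $g_t(p) = \delta_{S_t}(p) + \langle c_t, p\rangle$ restricted so that $\argmin g_t = S_t$ equals the \Ln-convex set defined by box/difference inequalities positioning the diagonal direction at offset $\Theta(C)$ in direction $\sigma_t$; I would need to check that for every $\phat$ in the cube the closest point of $\conv(S_t)$ in the $\ell^\pm_\infty$-norm is attained on the ``near face'' and that the distance splits additively over coordinates as $\tfrac12\sum_i(\text{const} - \sigma_{t,i}\phat_i)$. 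Once that computation is pinned down, the rest is the standard Rademacher/Khintchine averaging argument. A minor subtlety worth flagging in the writeup (as the surrounding text already notes) is that this construction uses $g_t$ with $\argmin g_t \cap [-C,+C]^V = \emptyset$; matching the upper bound with a lower bound under $\argmin g_t \cap [-C,+C]^V \neq \emptyset$ is left open.
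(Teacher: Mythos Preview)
Your proposal has a genuine structural gap in the first step, precisely where you anticipated difficulty. You claim that for a suitable \Ln-convex $g_t$ one can make $\mubar(\phat;g_t)$, restricted to $[-C,+C]^V$, equal an affine function $a_t-\tfrac{1}{2}\sum_{i\in V}\sigma_{t,i}\phat_i$ depending on \emph{all} $n$ coordinates. This is impossible for the $\ell^\pm_\infty$-norm. By definition $\norm{q}_\infty^\pm=\max_i\max\{0,q_i\}+\max_i\max\{0,-q_i\}$ is a max-plus-max, not a sum over coordinates; in any region where the maximizers are fixed, it is linear in at most two coordinates of $\phat$ (one for the positive part and one for the negative part). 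Equivalently, \cref{lem:subgradient} and \eqref{eq:subgradient} show that every subgradient of $\mubar(\cdot;g)$ has at most two nonzero entries, so the gradient $-\tfrac{1}{2}\sigma_t$ with $n$ nonzeros that your affine form would require cannot arise. The ``additive split over coordinates'' you hoped for simply does not hold for this norm, and minimizing over the diagonal ray only collapses $\norm{q+\zeta\ones}_\infty^\pm$ to $\max_i q_i-\min_i q_i$, which is again max-based.

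The paper's proof works around exactly this obstacle. Instead of injecting a full random sign vector in each round, it uses a single Rademacher $\sigma_t$ per round and lets $g_t$ be the indicator of a singleton $\{p^*_t\}$ with only \emph{two} nonzero entries, $\pm 3\sigma_tC$, at positions $i(t)$ and $i(t)+n/2$ that cycle through the $n/2$ coordinate pairs as $t$ varies. On $[-C,+C]^V$ this gives $\mubar(\phat;g_t)=6C+\sigma_t(\phat_{i(t)+n/2}-\phat_{i(t)})$, so each round is affine in just two coordinates, consistent with \eqref{eq:subgradient}. The $\sqrt{n}$ factor then comes not from $n$ coordinates in one round but from accumulating $\Theta(T/n)$ Rademacher sums across $n/2$ coordinate pairs and applying Khintchine to each, yielding $C\cdot\tfrac{n}{2}\cdot\Omega(\sqrt{T/n})=\Omega(C\sqrt{nT})$. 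Your Rademacher/Khintchine averaging and the final derandomization step are fine; what needs changing is the per-round construction so that each $g_t$ touches only a pair of coordinates and the pairs are cycled over rounds.
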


\begin{proof}
Let $n = |V|$ be even and $i(t) = (t \bmod n/2) + 1 \in \set{1,\dots,n/2}$ for $t = 1,\dots,T$.
In each $t$th round, choose a Rademacher random variable $\sigma_t \in \set{-1, +1}$ independently of all other random variables.
Let $g_t$ be an indicator function such that $\dom g_t$ is a singleton, $\set{p^*_t}$, where $p^*_t \in \Z^V$ has two non-zeros:
the $i(t)$th entry is $3\sigma_tC$,
the ($i(t) + n/2$)th entry is $-3\sigma_tC$,
and the others are zero.
Since we have $\argmin g_t = \set{p^*_t}$, for any $\phat \in [-C, +C]^V$, it holds that
\[
  \mubar(\phat; g_t)
  = \norm{p^*_t - \phat}_\infty^\pm
  = \max_{i\in V}\max\set*{0, p^*_{t, i} - \phat_i} + \max_{i\in V}\max\set*{0, \phat_i - p^*_{t, i}}
  = 6C + \sigma_t(\phat_{i(t) + n/2} - \phat_{i(t)}).
\]
Thus, for any learner's choice $\phat_t \in [-C, +C]^V$, we have $\E[\mubar(\phat_t; g_t)] = 6C$, where the expectation is taken over the randomness of $\sigma_t$.
Therefore, the expected total loss of any online learner is $6CT$.

We then show that there exists $\phat^* \in [-C, +C]^V$ that has an $\Omega(C\sqrt{nT})$ advantage over the learner's expected loss, implying an $\Omega(C\sqrt{nT})$ regret lower bound.
Let $\sign(x)$ denote a function that returns $+1$ if $x > 0$, $0$ if $x = 0$, and $-1$ if $x < 0$.
Let $X_i = \sum_{t: i(t) = i} \sigma_t$ for $i = 1,\dots,n/2$.
Set the $i$th entry of $\phat^*$ to $\sign(X_i) \times C$ for $i = 1,\dots,n/2$ and $-\sign(X_i) \times C$ for $i=n/2+1,\dots,n$.
Then, in each $t$th round, the $i(t)$th entry of $p^*_t - \phat^*$ causes a loss value of
$2C$ if $\sign(X_{i(t)}) = \sigma_t$,
$3C$ if $X_{i(t)} = 0$, and
$4C$ otherwise. 
Similarly, the ($i(t)+n/2$)th entry causes a loss value of $2C$, $3C$, or $4C$.
Hence we have
\[
  \sum_{t = 1}^T \mubar(\phat^*; g_t)
  = \sum_{t = 1}^T \norm{p^*_t - \phat^*}_\infty^\pm
  = 3CT - C\sum_{i = 1}^{n/2}|X_i| + 3CT - C\sum_{i = 1}^{n/2}|X_i| = 6CT - 2C\sum_{i=1}^{n/2} |X_i|,
\]
which implies that the expected regret is at least $2C\times \E\brc*{ \sum_{i=1}^{n/2} |X_i| }$.
Since each $X_i$ is a sum of at least $\floor*{\frac{T}{n/2}}$ independent Rademacher random variables, Khintchine's inequality (see, e.g., \citep[Appendix~A.1.4]{Cesa-Bianchi2006-oa}) implies $\E\brc*{ |X_i| } \ge \sqrt{\frac{1}{2}\floor*{\frac{T}{n/2}}}$.
Thus, the expected regret is at least $2C\times \frac{n}{2} \sqrt{\frac{1}{2}\floor*{\frac{T}{n/2}}} = \Omega(C\sqrt{nT})$.
This expected lower bound implies that there is a specific choice of $\sigma_t$ values such that the learner incurs an $\Omega(C\sqrt{nT})$ regret.
\end{proof}

\section{Transformation into Non-negative Edge Weights}\label{asec:potentials}
We show how to transform the shortest path problem in \cref{subsec:subgradient} into another one with non-negative edge weights.
Once we obtain such a transformed problem, we can use Dijkstra's algorithm to find a shortest path.
The transformation is based on a so-called \emph{potential}, which has been well studied in combinatorial optimization \citep[Section~8.2]{Schrijver2003-ol}.

Recall that the vertex set is $\Vtl = V_0 \cup \set*{s, t}$, where $V_0 = \set{0} \cup V$, and the edge set is
\begin{align*}
  \Etl = E\cup\set*{\set{0}\times V} \cup \set*{V\times\set{0}} \cup \set*{\set*{s} \times V_0} \cup \set*{V_0 \times \set*{t}},
\end{align*}
where $E = \Set*{ij}{i,j\in V; i\neq j}$.
The set of all simple $s$--$t$ paths in $(\Vtl, \Etl)$ is denoted by $\Pcal \subseteq 2^{\Etl}$.
We also have the following inequality-system representation of $\conv(\argmin g)$, as in \cref{assump:inequality-system}:
\begin{align}\label{aeq:inequality-system}
  \conv(\argmin g)
  =
  \Set*{p \in \R^V}{
    \begin{alignedat}{2}
    &\text{$\alpha_i \le p_i \le \beta_i$ for $i \in V$}, \\
    &\text{$p_j - p_i \le \gamma_{ij}$ for distinct $i, j \in V$}
    \end{alignedat}
  }.
\end{align}

For any given prediction $\phat \in \R^V$, the original (possibly negative) edge weights $\wtl_{ij}(\phat)$ ($ij\in\Etl$) are defined as follows:
\begin{align*}
  \wtl_{ij}(\phat) =
  \begin{cases}
    \gamma_{ij} - \phat_j + \phat_i & \text{if $ij \in E$,}
    \\
    -\alpha_i + \phat_i & \text{if $i \in V$ and $j = 0$,}
    \\
    \beta_j - \phat_j & \text{if $i = 0$ and $j \in V$,}
    \\
    0 & \text{if $i = s$ or $j = t$.}
  \end{cases}
\end{align*}
We transform them into non-negative weights.
We call $q \in \R^{\Vtl}$ a \emph{potential} if $\wtl_{ij}(\phat) - q_j + q_i \ge 0$ holds for $ij \in \Etl$.
If we have a potential, we can define non-negative edge weights $\wtl^+_{ij}(\phat) \coloneqq \wtl_{ij}(\phat) - q_j + q_i$ for $ij \in \Etl$.
For any simple $s$--$t$ path $\stpath \in \Pcal$ in $(\Vtl, \Etl)$, the telescoping sum implies
\[
  \sum_{ij\in \stpath} \wtl^+_{ij}(\phat) = \sum_{ij\in S} (\wtl_{ij}(\phat) - q_j + q_i) = q_s - q_t + \sum_{ij \in S} \wtl_{ij}(\phat),
\]
where $q_s$ and $q_t$ are independent of the choice of $\stpath \in \Pcal$.
Hence $\stpath \in \Pcal$ is the shortest with respect to edge weights $\wtl_{ij}(\phat)$ if and only if $\stpath$ is the shortest with respect to $\wtl^+_{ij}(\phat)$.
Therefore, once a potential is given, we can obtain non-negative edge weights that do not change the set of shortest paths in $\Ord(\Etl)$ time, and we can find a shortest path with Dijkstra's algorithm in $\Ord(|\Etl| + |\Vtl|\log |\Vtl|)$ time.
We below present how to obtain a potential from an arbitrary optimal solution $p^* \in \argmin g$, which is available for free since the $t$th instance is assumed to be solved in \cref{theorem:learning}.

To simply notation, we add elements $\phat_0 = \phat_s = \phat_t = 0$ to $\phat \in \R^V$.
Also, let
$\gamma_{i0} = -\alpha_i$ for $i \in V$,
$\gamma_{0j} = \beta_j$ for $j \in V$,
$\gamma_{sj} = \phat_j$ for $j \in V_0$, and
$\gamma_{it} = -\phat_i$ for $i \in V_0$.
Then, the original edge weights $\wtl_{ij}(\phat)$ can be written as
\begin{align}\label{aeq:wtl}
  \wtl_{ij}(\phat) = \gamma_{ij} - \phat_j + \phat_i \quad \text{for $ij \in \Etl$.}
\end{align}
Since we have $p^* \in \argmin g \subseteq \conv(\argmin g)$, $p^* \in \R^V$ satisfies the inequalities in \eqref{aeq:inequality-system}.
Thus, by additionally defining
$p^*_0 = 0$,
$p^*_s = \max_{j\in V_0} (p^*_j - \phat_j)$, and
$p^*_t = \min_{i \in V_0} (p^*_i - \phat_i)$,
we have
\begin{align}\label{aeq:popt}
  p^*_j - p^*_i \le \gamma_{ij} \quad \text{for $ij \in \Etl$.}
\end{align}
Then, $q \coloneqq p^* - \phat \in \R^{\Vtl}$ is indeed a potential since we have
\begin{align*}
  \wtl_{ij} - q_j + q_i
  = \wtl_{ij} - (p^*_j - \phat_j) +  (p^*_i - \phat_i)
  \overset{\text{\eqref{aeq:wtl}}}{=} \gamma_{ij} - \phat_j + \phat_i - (p^*_j - \phat_j) +  (p^*_i - \phat_i)
  = \gamma_{ij} - p^*_j + p^*_i
  \overset{\text{\eqref{aeq:popt}}}{\ge}
  0
\end{align*}
for all $ij \in \Etl$.
By using this potential, we can obtain non-negative edge weights $\wtl^+_{ij}(\phat)$ as described above.

\section{Missing Proofs in \texorpdfstring{\Cref{sec:minimizer-set}}{Section \ref{sec:minimizer-set}}}
We detail how to obtain an inequality system of $\conv(\argmin g)$ for weighted matroid intersection (\cref{asec:proof-mi-argmin}), discrete energy minimization (\cref{asec:proof-dem-argmin}), and general L-/\Ln-convex minimization under the value-oracle model (\cref{asec:find-ineq-general}).

\subsection{Proof of \texorpdfstring{\Cref{thm:mi-argmin}}{Theorem \ref{thm:mi-argmin}}}\label{asec:proof-mi-argmin}
We discuss the weighted matroid intersection problem, a generalization of various problems such as bipartite matching and packing spanning trees.
A \emph{matroid} $\Mbf$ consists of a finite set $V$ and a non-empty set family $\Bcal \subseteq 2^V$ of \textit{bases} satisfying the following: for any $B_1, B_2 \in \Bcal$ and $i \in B_1 \setminus B_2$, there exists $j \in B_2 \setminus B_1$ such that $B_1 \setminus \set{i} \cup \set{j}$, $B_2 \setminus \set{j} \cup \set{i} \in \Bcal$.
For any $v \in \Z^V$ and $X \subseteq V$, let $v(X) = \sum_{i \in X} v_i$.
For any $v \in \Z^V$ and matroid $\mathbf{M} = (V, \mathcal{B})$, let $\Bcal^v \coloneqq \argmax_{B \in \mathcal{B}} v(B)$ and $\mathbf{M}^v \coloneqq (V, \Bcal^v)$, which also forms a matroid \citep{Edmonds1971-ck}.

Let $\Mbf_1 = (V, \Bcal_1)$ and $\Mbf_2 = (V, \Bcal_2)$ be two matroids on an identical ground set $V$ equipped with weights $w \in \Z^V$.
We assume that the rank of each matroid is at most $r$ (i.e., $\max_{B \in \Bcal_k}{|B|} \le r$ for $k = 1,2$) and that independence oracles of $\Mbf_1$ and $\Mbf_2$ run in $\tau$ time, each of which returns whether input $X \subseteq V$ is a subset of some $B \in \Bcal_k$ or not ($k = 1,2$).
The weighted matroid intersection problem asks to find $B \in \Bcal_1 \cap \Bcal_2$ that maximizes $w(B)$.
Its dual problem is written as minimization of the following L-convex function $g$ (see \citep[Section~3.2]{Sakaue2022-jr}):
\begin{align*}
	\minimize_{p \in \Z^V} \quad
	g(p) = \max_{B \in \Bcal_1} p(B) + \max_{B \in \Bcal_2} (w-p)(B).
\end{align*}
We below prove the following theorem.

\miargmin*

\begin{proof}
  Let $B^* \in \Bcal_1 \cap \Bcal_2$ be any common base that maximizes $w(B^*)$.
  By the strong duality~\citep[Theorem~13.2.4]{Frank2011-rt}, it holds that $w(B^*) = g(p^*)$ for any optimal dual solution $p^* \in \Z^V$.
  This means that $p \in \Z^V$ is optimal if and only if
  \begin{align}\label{eq:mi-complement}
    B^* \in \argmax_{B \in \mathcal{B}_1} p(B) \cap \argmax_{B \in \mathcal{B}_2} (w-p)(B).
  \end{align}
  Furthermore, an optimality condition for linear maximization on matroid bases says that for any $v\in \Z^V$ and matroid $\mathbf{M} = (V, \mathcal{B})$, $B \in \argmax_{B^\prime \in \Bcal} v(B^\prime)$ holds if and only if $v_i \ge v_j$ for all $i \in B$ and $j \in V \setminus B$ with $B \setminus \set{i} \cup \set{j} \in \mathcal{B}$ (see, e.g., \citep[Corollary~39.12b]{Schrijver2003-ol}).
  If we apply this condition to each of $\argmax_{B \in \mathcal{B}_1} p(B)$ and $\argmax_{B \in \mathcal{B}_2} (w-p)(B)$ in \eqref{eq:mi-complement}, the ``if and only if'' condition of \eqref{eq:mi-complement} implies that we can represent $\argmin g$ as follows:
  \begin{align}\label[]{eq:mi-ineq-system}
    \Set*{p \in \Z^V}{\begin{alignedat}{2}
      &\text{$p_i \ge p_j$ for $(i, j) \in E_1(B^*)$}, \\
      &\text{$w_i - p_i \ge w_j - p_j$ for $(i, j) \in E_2(B^*)$} \\
    \end{alignedat}},
  \end{align}
  where $B^*$ is an arbitrary maximum weight common base and $E_k(B^*) = \{\, (i, j) \mathrel{|} i \in B^*, j \in V \setminus B^*, B^* \setminus \set{i} \cup \set{j} \in \mathcal{B}_k\,\}$ for $k = 1, 2$.
  The same inequality system on $\R^V$ represents $\conv(\argmin g)$, as in \cref{proposition:l-convex-set}.

  By the assumption in \cref{thm:mi-argmin}, a maximum weight common base $B^*$ is available.
  Once $B^*$ is given, we can construct the inequality system \eqref{eq:mi-ineq-system} in $\Ord(\tau|B^*||V\setminus B^*|) \lesssim \Ord(\tau nr)$ time, hence $\Tineq = \Ord(\tau nr)$.
\end{proof}

We additionally show that the per-round running time of OGD is $\Ord(\tau nr + n\log n)$.
Since $|E_k| = \Ord(|B^*||V\backslash B^*|) \lesssim \Ord(nr)$ for $k=1,2$, the inequality system \eqref{eq:mi-ineq-system} yields a graph $(\Vtl, \Etl)$ such that $|\Etl| = \Ord(nr)$.
Therefore, Dijkstra's algorithm in the proof of \cref{lem:subgradient} takes $\Ord(nr + n \log n)$ time, and the per-round running time of OGD is $\Tineq + \Ord(nr + n \log n) = \Ord(\tau nr + n \log n)$.
Since solving local local optimization in \cref{alg:steepest-descent} takes $\Tloc = \Ord(\tau nr^{1.5})$ time as in \cref{table:results}, OGD's per-round running time is as short as $\Tloc$ if $\log n \lesssim \Ord(\tau r^{1.5})$.

\subsection{Proof of \texorpdfstring{\Cref{thm:dem-argmin}}{Theorem \ref{thm:dem-argmin}}}\label{asec:proof-dem-argmin}
We discuss discrete energy minimization \citep{Kolmogorov2009-nj}, which appears in computer-vision (CV) applications.
(Although \citet{Kolmogorov2009-nj} considered undirected graphs, a similar result holds for directed graphs as follows; see also \citep[Section~9]{Murota2003-bq}.)
Let $(V, A)$ be a directed graph with $|V| = n$ and $|A| = m$.
Each vertex $i \in V$ and edge $a \in A$ are associated with univariate convex functions $\phi_i:\Z \to \R\cup\set{+\infty}$ and $\psi_a: \Z \to \R\cup\set{+\infty}$, respectively, which we can evaluate in constant time.
Then, a discrete energy minimization problem is written as
\begin{align}\label{eq:prob-discrete-energy-min}
  \minimize_{p \in \Z^V} \;\; g(p) = \sum_{i \in V} \phi_i(p_i) + \sum_{a = ij \in A} \psi_a(p_j - p_i),
\end{align}
which is an \Ln-convex minimization problem and a generalization of minimum-cost flow.
We assume $\dom g \subseteq [0, W]^V$ for some $W>0$; this is usually true in CV applications since the range of pixel values is bounded.
This assumption implies that we can restrict $\dom \phi_i$ and $\dom \psi_a$ to $[0, W] \cap \Z$ and $[-W, +W] \cap \Z$, respectively.

We then introduce the dual problem of \eqref{eq:prob-discrete-energy-min}.
For a vector $\xi \in \R^A$, 
called a \emph{flow}, we define its \emph{boundary} $\partial \xi \in \R^V$ by ${(\partial \xi)}_i = \sum_{a \in \delta^+(i)} \xi_a - \sum_{a \in \delta^-(i)} \xi_a$ for $i \in V$, where $\delta^+(i)$ (resp.\ $\delta^-(i)$) is the set of edges entering to (resp.\ leaving from) $i \in V$.
We can write the dual problem of \eqref{eq:prob-discrete-energy-min} as the following convex cost flow problem:
\[
  \maximize_{\xi \in \R^A}
  \;\;
  f(\xi)
  = \sum_{i \in V} \min_{x \in \Z} \phi_i[-{(\partial \xi)}_i](x)
  + \sum_{a \in A} \min_{y \in \Z} \psi_a[-{\xi}_a](y),
\]
where, for any $u: \R \to \R \cup \set{+\infty}$ and $b \in \R$, $u[b]$ denotes a function defined by $u[b](x) = u(x) + bx$ for $x \in \R$.
The following proposition is useful to characterize the primal-dual structure.
\begin{proposition}[{\citep[Theorem~3.1]{Kolmogorov2009-nj}; cf.\ \citep[Theorem~9.4]{Murota2003-bq}}]\label{thm:energy-argmin}
  Take any flow $\xi^* \in \R^A$ maximizing $f(\xi)$.
  Let $\alpha_i, \beta_i$ be integers with $\argmin \phi_i[-{(\partial \xi^*)}_i] = [\alpha_i, \beta_i] \cap \Z$ for $i \in V$, and let $\check{\gamma}_a, \hat{\gamma}_a$ be integers with $\argmin \psi_a[-{\xi^*}_a] = [\check{\gamma}_a, \hat{\gamma}_a] \cap \Z$ for $a \in A$.
  Then, we can represent $\argmin g$ as follows:
  \begin{align}\label{eq:discrete-energy-min-ineq-system}
    \Set*{p^* \in \Z^V}{\begin{alignedat}{2}
      &\text{$\alpha_i \le p^*_i \le \beta_i$ for $i \in V$}, \\
      &\text{$\check{\gamma}_a \le p^*_j - p^*_i \le \hat{\gamma}_a$ for $a = ij \in A$} \\
    \end{alignedat}}.
  \end{align}
\end{proposition}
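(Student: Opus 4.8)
The plan is to obtain the claimed description of $\argmin g$ as the complementary-slackness condition for the conjugate pair consisting of the primal problem~\eqref{eq:prob-discrete-energy-min} and its dual convex cost flow $\max_{\xi\in\R^A}f(\xi)$. Two facts are needed. First, \emph{strong duality}: $\min_{p\in\Z^V}g(p)=\max_{\xi\in\R^A}f(\xi)$, with the primal minimum attained at an integer point. This holds because the primal minimizes a separable convex function of the tension $p\mapsto(p_j-p_i)_{a=ij}$ whose adjoint is the boundary operator $\partial$, so it is an instance of the Fenchel-type duality for separable convex tension/flow problems---a special case of discrete Fenchel duality, since $g$ is \Ln-convex (cf.\ \citep[Chapter~8]{Murota2003-bq})---and the integrality of the optimum follows from the \Ln-convexity of $g$. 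Second, \emph{the complementary-slackness identity}: for every primal-feasible $p\in\Z^V$ and every $\xi\in\R^A$, grouping $g(p)-f(\xi)$ termwise and applying Fenchel's inequality to each $\phi_i$ at $(p_i,(\partial\xi)_i)$ and to each $\psi_a$ at $(p_j-p_i,\xi_a)$, together with the summation-by-parts identity relating $\sum_{a=ij}\xi_a(p_j-p_i)$ and $\langle p,\partial\xi\rangle$, yields $g(p)\ge f(\xi)$ with equality if and only if
\[
  p_i\in\argmin\phi_i[-(\partial\xi)_i]\ \text{ for all } i\in V
  \qquad\text{and}\qquad
  p_j-p_i\in\argmin\psi_a[-\xi_a]\ \text{ for all } a=ij\in A.
\]

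Granting these, the proposition follows at once. Fix the optimal flow $\xi^*$, so that $f(\xi^*)=\max f=\min g$. By the equality condition above, a point $p^*\in\Z^V$ lies in $\argmin g$ (equivalently, $g(p^*)=f(\xi^*)$) if and only if $p^*_i\in\argmin\phi_i[-(\partial\xi^*)_i]$ for all $i$ and $p^*_j-p^*_i\in\argmin\psi_a[-\xi^*_a]$ for all $a=ij$. Substituting the hypotheses $\argmin\phi_i[-(\partial\xi^*)_i]=[\alpha_i,\beta_i]\cap\Z$ and $\argmin\psi_a[-\xi^*_a]=[\check\gamma_a,\hat\gamma_a]\cap\Z$ turns these memberships into exactly the box constraints $\alpha_i\le p^*_i\le\beta_i$ and the difference constraints $\check\gamma_a\le p^*_j-p^*_i\le\hat\gamma_a$, i.e., the system~\eqref{eq:discrete-energy-min-ineq-system}. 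One also uses here that these $\argmin$ sets are nonempty finite integer intervals: each $\phi_i,\psi_a$ is convex with domain contained in $[0,W]\cap\Z$, resp.\ $[-W,+W]\cap\Z$, so adding a linear term keeps the minimizer set a nonempty integer interval, which is exactly why $\alpha_i,\beta_i,\check\gamma_a,\hat\gamma_a$ can be chosen as finite integers.

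The main obstacle is strong duality together with integrality of the primal optimum in the discrete setting: this is the substantive input, and it is precisely the cited theorem of \citet{Kolmogorov2009-nj} (equivalently \citep[Theorem~9.4]{Murota2003-bq}), which may simply be invoked; alternatively one derives $\min g\le\max f$ from continuous Fenchel duality on $\R^V$ and then rounds a fractional optimum using the \Ln-convexity of $g$. Everything else is routine bookkeeping: one must fix the sign conventions consistently (the paper's definition of $\partial\xi$ via entering/leaving edges and of the shift operator $u[b]$) so that, in $g(p)-f(\xi)$, the linear terms cancel against the summation-by-parts term and leave only the nonnegative Fenchel gaps, whose simultaneous vanishing is the displayed membership condition.
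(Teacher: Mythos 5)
The paper does not prove this proposition at all---it is imported verbatim from \citet{Kolmogorov2009-nj} (Theorem~3.1; cf.\ \citet{Murota2003-bq}, Theorem~9.4)---so there is no in-paper argument to compare against. Your proof is the standard one underlying that cited result, and it is essentially correct: termwise Fenchel inequalities plus summation by parts give weak duality $f(\xi)\le g(p)$ with equality exactly when $p_i\in\argmin\phi_i[-(\partial\xi)_i]$ for all $i$ and $p_j-p_i\in\argmin\psi_a[-\xi_a]$ for all $a=ij$; combined with strong duality $\min g=\max f$ this turns membership in $\argmin g$ into precisely the box and difference constraints, since the hypothesis already hands you the interval form of each $\argmin$. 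You are right to isolate strong duality (with attainment of the primal minimum over $\Z^V$) as the one substantive input; treating it as a separately known theorem (discrete Fenchel duality for \Ln-convex functions) rather than re-deriving it keeps the argument from being circular, and your alternative route via continuous Fenchel duality plus integrality of $\conv(\argmin g)$ is also viable. The only loose end is the sign bookkeeping you defer: with the paper's stated conventions ($\delta^+(i)$ the edges \emph{entering} $i$, edge $a=ij$ contributing $\psi_a(p_j-p_i)$, and shifts $\phi_i[-(\partial\xi)_i]$, $\psi_a[-\xi_a]$), one gets $\sum_i(\partial\xi)_ip_i=\sum_{a=ij}\xi_a(p_j-p_i)$, so the two linear terms as literally written add rather than cancel; the cancellation holds under the orientation/sign convention actually used in \citep{Kolmogorov2009-nj}, and this is a one-line check rather than a gap in the method, but you should carry it out explicitly rather than wave at it.
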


We now prove the following theorem.

\demargmin*

\begin{proof}
  From \cref{thm:energy-argmin}, given an arbitrary optimal flow $\xi^*$, we can represent $\argmin g$ by the inequality system~\eqref{eq:discrete-energy-min-ineq-system}.
  We can find one such $\xi^*$ by using an algorithm of \citep{Ahuja2003-qz} in $\Ord(mn\log(n^2/m)\log(nW))$ time.
  Then, since we can restrict $\dom \phi_i$ and $\dom \psi_a$ to $[0, W] \cap \Z$ and $[-W, +W] \cap \Z$, respectively, we can locate values of $\alpha_i, \beta_i, \check{\gamma}_a, \hat{\gamma}_a \in \Z$ for all $i \in V$ and $a \in A$ in $\Ord((n + m)\log W)$ time via binary search (which is faster than the algorithm for computing $\xi^*$).
  Therefore, we can obtain the inequality system \eqref{eq:discrete-energy-min-ineq-system} in $\Tineq = \Ord(mn\log(n^2/m)\log(nW))$ time, and the same system on $\R^V$ represents $\conv(\argmin g)$.
\end{proof}

\subsection{Proof of \texorpdfstring{\Cref{thm:gen-argmin}}{Theorem \ref{thm:gen-argmin}}}\label{asec:find-ineq-general}
We consider L-/\Ln-convex function minimization, $\min_{p \in \Z^V} g(p)$, with a value oracle of $g$.
We prove the following theorem.

\genargmin*

\begin{proof}
Recall that prediction $\phat \in \R^V$, at which we compute a subgradient of $\mubar(\cdot; g)$, is always contained in $[-C, +C]^V$ since OGD performs the $\ell_2$-projection onto $[-C, +C]^V$.
First, we show that an inequality system of $S = \conv(\argmin g) \cap [-2C, +2C]^V$ is sufficient for computing a subgradient; that is,  we only need to obtain an inequality system of the form
\begin{align*}
  \text{$\alpha_i \le p_i \le \beta_i$ for $i \in V$}
  \quad
  \text{and}
  \quad
  \text{$p_j - p_i \le \gamma_{ij}$ for distinct $i, j \in V$}
\end{align*}
such that
\begin{align}\label{eq:ranges}
  \alpha_{i} \in [-2C, +2C] \cap \Z,
  &&
  \beta_i \in [-2C, +2C] \cap \Z,
  &&
  \text{and}
  &&
  \gamma_{ij} \in [-4C, +4C] \cap \Z.
\end{align}
As discussed in \cref{subsec:subgradient}, we can compute a subgradient of $\mubar(\phat; g)$ by finding a shortest $s$--$t$ path in $(\Vtl, \Etl)$ with weights $\wtl_{ij}(\phat)$.
This procedure is indeed equivalent to computing an $\ell^\pm_\infty$-projection of $\phat$ onto the convex hull of an \Ln-convex set, $\conv(\argmin g)$ (see \citep[Appendix D]{Sakaue2022-jr}).
Since we have $\phat \in  [-C, +C]^V$ and $\argmin g \cap [-C, +C]^V \neq \emptyset$, such an $\ell^\pm_\infty$-projection of $\phat$ never goes out of $[-2C, +2C]^V$.
Therefore, among all inequalities representing $\conv(\argmin g)$, those that do not intersect with $[-2C, +2C]^V$ can be ignored when computing a shortest path in $(\Vtl, \Etl)$.
This implies that we can compute a subgradient of $\mubar(\phat; g)$ if we have an inequality system of $S = \conv(\argmin g) \cap [-2C, +2C]^V$.

We then describe how to obtain an inequality system of $S$.
From the above discussion, we can focus on searching for appropriate values of $\alpha_i$, $\beta_i$, and $\gamma_{ij}$ satisfying \eqref{eq:ranges}.
We seek such values via binary search as follows.
Consider, for example, doing binary search on $[-4C, +4C]$ to find an appropriate $\gamma_{ij}$ value.
Given a current $\gamma_{ij}$ value, the inequality $p_j - p_i \le \gamma_{ij}$ is valid for all minimizers $p \in S$ if and only if adding a constraint $p_j - p_i \ge \gamma_{ij} + 1$ to $\min\Set*{g(p)}{{p \in [-2C, +2C]^V \cap \Z^V}}$ increases the minimum value.
We can check whether the latter is true by solving the \Ln-convex minimization problem with an effective domain restricted to $[-2C, +2C]^V \cap \Set*{p \in \Z^V}{p_j - p_i \ge \gamma_{ij} + 1}$.
By using the steepest descent scaling algorithm \citep[Section 10.3.2]{Murota2003-bq}, we can solve the problem via $\Ord(\log C)$ times submodular function minimization, each of which can be solved with an $\Ord(\mathrm{EO} \cdot n^3\log^2n + n^4\log^{\Ord(1)}n)$-time algorithm of \citep{Lee2015-ia}.
By repeating this test $\Ord(\log C)$ times, the binary search produces a tight inequality $p_j - p_i \le \gamma_{ij}$ (i.e., it defines a facet of $\conv(S)$).
Similarly, we can find $\alpha_i$ and $\beta_i$ values.
To find all $\alpha_i$, $\beta_i$, and $\gamma_{ij}$ values, we perform binary search $\Ord(n^2)$ times.
Therefore, the total computation time for obtaining the desired inequality system is $\Tineq = \Ord(n^2\log^2 C \cdot (\mathrm{EO}\cdot n^3\log^2n + n^4\log^{\Ord(1)}n))$.
\end{proof}

\section{Experiments}\label{asec:experiment}
We conducted experiments with small random bipartite matching instances generated as follows.
Let $n=10$ and $(L, R)$ be a bipartition of a vertex set $V$ such that $L = \set{1,2,\dots,5}$ and $R=\set{6,7,\dots,10}$.
First, we created edges $(i,i+|L|) \in L \times R$ for $i = 1,\dots,5$ with a weight of $1$ to ensure that there always exists at least one perfect matching.
Then, for the other pairs of $(i, j) \in L \times R$, we let $w_{ij} = i\times (j - |L|) + u$, where $u$ is a noise term drawn uniformly at random from $[-\sigma, +\sigma]\cap\Z$ for some $\sigma > 0$.
If $w_{ij} > 0$, we created edges $(i,j) \in L\times R$ with weights $w_{ij}$.
Hence, predictions should be learned to match $i \in L$ and $j \in R$ such that both $i$ and $j$ are large, while $i \in L$  and $i + |L| \in R$ should not be matched since $(i, i + |L|)$ only has an edge weight of $1$.
We thus created a dataset of $T = 1000$ such random weighted bipartite graphs.
We repeated this procedure $10$ times to obtain $10$ independent datasets, with which we calculated the mean and standard deviation of the results.
We made such datasets for various noise strengths $\sigma \in \set{1,5,10,20}$.

We consider the online learning setting where the random weighted bipartite graphs arrive sequentially for $t = 1,\dots,T$.
Each method learns predictions $\phat_t \in \R^V$ for $t = 1,\dots,T$ using OGD, where the loss function $f_t$ differs among the methods, as described shortly.
To improve the empirical performance, we used the following refined variant of OGD:
we used the anytime online-to-batch scheme \citep{Cutkosky2019-ik} for the last iterate convergence of each $t$th prediction (see \cref{asec:proof-of-cor-sample} for details) and an adaptive learning rate of $\eta_t = \frac{C\sqrt{2n}}{\sqrt{\sum_{t'=1}^t \norm{z_{t'}}_2^2}}$ \cite{Streeter2010-bn} in each $t$th iteration of OGD, where $z_t$ is the $t$th subgradient, $C = nW$, and $W$ is the largest edge weight in a dataset (as in \cref{rem:C}).
Furthermore, since OGD's performance was sensitive to the scale of learning rates, we used rescaled learning rates $\rho \times \eta_t$ for $\rho \in \set{0.01, 0.1, 1.0, 10.0}$.
We compared methods with three types of loss functions:
$f_t(\phat) = \norm{p^*_t - \phat}_1$ \citep{Dinitz2021-sd},
$f_t(\phat) = \norm{p^*_t - \phat}_\infty$ \citep{Sakaue2022-jr},
and
$f_t(\phat) = \mubar(\phat; g_t)$ (ours), where the first two used $p^*_t$ returned by \cref{alg:steepest-descent} (or the Hungarian method) warm-started by $\phat_t$.
We also used the cold-start method as a baseline, which always set $\phat_t = \zeros$.
We denote those methods by $\ell_1$, $\ell_\infty$, $\mubar$, and Cold, respectively, for short.
To convert prediction $\phat_t$ into an initial feasible solution, $\ell_1$ used the greedy algorithm in \citep{Dinitz2021-sd}, while $\ell_\infty$, $\mubar$, and Cold used the $\ell^\pm_\infty$-projection and rounding, as in \cref{theorem:dca-framework} (a specific procedure for bipartite matching is presented in \citep[Section 3.1]{Sakaue2022-jr}).

\begin{figure}[tbp]
  \centering
  \includegraphics[width=.79\columnwidth]{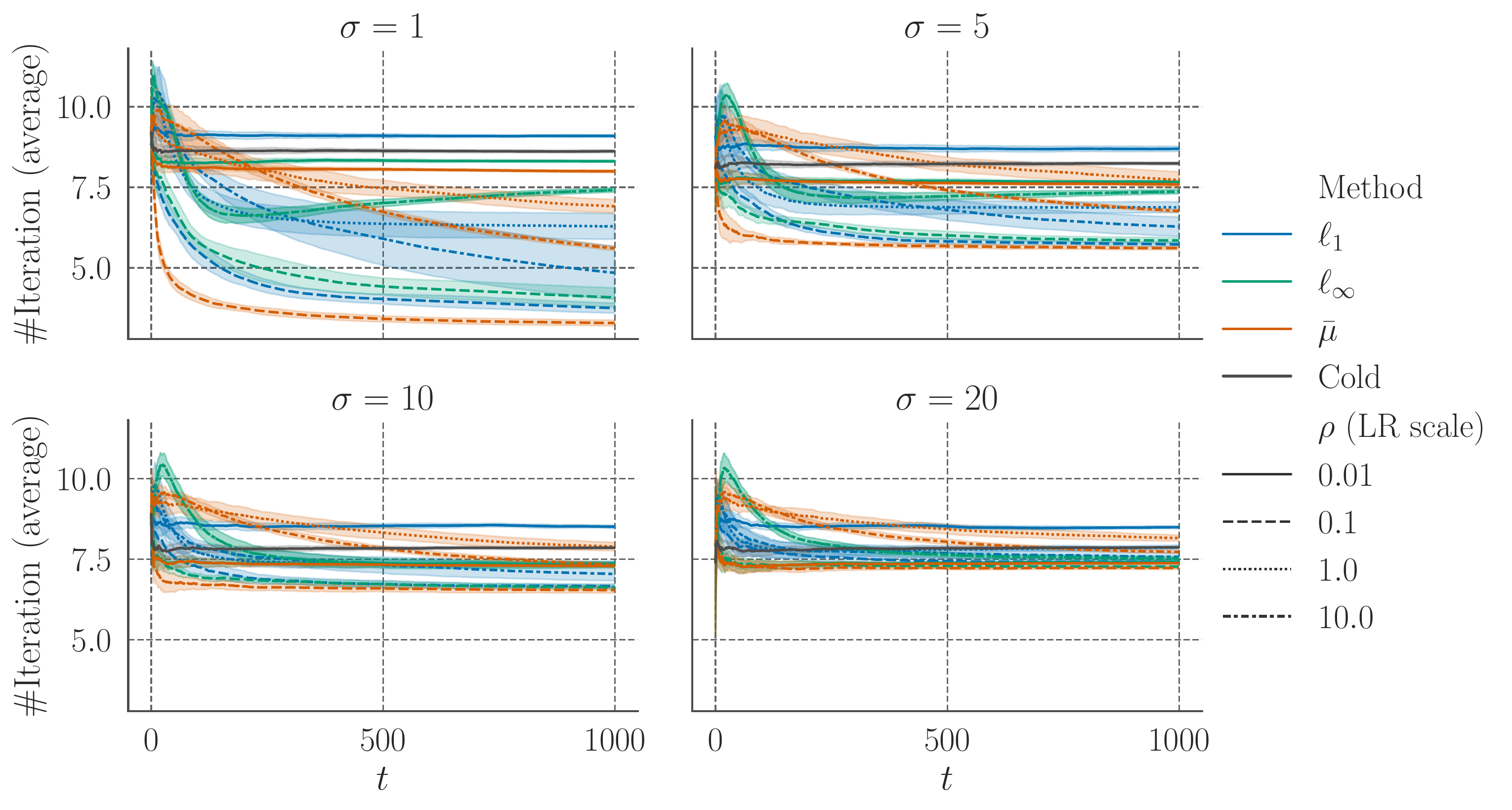}
  \caption{The average number of iterations of \cref{alg:steepest-descent} for bipartite matching warm-started with predictions learned by each method (the average is taken over the past $t$ instances).
  Here, $\sigma$ represents the noise strength and $\rho$ is the scaling factor of the learning rate (LR) of OGD. The error band indicates the standard deviation over $10$ independently random datasets.}
  \label{fig:result}
\end{figure}

\Cref{fig:result} shows the average number of iterations of \cref{alg:steepest-descent} (or the Hungarian method) warm-started with predictions $\phat_t$ learned by each method, where the average is taken over the past $t$ instances for $t=1,\dots,T$.
In the low-noise setting ($\sigma = 1$), $\ell_1$, $\ell_\infty$, and $\mubar$ with $\rho = 0.1$ significantly outperformed Cold, while their advantages became smaller as the noise strength $\sigma$ increased, as is also observed in \citep[Section~4]{Dinitz2021-sd}.
In every case, our $\mubar$ with $\rho = 0.1$ returned the best predictions, leading to the smallest number of iterations.
Moreover, $\mubar$ with $\rho = 0.1$ decreased the number of iterations more quickly than the other methods, implying that it can learn good predictions from a smaller number of sampled instances.

\end{document}